\documentclass{article}
 \newcommand{\JS}[2]{D_{\mathrm{JS}}\!\left(#1\;\middle\|\;#2\right)}
 \usepackage{lmodern}
 \newcommand{\calY}{\mathcal{Y}} 
 \usepackage[main, final]{neurips_2026}
\setcitestyle{authoryear,round,semicolon}

\usepackage[utf8]{inputenc}
\usepackage[T1]{fontenc}
\usepackage{lmodern}
\usepackage{microtype}          

\usepackage{amsmath}
\usepackage{amssymb}
\usepackage{amsfonts}
\usepackage{amsthm}
\usepackage{mathtools}          
\usepackage{nicefrac}           
\usepackage{bm}                 

\usepackage{xcolor}
\usepackage{enumitem}           
\usepackage{booktabs}           
\usepackage{multirow}
\usepackage{makecell}
\usepackage{tablefootnote}
\usepackage{adjustbox}
\usepackage{subcaption}

\usepackage{graphicx}
\graphicspath{{./figures/}}

\usepackage{algorithm}
\usepackage{algorithmic}
 
\makeatletter
\renewcommand{\fnum@algorithm}{\textbf{Algorithm~\thealgorithm}}
\makeatother
 
\usepackage{url}
\usepackage{hyperref}
\hypersetup{
    colorlinks = true,
    linkcolor  = blue!60!black,
    citecolor  = blue!60!black,
    urlcolor   = blue!60!black,
}

\usepackage{thmtools}
\usepackage[most]{tcolorbox}
 
\definecolor{thmcream}{RGB}{253, 250, 240}
\definecolor{thmgold} {RGB}{189, 155,  96}
 
\tcbset{
  thmbox/.style={
    enhanced,
    breakable,
    colback    = thmcream,
    colframe   = thmgold,
    boxrule    = 0.6pt,
    arc        = 3pt,
    left=8pt, right=8pt,
    top=6pt,  bottom=6pt,
    fonttitle  = \bfseries,
    separator sign none,
    description font = \normalfont,
  }
}
 
\newtcbtheorem[number within=section]{theorem}  {Theorem}  {thmbox}{thm}
\newtcbtheorem[number within=section]{corollary}{Corollary}{thmbox}{cor}
\newtcbtheorem[number within=section]{definition}{Definition}{thmbox}{def}
 
\newtheorem{lemma}      {Lemma}     [section]
\newtheorem{assumption} {Assumption}[section]
\newtheorem{proposition}{Proposition}[section]
\newtheorem{remark}     {Remark}    [section]

\newcommand{\E}  {\mathbb{E}}        
\newcommand{\R}  {\mathbb{R}}        
\newcommand{\bP} {\mathbb{P}}        
 
\newcommand{\KL}[2]{D_{\mathrm{KL}}\!\left(#1\;\middle\|\;#2\right)}
\newcommand{\TV}[2]{D_{\mathrm{TV}}\!\left(#1\;\middle\|\;#2\right)}

\newcommand{\bx}{\mathbf{x}}    
\newcommand{\bu}{\mathbf{u}}    
\newcommand{\bv}{\mathbf{v}}    
\newcommand{\bw}{\mathbf{w}}    
\newcommand{\bg}{\mathbf{g}}    
\newcommand{\bz}{\mathbf{z}}    
 
\newcommand{\bV}{\mathbf{V}}    
 
\newcommand{\bpi}   {\boldsymbol{\pi}}      
\newcommand{\bmu}   {\boldsymbol{\mu}}      
\newcommand{\bSigma}{\boldsymbol{\Sigma}}   
\newcommand{\bsigma}{\boldsymbol{\sigma}}   
\newcommand{\balpha}{\boldsymbol{\alpha}}   
\newcommand{\bPhi}  {\boldsymbol{\Phi}}     
 
\newcommand{\calD}{\mathcal{D}}   
\newcommand{\calL}{\mathcal{L}}   
\newcommand{\calS}{\mathcal{S}}   
\newcommand{\calC}{\mathcal{C}}   

\newcommand{\effsize}{S_{\mathrm{eff}}^{(c)}}

\newcommand{\ripple}     {\textsc{Ripple}}
\newcommand{\rippleStrat}{\textsc{Ripple-Strat}}
\newcommand{\rippleUnif} {\textsc{Ripple-Unif}}
\newcommand{\rippleHDB}  {\textsc{Ripple-HDB}}
\newcommand{\hdbscan}    {\textsc{Hdbscan}}
 
\newcommand{\ifca}  {\textsc{Ifca}}
\newcommand{\fesem} {\textsc{FeSEM}}
\newcommand{\cfl}   {\textsc{Cfl}}
 
\newcommand{\fedavg}    {\textsc{FedAvg}}
\newcommand{\feddyn}    {\textsc{FedDyn}}
\newcommand{\scaffold}  {\textsc{Scaffold}}
\newcommand{\fedadam}   {\textsc{FedAdam}}
\newcommand{\fedyogi}   {\textsc{FedYogi}}
\newcommand{\fedadagrad}{\textsc{FedAdagrad}}
 
\newcommand{\perfedavg}{\textsc{Per-FedAvg}}
\newcommand{\pfedme}   {\textsc{pFedMe}}

\newcommand{\eg}{\emph{e.g.}}

\newcommand{\parheader}[1]{\medskip\noindent\textbf{#1.}\;}

\usepackage{cleveref}
\newcommand{\bO}{\mathbf{O}}    
\newcommand{\bI}{\mathbf{I}}    
\crefname{theorem}    {Theorem}    {Theorems}
\Crefname{theorem}    {Theorem}    {Theorems}
\crefname{corollary}  {Corollary}  {Corollaries}
\Crefname{corollary}  {Corollary}  {Corollaries}
\crefname{definition} {Definition} {Definitions}
\Crefname{definition} {Definition} {Definitions}
\crefname{lemma}      {Lemma}      {Lemmas}
\Crefname{lemma}      {Lemma}      {Lemmas}
\crefname{assumption} {Assumption} {Assumptions}
\Crefname{assumption} {Assumption} {Assumptions}
\crefname{proposition}{Proposition}{Propositions}
\Crefname{proposition}{Proposition}{Propositions}
\crefname{remark}     {Remark}     {Remarks}
\Crefname{remark}     {Remark}     {Remarks}
\crefname{algorithm}  {Algorithm}  {Algorithms}
\Crefname{algorithm}  {Algorithm}  {Algorithms}
\crefname{section}    {Section}    {Sections}
\Crefname{section}    {Section}    {Sections}
\crefname{appendix}   {Appendix}   {Appendices}
\Crefname{appendix}   {Appendix}   {Appendices}
\crefname{figure}     {Figure}     {Figures}
\Crefname{figure}     {Figure}     {Figures}
\crefname{table}      {Table}      {Tables}
\Crefname{table}      {Table}      {Tables}
\crefname{equation}   {Eq.}        {Eqs.}
\Crefname{equation}   {Eq.}        {Eqs.}
\title{\ripple{} in Still Water: Zero-Shot Clustering in Federated Learning\\
  with Wavelet Scattering Transform}

\author{
  Alessandro Licciardi \\
  Department of Mathematical Sciences\\
  Politecnico di Torino, Turin, Italy\\
  \texttt{alessandro.licciardi@polito.it}
}

\begin{document}

\maketitle

\begin{abstract}
Clustered Federated Learning (FL) partitions a client population into groups 
of similar local distributions and trains one specialized model per 
cluster, mitigating client drift that degrades single-model methods 
under non-IID data. Prior methods discover cluster structure inside the 
training loop through gradient similarity, loss evaluation, or 
EM-style updates, thus increasing communication overhead, exposing 
gradients to inversion attacks, and providing no mechanism to assign 
clients absent from training. We propose \ripple{}, a clustered FL 
framework in which cluster assignment is computed entirely offline from 
a spectral characterization of each client's local data: a 
variance-weighted principal-component prototype embedded via the Wavelet 
Scattering Transform and decoded by a Gaussian Mixture VAE trained 
server-side on synthetic client populations before federation begins. 
Per-round communication cost matches \fedavg{} exactly, and a client 
absent from training obtains a personalized model from a single forward 
pass, without gradient computation, model evaluation, or extra 
communication round. We prove that the gap between \ripple{}'s surrogate 
clustered objective and the oracle is bounded by a computable quantity 
decaying with client sample size and independent of federation duration; 
per-cluster convergence matches the minimax-optimal rate for non-convex 
smooth objectives. Across five benchmarks spanning controlled and realistic heterogeneity, 
\ripple{} consistently outperforms all baselines, with margins growing 
on the most realistic partitions.

\end{abstract}
\section{Introduction}
\label{sec:intro}

\begin{flushright}
\begin{minipage}{0.55\columnwidth}
\itshape\small
Ripple in still water,\\
when there is no pebble tossed,\\
nor wind to blow.\\[2pt]
\upshape\scriptsize\hfill --- R. Hunter \& J. Garcia (1970)
\end{minipage}
\end{flushright}
\vspace{4pt}

\noindent The latent clustered 
structure of a federation need not be stirred into being by training, 
but already ripples through the spectral geometry of its data.

Federated Learning (FL)  \citep{mcmahan2017communication,kairouz2021advances}
trains shared models without pooling private data, but its canonical
algorithm \fedavg{} \citep{mcmahan2017communication} silently assumes
that all clients sample from the same distribution.
Under the statistical heterogeneity that is ubiquitous in real
deployments \citep{zhao2018federated,hsieh2020non}, gradient updates from
clients with structurally distinct data point in conflicting directions, which \citet{karimireddy2020scaffold} called \textit{client drift},degrading every
client's accuracy at once.
Variance-reduction and adaptive
methods \citep{karimireddy2020scaffold,acar2021federated,reddi2021adaptive,li2020federated}
attenuate the symptom but do not address the cause: a single global model
cannot represent clients whose data lie in qualitatively different regimes.

Clustered FL addresses the cause directly, training one
specialized model per group of compatible
clients \citep{sattler2020clustered,ghosh2020efficient,long2023multi,fenoglio2025flux}.
Yet every prior method discovers cluster structure \emph{inside} the
training loop, e.g., through gradient cosine
similarity \citep{sattler2020clustered}, repeated loss
evaluation \citep{ghosh2020efficient}, or EM-style gradient
statistics \citep{long2023multi}. These methods accumulate communication overhead
with each round, exposing updates to gradient-inversion
attacks \citep{zhang2022fldetector}, and offering no pathway to assign a
client that was not part of training. Moreover, while prior methods such as \ifca{} and \fesem{} require the number of clusters to be specified a priori, \ripple{} needs only a loose upper bound, and \rippleHDB{} then automatically adapts the cluster count via density-based clustering on the posterior simplex.

We propose a different route, motivated by recent work showing that the
Wavelet Scattering Transform
(WST) \citep{mallat2012group,bruna2013invariant} can identify anomalous
clients in a federation from spectral signatures of their local
data \citep{licciardi2026waffle}, with no federated communication
whatsoever. The same spectral characterization extends naturally to
distinguishing \emph{semantically distinct} client populations: framing
the problem as probabilistic inference over a simplex-valued mixture
model yields a soft cluster-weight vector for each client from a single
forward pass, computed from local data statistics alone. This is the
central design principle of \ripple{}: cluster assignment is a property
of the data, not an artifact of training. The assignment is fixed before
the first communication round and never revised; per-round cost matches
\fedavg{} exactly; and a client absent from training receives a
personalized model by transmitting nothing but a small spectral
embedding of its own data.
\vspace{-1em}
\paragraph{Contributions.} We summarize the contributions as follows.
\begin{itemize}[leftmargin=*,itemsep=2pt,topsep=2pt]
    \item We reframe clustered FL routing as an offline statistic
    computed from local data, eliminating per-round communication 
    overhead and fixing the routing decision before the first federation 
    round.
    \item We prove a surrogate-fidelity certificate: the gap between
    \ripple{}'s clustered objective and the oracle is bounded by a
    computable quantity that decays with client sample size and is
    independent of the number of rounds.
    \item A client absent from training obtains a personalized model
    from a single WST embedding and one encoder forward pass, without any
    gradient computation or model evaluation required.
    \item \ripple{} outperforms all baselines across five benchmarks,
    with the largest margins on the most realistic
    partitions \citep{venkateswara2017deep,weyand2020google}; zero-shot
    assignment matches or exceeds other clustered FL methods.
\end{itemize}
\section{Related Work}
\label{sec:related}
\vspace{-1em}
\parheader{Federated learning under data heterogeneity}
Non-IID client data has been central to FL research since
\citet{zhao2018federated} quantified how label skew degrades
\fedavg{} \citep{mcmahan2017communication} convergence.
Variance-reduction methods address client drift directly:
\scaffold{} \citep{karimireddy2020scaffold} corrects local-SGD bias via
control variates, while \feddyn{} \citep{acar2021federated} adds a dynamic
regularization term penalizing divergence from the global model.
Adaptive-optimizer variants \citep{reddi2021adaptive} improve convergence
under heterogeneous gradients; FedProx \citep{li2020federated} constrains
local updates via a proximal term.
All these methods pursue a \emph{single} global model, which cannot fully
resolve clients whose data lie in fundamentally distinct semantic
regimes \citep{hsieh2020non}.
Personalized FL sidesteps this by training per-client models via
meta-learning (\perfedavg{} \citep{fallah2020personalized},
FedALA \citep{zhang2023fedala}), Moreau-envelope
regularization (\pfedme{} \citep{t2020personalized}), or shared
representations \citep{collins2021exploiting}; yet when clients share
significant within-group structure, per-client models discard exploitable
inter-client information that clustered training would leverage.

\parheader{Clustered federated learning}
Clustering has proven to be an effective strategy in FL for handling client heterogeneity and improving personalization \citep{huang2022collaboration, duan2021fedgroup, briggs2020federated, Caldarola_2021_CVPR, ye2023personalized}.
\citet{sattler2020clustered} group clients by cosine gradient similarity,
but the splitting threshold is sensitive and prone to over- or
under-fragmentation under strong non-IID.
\ifca{} \citep{ghosh2020efficient} alternates hard cluster assignment with
gradient descent, requiring each client to evaluate all $C$ models per
round.
\fesem{} \citep{long2023multi} uses EM-style soft assignments but still
iterates gradient updates from every candidate center.
Graph- and hierarchy-based methods \citep{ye2023personalized,
huang2022collaboration, briggs2020federated, duan2021fedgroup} discover
collaboration structure adaptively yet tie cluster inference to in-training
gradient statistics.
FedGWC \citep{licciardi2025interaction} and
FLUX \citep{fenoglio2025flux} refine aggregation with interaction-aware
weighting and data descriptors, respectively, the first by modeling interaction between clients' empirical losses, while the latter infer cluster structure from gradient feedback exchanged \emph{during} training.

All prior methods share three structural limitations.
\emph{(i) Hard or in-training assignment}: membership is either
binary \citep{sattler2020clustered, ghosh2020efficient} or requires ongoing
gradient communication \citep{long2023multi, licciardi2025interaction,
fenoglio2025flux}, coupling cluster discovery to the training loop.
\emph{(ii) No zero-shot capability}: a new client must participate in at
least one communication round or evaluate all $C$ models locally to obtain
an assignment.
\emph{(iii) Communication overhead above \fedavg{}}: discovering cluster
structure consumes extra bandwidth every round.
\ripple{} overcomes these limitations, as cluster weights $\bw_k \in \Delta^{C-1}$ are
inferred \emph{once}, before any federated round, from a frozen GMM-VAE
requiring no gradient exchange; per-round communication matches \fedavg{}
exactly; and a new client is assigned via a single forward pass of the
frozen encoder.


\section{Background}
\label{sec:background}

In cross-silo FL, a central server coordinates $K$ clients,
indexed $k \in [K] \coloneqq \{1,\dots,K\}$.
Each client $k$ holds a private local dataset
$\calD_k = \{(\bx_i, y_i)\}_{i=1}^{n_k}$ of $n_k$ labeled samples
drawn i.i.d. from a local distribution $\bP_k$ on
$\R^D\times\calY$, never shared
directly \citep{kairouz2021advances,bonawitz2019towards}.
The pre-federation pipeline operates on features $\bx_i$ alone and
requires no labels, which enter only through the loss
$\ell(\theta;\bx_i,y_i)$ during federated training.
Let $p_k = n_k / \sum_{j=1}^K n_j$ denote the normalized data weight
of client $k$. The canonical algorithm,
\fedavg{} \citep{mcmahan2017communication}, minimizes a weighted
empirical risk over a single global parameter vector
$\theta \in \R^{|\theta|}$:
\vspace{-.5em}
\begin{equation}
    \min_{\theta}\;\calL(\theta)
    \;:=\;
    \sum_{k=1}^{K} p_k\, F_k(\theta),
    \qquad
    F_k(\theta)
    \;=\; \frac{1}{n_k}\sum_{i=1}^{n_k} \ell(\theta;\bx_i,y_i),
    \label{eq:fedavg_obj}
\end{equation}
where $\ell:\R^{|\theta|}\times\R^D\times\calY\to\R$ is a
differentiable loss. Each round, the server broadcasts the current
model to a sampled client subset; each client runs $E$ steps of local
SGD from that initialization, and the server aggregates updates via a
data-weighted average. A full probabilistic derivation of this
objective and its clustered extension is given in
Appendix \ref{app:background:prob}.

\subsection{A Probabilistic Model of Client Heterogeneity}
\label{sec:bg:prob}

\vspace{-.5em}
We depart from the binary cluster-membership assumption and model each
client's data distribution as a \emph{convex mixture} of $C$
underlying semantic distributions.

\begin{definition}{Client Mixture Distribution}{client_mixture}
Let $\{\bP_1,\dots,\bP_C\}$ be $C$ semantic distributions on
$\R^D\times\calY$. Each client $k \in [K]$ observes its local data
i.i.d.\ from a latent mixture distribution:
\begin{equation}
    \bP_k^*
    \;=\; \sum_{c=1}^{C} \pi_{k,c}^*\,\bP_c,
    \label{eq:client_mixture}
\end{equation}
where the \emph{true cluster-weight vector}
$\bpi_k^* = (\pi_{k,1}^*,\dots,\pi_{k,C}^*)^\top \in
\Delta^{C-1} = \bigl\{\bpi \in \R_{\geq 0}^C :
\textstyle\sum_c \pi_c = 1\bigr\}$.
\end{definition}

\noindent
If $\{\bpi_k^*\}_{k=1}^K$ were known, the theoretically optimal
strategy minimizes the \emph{oracle clustered objective}:
\begin{equation}
    \calL_{\mathrm{true}}(\Theta)
    \;=\; \sum_{k=1}^{K} p_k \sum_{c=1}^{C} \pi_{k,c}^*\,
    F_k(\theta_c),
    \label{eq:oracle_obj}
\end{equation}
over cluster models $\Theta = \{\theta_1,\dots,\theta_C\}$.

\begin{remark}[Generality of the mixture model]
\label{rem:generality}
The oracle objective \eqref{eq:oracle_obj} subsumes classical FL as
special cases: \fedavg{} is recovered when $C=1$; fully personalized
FL corresponds to $C=K$ with $\pi_{k,k}^*=1$. The regime $1 < C < K$
with genuinely fractional $\bpi_k^*$ strictly generalizes prior
clustered FL \citep{ghosh2020efficient,sattler2020clustered,
long2023multi,licciardi2025interaction}, all of which restrict
$\bpi_k^* \in \{0,1\}^C$.
\end{remark}

\paragraph{The cluster assignment problem.}
In practice, $\bpi_k^*$ is entirely latent. Inferring client
similarities via gradient or parameter exchange incurs
$O(K^2|\theta|)$ communication overhead and creates a privacy
surface, since gradient updates can leak local data
information \citep{bonawitz2017practical,kairouz2021advances}. Our
goal is instead to estimate a \emph{surrogate weight vector}
$\bw_k \approx \bpi_k^*$ from each client's \emph{local,
pre-federation statistics alone}, and substitute it into the training
objective:
\begin{equation}
    \calL_{\mathrm{surrogate}}(\Theta)
    \;=\; \sum_{k=1}^{K} p_k \sum_{c=1}^{C} w_{k,c}\, F_k(\theta_c).
    \label{eq:surrogate_obj}
\end{equation}
As we show in Theorem \ref{thm:risk_bound}, controlling the $\ell_1$
discrepancy $\|\bw_k - \bpi_k^*\|_1$ uniformly over all clients is
both necessary and sufficient to bound the gap between
$\calL_{\mathrm{surrogate}}$ and $\calL_{\mathrm{true}}$.

\begin{remark}[Zero-shot cluster assignment]
\label{rem:routing}
We use \emph{cluster assignment} to refer to the map
$k \mapsto \bw_k \in \Delta^{C-1}$, generalizing hard partitioning
to soft probabilistic allocations. When computed before any
federation round using only local pre-federation statistics, without any gradient exchange or cluster-model evaluations.  We call this procedure
\emph{zero-shot cluster assignment}, borrowing the convention
from \citet{bengio2017deep}.
\end{remark}

\subsection{Wavelet Scattering Transform}
\label{sec:bg:wst}

The Wavelet Scattering Transform
(WST) \citep{mallat2012group,bruna2013invariant} builds a
parameter-free signal representation by cascading wavelet convolutions
with pointwise modulus nonlinearities, followed by lowpass averaging.
The resulting second-order representation $\bPhi(\bx)\in\R^P$
satisfies three properties that underpin our theoretical guarantees:
it is \emph{non-expansive} ($\|S_J\bx - S_J\bx'\|_2 \leq
\|\bx-\bx'\|_2$), \emph{translation-invariant} at
scale $2^J$, and \emph{Lipschitz-stable to diffeomorphisms} with
constant $C_2=2\|\nabla\psi\|_1$. Together these properties ensure that two
clients whose distributions differ only through sensor
miscalibrations, object translations, or mild domain shifts produce
representations $\bPhi_k$ at a distance proportional to the
magnitude of those perturbations, thus making $\bPhi$ a stable,
label-free fingerprint for pre-federation cluster assignment. The
full construction, formal proofs, and a sensitivity study are given
in Appendix \ref{app:background:wst}.

We refer to Appendix \ref{app:background} for a self-contained
probabilistic derivation of the federated and clustered objectives
from first principles, and for the full WST construction and
stability properties that underpin
Theorems \ref{thm:embed_stability}--\ref{thm:risk_bound}.
\section{The \ripple{} Algorithm}
\label{sec:method}

\ripple{} solves the cluster assignment problem zero-shot: before any
federation round, each client transmits a single deformation-stable
summary of its local distribution, and the server maps it to a soft
cluster-weight vector $\bw_k\in\Delta^{C-1}$. This supports clients
with genuinely mixed data modalities and generalizes to unseen clients
via a single encoder forward pass, following one architectural choice:
a Gaussian Mixture VAE trained server-side on synthetic client
populations before federation begins.

\subsection{Pre-Federation Distribution Characterization}
\label{sec:method:msdl}

\paragraph{Spectral prototype.}
Let $\hat{\bmu}_k$ and $\hat{\bSigma}_k$ be the empirical mean and
covariance of each client's data distribution, with eigendecomposition
$\hat{\bSigma}_k = \sum_j \hat{\lambda}_j \hat{\bv}_j\hat{\bv}_j^\top$.
We summarize each client by the variance-weighted principal-component
prototype:
\begin{equation}
    \hat{\bx}_k
    \;\coloneqq\;
    \hat{\bmu}_k
    \;+\;
    \sum_{j=1}^{M}
    \frac{\hat{\lambda}_j}{\sum_{l=1}^{M}\hat{\lambda}_l}\,\hat{\bv}_j,
    \label{eq:method:prototype}
\end{equation}
where $M$ is selected by the explained-variance criterion
$M^\star = \min\{M : \sum_{j\leq M}\hat\lambda_j /
\sum_{j\leq D}\hat\lambda_j \geq \rho\}$, where $\rho\in[0.90,0.99]$ is the explained variance threshold.

\paragraph{Wavelet Scattering embedding.}
The spectral prototype remains sensitive to domain shifts endemic to
non-IID federated data: translations, sensor miscalibrations, or
affine skews can produce very different prototypes for semantically
identical distributions. We resolve this by mapping $\hat{\bx}_k$
through the Wavelet Scattering Transform
(Appendix~\ref{app:background:wst}):
\begin{equation}
    \bPhi_k \;\coloneqq\; S_J(\hat{\bx}_k) \;\in\; \R^P.
    \label{eq:method:wst}
\end{equation}
WST stability properties guarantee that $\|\bPhi_k - \bPhi(\bx_k^*)\|_2$
decomposes additively into a sample-noise term and a domain-shift term,
both made precise in Theorem~\ref{thm:embed_stability}. We use $J=3$
scales and $L=8$ orientations per octave
(Kymatio~\citep{andreux2020kymatio}). Each client transmits only
$\bPhi_k$ to the server, once, before any federation round.

\paragraph{Cluster-weight inference.}
Given $\bPhi_k$, a GMM-VAE maps it to $\Delta^{C-1}$: its mixture
prior induces a structured latent space whose categorical posterior
directly parameterizes the simplex~\citep{wang2024revisiting}, with
softmax classifier $q_\phi(c\mid\bPhi)=\mathrm{softmax}(f_\phi(\bPhi))$
and cluster-conditional Gaussian encoder
$q_\phi(\bz\mid\bPhi,c)=\mathcal{N}(\bmu_\phi,\mathrm{diag}
(\bsigma_\phi^2))$. The cluster-weight estimate is:
\begin{equation}
    \bw_k \;\coloneqq\; q_\phi(\,\cdot\,|\,\bPhi_k)
    \;\in\; \Delta^{C-1}.
    \label{eq:method:wk}
\end{equation}
Training minimizes the GMM-VAE ELBO augmented with a supervised
alignment term that prevents posterior collapse
(Appendix~\ref{app:background}):
\begin{equation}
    \calL_{\mathrm{total}}
    \;=\;
    \calL_{\mathrm{ELBO}}
    \;-\;
    \lambda_{\mathrm{sup}}\,
    \KL{q_\phi(c\mid\bPhi)}{\mathbf{y}_{\mathrm{target}}}.
    \label{eq:method:total}
\end{equation}
At inference, $\bw_k$ is computed in a single forward pass through
the frozen $q_{\phi_\star}$.

\paragraph{Server-Side VAE Offline Training}

The GMM-VAE is trained before federation on synthetic clients generated
from a small auxiliary dataset $\calD_{\mathrm{aux}}$~\citep{sattler2021fedaux,
lin2020ensemble}. At each step, we sample
$\alpha^{(t)}\sim\mathrm{Uniform}(\alpha_{\min},\alpha_{\max})$,
draw a target mixture
$\mathbf{y}^{(t)}\sim\mathrm{Dirichlet}(\alpha^{(t)}\cdot\mathbf{1}_C)$,
and subsample $\calD_{\mathrm{syn}}^{(t)}\subset\calD_{\mathrm{aux}}$
with matching class proportions. By
Proposition~\ref{prop:coverage}, this exposes the VAE to the full
open simplex $\mathrm{int}(\Delta^{C-1})$. The full procedure is in
Algorithm~\ref{alg:ripple} (Appendix~\ref{sec:app_pseudocode}).

\subsection{Federated Posterior Mixing}

At the start of federation, the server computes
$\bw_k = q_\phi(\cdot|\bPhi_k)\in\Delta^{C-1}$ once for each client,
before the first round.

\paragraph{\rippleStrat{} and \rippleUnif{}.}
At round $t$, client $k$ receives
$\theta_k^{(\mathrm{init})} = \sum_c w_{k,c}\,\theta_c^{(t)}$ and
returns $\Delta\theta_k$ after $E$ local SGD steps. The server updates
each cluster model via normalized weighted aggregation:
\begin{equation}
    \theta_c^{(t+1)} \;\leftarrow\; \theta_c^{(t)} + \eta_s\cdot
    \frac{\sum_{k\in\calS_t} w_{k,c}\,\Delta\theta_k}
         {\sum_{k\in\calS_t} w_{k,c} + \epsilon},
    \label{eq:cluster_update}
\end{equation}
where $\epsilon=10^{-8}$. \rippleStrat{} guarantees every cluster
receives at least one gradient per round via stratified sampling;
\rippleUnif{} samples uniformly. Per-round communication matches
\fedavg{} exactly since $\bPhi_k$ is transmitted only once and
$P\ll|\theta|$.

\paragraph{Topological Refinement \rippleHDB{}.}
When $C$ is mis-specified, \rippleHDB{} applies
HDBSCAN~\citep{campello2013density} to the pairwise distance matrix
with entries
\begin{equation}
    D_{ij}
    \;=\; \sqrt{\frac{1}{2}D_{\mathrm{KL}}\!\left(\bw_i\,\middle\|\,
          \tfrac{\bw_i+\bw_j}{2}\right)
          +\frac{1}{2}D_{\mathrm{KL}}\!\left(\bw_j\,\middle\|\,
          \tfrac{\bw_i+\bw_j}{2}\right)},
    \label{eq:jsd_dist}
\end{equation}
recovering $\hat{C}\leq C$ meta-clusters and collapsing weights via
\begin{equation}
    \hat{w}_{k,m}
    \;=\; \frac{\sum_{j\in\calC_m} w_{k,j}}
               {\sum_{m'=1}^{\hat{C}}\sum_{j\in\calC_{m'}} w_{k,j}}.
    \label{eq:method:marginalised_w}
\end{equation}
Noise points are assigned to their nearest meta-cluster. This
provides automatic model selection without additional hyperparameter
search; the square-root Jensen--Shannon divergence is a valid metric
on $\Delta^{C-1}$ by Proposition~\ref{prop:jsd_metric}.
\section{Theoretical Analysis}
\label{sec:theory}

We establish two guarantees for \ripple{}. The first controls the 
\emph{routing error} $\|\bw_k - \bpi_k^*\|_1$ --- how faithfully the 
pre-federation pipeline recovers each client's true cluster weights from 
finite, domain-shifted samples. The second shows that soft-weighted 
aggregation converges at the minimax-optimal rate for non-convex SGD, 
with variance controlled by the effective participation mass. Together they yield a computable end-to-end excess-risk bound 
(Corollary \ref{cor:end_to_end}). All proofs are in 
Appendix \ref{app:theory}.

\paragraph{Assumptions.}
Let $\bx_k^*$ denote the population spectral prototype of client $k$ 
and $\bPhi_k^* = S_J(\bx_k^*)$.

\begin{assumption}[Data generation]
\label{ass:data}
Each client observes $n_k$ i.i.d.\ sub-Gaussian samples ($\sigma^2$ 
proxy), subject to a fixed translation $|v|\leq v_{\max}$ and a $C^2$ 
diffeomorphism $\|\nabla\tau\|_\infty\leq\epsilon_{\mathrm{dom}}$. 
The true covariance $\bSigma_k^*$ has eigengap 
$\Delta\lambda\coloneqq\lambda_M(\bSigma_k^*)-\lambda_{M+1}(\bSigma_k^*)>0$.
\end{assumption}

\begin{assumption}[Loss regularity]
\label{ass:loss}
Each $F_k$ is bounded by $M_F<\infty$ and each per-cluster objective 
$\calL_c$ is $L$-smooth.
\end{assumption}

\begin{assumption}[VAE realizability and stability]
\label{ass:vae}
The frozen encoder satisfies: \emph{(i)} 
$q_{\phi_\star}(\,\cdot\,|\,\bPhi_k^*)=\bpi_k^*$ for all $k$; 
\emph{(ii)} $\|q_{\phi_\star}(\,\cdot\,|\,\bPhi) - 
q_{\phi_\star}(\,\cdot\,|\,\bPhi')\|_1 \leq 
2L_\phi\|\bPhi-\bPhi'\|_2$, enforceable via spectral 
normalization \citep{miyato2018spectral}.
\end{assumption}
\subsection{Routing Error and Embedding Stability}
\label{sec:theory:routing}

The routing error $\|\bw_k - \bpi_k^*\|_1$ has two sources: 
finite-sample noise in the spectral prototype and domain shift from 
translation and diffeomorphism. Theorem \ref{thm:embed_stability} 
bounds their combined effect by composing the Davis--Kahan spectral 
perturbation bound with the WST stability properties 
\textbf{(P1)}--\textbf{(P3)} (Appendix \ref{app:background:wst}).

\begin{theorem}{Embedding stability}{embed_stability}
Under Assumption \ref{ass:data}, with probability at least $1-\delta$,
\begin{equation}
    \|\bPhi_k - \bPhi_k^*\|_2
    \;\leq\;
    \underbrace{
        \frac{C\sigma^2}{\Delta\lambda}
        \sqrt{\frac{D + \log(1/\delta)}{n_k}}
    }_{\displaystyle \epsilon_{\mathrm{sample}}(\delta)}
    \;+\;
    \underbrace{
        \Bigl(C_2\,\epsilon_{\mathrm{dom}}
        + C_1\,\tfrac{v_{\max}}{2^J}\Bigr)
        \|\bx_k^*\|_2
    }_{\displaystyle \epsilon_{\mathrm{shift}}}
    \;\eqqcolon\; \epsilon_{\mathrm{embed}}.
    \label{eq:theory:embed_total}
\end{equation}
\end{theorem}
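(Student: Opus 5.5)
The plan is to split $\|\bPhi_k - \bPhi_k^*\|_2$ with the triangle inequality, passing through an intermediate prototype. Let $\tilde{\bx}_k$ be the prototype in \eqref{eq:method:prototype} computed from the \emph{population} moments of the domain-shifted distribution. That is, it uses the true mean and covariance of the translated and deformed data, not the sample estimates. Then
\[
\|\bPhi_k - \bPhi_k^*\|_2 \le \|S_J(\hat{\bx}_k) - S_J(\tilde{\bx}_k)\|_2 + \|S_J(\tilde{\bx}_k) - S_J(\bx_k^*)\|_2 .
\]
The first term carries only finite-sample noise and should give $\epsilon_{\mathrm{sample}}$. The second carries only the deterministic shift and should give $\epsilon_{\mathrm{shift}}$.

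\emph{Sample term.} By non-expansiveness (P1), the first term is at most $\|\hat{\bx}_k - \tilde{\bx}_k\|_2$, which I would bound in three pieces.
\begin{itemize}
\item \emph{Mean.} A sub-Gaussian vector concentration bound gives $\|\hat{\bmu}_k - \bmu_k\|_2 \lesssim \sigma\sqrt{(D+\log(1/\delta))/n_k}$.
\item \emph{Covariance.} The standard covariance concentration for sub-Gaussian vectors (an $\epsilon$-net argument) gives $\|\hat{\bSigma}_k - \bSigma_k\|_{\mathrm{op}} \lesssim \sigma^2\sqrt{(D+\log(1/\delta))/n_k}$ once $n_k \gtrsim D$.
\item \emph{Eigenvectors.} Davis--Kahan with the eigengap $\Delta\lambda$ from Assumption~\ref{ass:data} gives $\|\hat{\bv}_j - \bv_j\|_2 \lesssim \|\hat{\bSigma}_k - \bSigma_k\|_{\mathrm{op}}/\Delta\lambda$, after aligning signs.
\end{itemize}
For the weights, Weyl's inequality controls $|\hat\lambda_j - \lambda_j|$, and the normalized weights $\hat\lambda_j/\sum_l\hat\lambda_l$ are Lipschitz in the eigenvalues. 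Since these weights sum to one, the weighted eigenvector sum changes by at most a constant times the covariance deviation divided by $\Delta\lambda$. The mean term is absorbed into $C\sigma^2/\Delta\lambda$, using $\sigma \lesssim \sigma^2/\Delta\lambda$ up to constants under the scaling convention. A union bound over the two events, with $\delta/2$ each, completes the sample term.

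\emph{Shift term.} I would model the deformed prototype as $\tilde{\bx}_k = L_{\tau}L_v\bx_k^*$, where $L_v$ is the translation by $v$ and $L_\tau$ is the diffeomorphism action. Writing
\[
S_J L_\tau L_v \bx - S_J\bx = \bigl(S_J L_\tau (L_v\bx) - S_J(L_v\bx)\bigr) + \bigl(S_J L_v\bx - S_J\bx\bigr),
\]
I would apply Lipschitz stability to diffeomorphisms (P3) to the first bracket, giving $C_2\epsilon_{\mathrm{dom}}\|L_v\bx\|_2 = C_2\epsilon_{\mathrm{dom}}\|\bx\|_2$ because translation is an isometry. I would apply approximate translation invariance at scale $2^J$ (P2) to the second bracket, giving $C_1 (v_{\max}/2^J)\|\bx\|_2$. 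Together these give exactly $\epsilon_{\mathrm{shift}}$.

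\emph{Main obstacle.} The hard step is justifying that $\tilde{\bx}_k$ equals $L_\tau L_v\bx_k^*$ at all. The prototype is a nonlinear functional of the distribution: eigenvectors of a deformed covariance need not be deformed eigenvectors. Translation commutes cleanly with the mean and the covariance. For $\tau$ it does not, so I would fall back on one of two options. The first is to interpret Assumption~\ref{ass:data} as saying the shift acts at the level of the prototype. The second is to add a further Davis--Kahan correction of order $\epsilon_{\mathrm{dom}}\|\bSigma_k^*\|/\Delta\lambda$ and absorb it into the constant $C_2$. Secondary bookkeeping includes the sign ambiguity of eigenvectors and the requirement $n_k \gtrsim D + \log(1/\delta)$. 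Both would be stated as harmless conventions absorbed into $C$.
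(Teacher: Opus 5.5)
Your decomposition is the paper's. The appendix proof passes through $\tilde{\bx}_k^*(\bu)=\bx_k^*(\bu-v-\tau(\bu))$. It bounds term (a) by non-expansiveness plus prototype concentration, and term (b) by (P2) and (P3) via the triangle inequality, as you do. The obstacle you flag is real, and the paper does not resolve it. It simply asserts that $\hat{\bx}_k$ is ``by construction'' an empirical estimate of $\tilde{\bx}_k^*$, which is your option one.

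\textbf{The step that fails is the per-eigenvector Davis--Kahan bound.} Assumption~\ref{ass:data} gives only the gap $\Delta\lambda=\lambda_M-\lambda_{M+1}$. That gap controls the top-$M$ subspace only up to an $M\times M$ rotation, but the prototype $\sum_j\alpha_j\bv_j$ depends on the individual eigenvectors. Take $\bSigma=\mathrm{diag}(1,1-\eta,0)$ with $M=2$, so $\Delta\lambda\approx 1$. The perturbation $E=\eta(e_1e_2^\top+e_2e_1^\top)$ has $\|E\|_{\mathrm{op}}=\eta$, yet it rotates $\hat{\bv}_1,\hat{\bv}_2$ by $\tfrac12\arctan 2$ inside their span for every $\eta$. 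This moves the prototype by about $0.39$ even under the best sign choice, while your bound predicts $O(\eta)$.

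You need an extra hypothesis: a minimal gap $\gamma=\min_{j\le M}(\lambda_j-\lambda_{j+1})>0$. The sample term then scales like $\sigma^2/\gamma$, not $\sigma^2/\Delta\lambda$. The paper's Lemma~\ref{lem:prototype_error} has the same hole. It is hidden in the claim $\|\bI-\bO^\top\|_2\le\|\hat{\bV}_M\bO-\bV_M^*\|_F$, which fails whenever $\hat{\bV}_M=\bV_M^*\mathbf{R}$ for an orthogonal $\mathbf{R}\neq\bI$, since the right side is then zero.

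Likewise, the sign ambiguity cannot be absorbed into $C$. Flipping $\hat{\bv}_j$ moves the prototype by $2\hat\alpha_j$. You need a sign convention that sample and population eigenvectors satisfy consistently, which amounts to a margin condition.

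\textbf{Two smaller points.}
\begin{itemize}
\item The absorption $\sigma\lesssim\sigma^2/\Delta\lambda$ does not follow from the assumptions. Under $\bx\mapsto s\bx$ the left side scales by $s$ while the right side is invariant. Keep the mean term as a separate summand, as the paper does in $\epsilon_{\mathrm{proto}}(\delta)$. The paper's version also carries a $\sqrt{M}$ factor, which your weights-sum-to-one bookkeeping would avoid once per-eigenvector bounds are in place.
\item Your option two cannot be folded into $C_2$, since $C_2=2\|\nabla\psi\|_1$ is a fixed wavelet constant. A correction of order $\epsilon_{\mathrm{dom}}\|\bSigma_k^*\|_{\mathrm{op}}/\gamma$ is a new additive term, not proportional to $\|\bx_k^*\|_2$. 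It is small only because $L_\tau$ is nearly unitary, with Jacobian factor $1+O(\epsilon_{\mathrm{dom}})$. It is not small because $L_\tau$ is close to the identity: $\|L_\tau-\mathrm{Id}\|_{\mathrm{op}}$ is of order one for any nontrivial deformation. You would therefore have to route the argument through the polar decomposition of $L_\tau$.
\end{itemize}
The inequality exactly as stated is reachable only via option one, and that is an assumption, not a derivation.
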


The two terms admit a clean interpretation. $\epsilon_{\mathrm{sample}}$ 
decays at $O(n_k^{-1/2})$, scaled by $1/\Delta\lambda$: clients with 
well-separated covariance structure --- the typical non-IID regime --- 
are routed more accurately. $\epsilon_{\mathrm{shift}}$ is a 
\emph{model-independent floor}: it is a property of the data geometry 
alone, and quantifies precisely when pre-federation assignment becomes 
unreliable regardless of the routing model used. The translation cost 
$C_1 v_{\max}/2^J$ decreases with WST resolution $J$, giving a direct 
design knob.

\begin{theorem}{Surrogate-objective fidelity}{risk_bound}
Under Assumptions \ref{ass:data}--\ref{ass:vae}, for all $\Theta$,
\begin{equation}
    \bigl|\calL_{\mathrm{surrogate}}(\Theta)
    - \calL_{\mathrm{true}}(\Theta)\bigr|
    \;\leq\; 2\,M_F\,L_\phi\,\epsilon_{\mathrm{embed}}.
    \label{eq:theory:surrogate_fidelity}
\end{equation}
\end{theorem}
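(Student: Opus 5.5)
The plan is to reduce the objective gap to the per-client $\ell_1$ routing error and then bound that error by composing Assumption~\ref{ass:vae} with Theorem~\ref{thm:embed_stability}. Subtracting \eqref{eq:oracle_obj} from \eqref{eq:surrogate_obj} gives
\begin{equation*}
\calL_{\mathrm{surrogate}}(\Theta)-\calL_{\mathrm{true}}(\Theta)
= \sum_{k=1}^K p_k \sum_{c=1}^C (w_{k,c}-\pi^*_{k,c})\,F_k(\theta_c).
\end{equation*}
Assumption~\ref{ass:loss} gives $|F_k(\theta_c)|\le M_F$. Applying the triangle inequality inside, the inner sum is at most $M_F\|\bw_k-\bpi_k^*\|_1$. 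Since $\sum_k p_k=1$, the gap is bounded by $M_F\max_k\|\bw_k-\bpi_k^*\|_1$, or more sharply by the $p$-weighted average of the routing errors. This bound holds for every $\Theta$, because $\Theta$ enters only through the $F_k(\theta_c)$ factors, and those are uniformly bounded.

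Next I control the routing error. By \eqref{eq:method:wk}, $\bw_k=q_{\phi_\star}(\cdot\mid\bPhi_k)$. Realizability, Assumption~\ref{ass:vae}(i), gives $\bpi_k^*=q_{\phi_\star}(\cdot\mid\bPhi_k^*)$. The Lipschitz condition, Assumption~\ref{ass:vae}(ii), then gives
\begin{equation*}
\|\bw_k-\bpi_k^*\|_1\le 2L_\phi\|\bPhi_k-\bPhi_k^*\|_2 .
\end{equation*}
Theorem~\ref{thm:embed_stability} bounds the right-hand norm by $\epsilon_{\mathrm{embed}}$. Chaining the steps yields $|\calL_{\mathrm{surrogate}}-\calL_{\mathrm{true}}|\le 2M_FL_\phi\,\epsilon_{\mathrm{embed}}$, which is exactly \eqref{eq:theory:surrogate_fidelity}.

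The main obstacle is probabilistic bookkeeping. Theorem~\ref{thm:embed_stability} holds with probability $1-\delta$ for a single client, but the first step needs the embedding bound simultaneously for every $k$ (or at least in $p$-weighted average).

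I would handle this with a union bound, invoking Theorem~\ref{thm:embed_stability} at level $\delta/K$ for each client. This replaces $\log(1/\delta)$ with $\log(K/\delta)$ inside $\epsilon_{\mathrm{sample}}$. I would also interpret $\epsilon_{\mathrm{embed}}$ as the maximum over clients, since $n_k$ and $\|\bx_k^*\|_2$ vary with $k$. An alternative is to state the weighted form $\sum_k p_k\,\epsilon_{\mathrm{embed},k}$.

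The statement as written suppresses these details. I would therefore present the result as holding with probability at least $1-\delta$ on this joint event, with $\epsilon_{\mathrm{embed}}\coloneqq\max_k\epsilon_{\mathrm{embed},k}(\delta/K)$. No further difficulty arises. The bound is deterministic in $\Theta$ and does not depend on the number of rounds, because the weights $\bw_k$ are fixed before federation begins.
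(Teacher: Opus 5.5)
Your proposal is correct and follows the paper's proof essentially line by line. You bound each client's inner sum by $M_F\|\bw_k-\bpi_k^*\|_1$, convert this to $2L_\phi\|\bPhi_k-\bPhi_k^*\|_2$ via Assumption~\ref{ass:vae}(i)--(ii) (the paper packages this step as Lemma~\ref{lem:tv_bound}), and close with Theorem~\ref{thm:embed_stability} and $\sum_k p_k=1$. Your union bound at level $\delta/K$, together with the reading $\epsilon_{\mathrm{embed}}=\max_k\epsilon_{\mathrm{embed},k}(\delta/K)$, fixes a point the paper's proof silently skips: it applies the per-client high-probability bound to all clients at once, with a single $k$-independent $\epsilon_{\mathrm{embed}}$, and states the result with no probability qualifier.
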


This is the first objective-level guarantee that a pre-federation 
routing step faithfully approximates the oracle clustered 
objective \eqref{eq:oracle_obj}. Prior CFL 
methods \citep{ghosh2020efficient,sattler2020clustered,long2023multi} 
prove convergence \emph{given} correct assignments but offer no bound 
on the assignment error itself. Here the two are unified: the 
right-hand side is computable before federation begins, giving a 
concrete pre-training quality certificate.
\subsection{Federated Convergence}
\label{sec:theory:convergence}

Given fixed weights $\{\bw_k\}$, the per-cluster aggregation 
rule defines a stochastic gradient step on $\calL_c$. The key 
CFL-specific quantity is the \emph{effective participation mass}
$\effsize\coloneqq\sum_{k\in\calS_t}w_{k,c}$, which acts as an 
adaptive sample-size proxy. Unlike hard-assignment CFL where $\effsize$ 
simply counts assigned clients, soft weights allow clients with diffuse 
posteriors to contribute fractionally to every cluster --- preventing 
the gradient starvation that hard-assignment methods suffer when a 
cluster is underrepresented in $\calS_t$.

\begin{theorem}{Per-cluster convergence}{convergence}
Under Assumption \ref{ass:loss}, with step size $\eta\leq 1/L$ and 
unbiased per-cluster gradient with variance 
$\E\|\bg_c^{(t)}-\nabla\calL_c(\theta_c^{(t)})\|^2
\leq\sigma_c^2/\effsize$,
\begin{equation}
    \frac{1}{T}\sum_{t=0}^{T-1}
    \E\bigl\|\nabla\calL_c\bigl(\theta_c^{(t)}\bigr)\bigr\|^2
    \;\leq\;
    \frac{2\bigl(\calL_c(\theta_c^{(0)})-\calL_c^*\bigr)}{\eta T}
    \;+\;
    \frac{\eta L\,\sigma_c^2}{\effsize}.
    \label{eq:theory:convergence}
\end{equation}
With $\eta=O(1/\sqrt{T})$ this decays at the minimax-optimal rate 
$O(1/\sqrt{T})$ for non-convex $L$-smooth objectives.
\end{theorem}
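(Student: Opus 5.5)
The plan is the standard descent-lemma argument for SGD on an $L$-smooth non-convex objective, applied cluster by cluster with the weights $\{\bw_k\}$ held fixed. We treat the aggregation rule \eqref{eq:cluster_update} as a stochastic gradient step $\theta_c^{(t+1)} = \theta_c^{(t)} - \eta\,\bg_c^{(t)}$. Here $\bg_c^{(t)}$ is the normalized weighted combination of client pseudo-gradients, and $\eta$ absorbs $\eta_s$ and the local step size. The statement hypothesizes two properties of this oracle, conditional on the past: unbiasedness, $\E[\bg_c^{(t)}\mid\theta_c^{(t)}]=\nabla\calL_c(\theta_c^{(t)})$, and the variance bound $\sigma_c^2/\effsize$. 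We do not derive these from the local-SGD dynamics; we simply use them.

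\emph{Step 1 (descent inequality).} By $L$-smoothness (Assumption~\ref{ass:loss}),
\[
\calL_c(\theta_c^{(t+1)}) \le \calL_c(\theta_c^{(t)}) - \eta\langle\nabla\calL_c(\theta_c^{(t)}),\bg_c^{(t)}\rangle + \tfrac{L\eta^2}{2}\|\bg_c^{(t)}\|^2 .
\]
Take conditional expectations. Decompose $\E\|\bg_c^{(t)}\|^2 = \|\nabla\calL_c(\theta_c^{(t)})\|^2 + \E\|\bg_c^{(t)}-\nabla\calL_c(\theta_c^{(t)})\|^2$, then apply the variance hypothesis. This gives
\[
\E\calL_c(\theta_c^{(t+1)}) \le \E\calL_c(\theta_c^{(t)}) - \eta\bigl(1-\tfrac{L\eta}{2}\bigr)\E\|\nabla\calL_c(\theta_c^{(t)})\|^2 + \tfrac{L\eta^2\sigma_c^2}{2\effsize}.
\]
Since $\eta\le 1/L$, we have $1-L\eta/2\ge 1/2$.

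\emph{Step 2 (telescoping).} Sum over $t=0,\dots,T-1$ and use $\calL_c(\theta_c^{(T)})\ge\calL_c^*$. Then multiply by $2/(\eta T)$. This yields exactly \eqref{eq:theory:convergence}: the initialization gap term is $2(\calL_c(\theta_c^{(0)})-\calL_c^*)/(\eta T)$, and the noise term is $\eta L\sigma_c^2/\effsize$.

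\emph{Step 3 (rate).} Take $\eta=\min\{1/L,\,\eta_0/\sqrt{T}\}$. Both terms are then $O(1/\sqrt{T})$, with constants depending on $\calL_c(\theta_c^{(0)})-\calL_c^*$, $L$, $\sigma_c^2$ and $\effsize$. Minimax optimality is then quoted from the known lower bound for first-order stochastic methods on non-convex smooth functions with bounded-variance oracles (Arjevani et al.), which matches $\Omega(1/\sqrt{T})$ for the averaged squared gradient norm. We do not reprove it.

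\emph{Main obstacle.} The only subtle point is that $\effsize=\sum_{k\in\calS_t}w_{k,c}$ is random, since it depends on $\calS_t$. To keep it outside the expectation as in the statement, there are two options, and we would try the first:
\begin{itemize}
\item Interpret $\effsize$ as a deterministic lower bound on the participation mass. This is natural for \rippleStrat{}, where stratification guarantees positive mass in each cluster every round. With this reading the bound holds verbatim.
\item Otherwise, condition on $\calS_t$ in Step 1, so that the variance term becomes $\E[1/\effsize]$. Then bound this by $1/\min_t \effsize$.
\end{itemize}
The $\epsilon$ in the denominator of \eqref{eq:cluster_update} introduces a multiplicative bias factor $\effsize/(\effsize+\epsilon)$. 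We would either absorb it into $\eta$ or neglect it as $\epsilon\to0$.
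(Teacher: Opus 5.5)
Your proposal is correct and follows the paper's proof essentially line for line: the $L$-smoothness descent inequality for $\theta_c^{(t+1)}=\theta_c^{(t)}-\eta\,\bg_c^{(t)}$, then conditional expectation with the bias--variance decomposition of $\E\|\bg_c^{(t)}\|^2$, then $1-\eta L/2\ge 1/2$ from $\eta\le 1/L$, and finally telescoping and division by $\eta T/2$. Your remarks on reading $\effsize$ as a deterministic lower bound and on the $\epsilon$-bias make explicit simplifications that the paper uses without comment. The paper likewise only asserts, and does not prove, the minimax-optimality claim.
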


Soft assignment strictly dominates hard assignment in variance: a 
client with $\bw_k=(0.6,0.4)$ contributes fractionally to both 
clusters, increasing $\effsize$ for the minority cluster and reducing 
its gradient variance. This also justifies the stratified sampling 
strategy of \rippleStrat{}: by guaranteeing each cluster receives at 
least one gradient signal per round, $\effsize>0$ is maintained and 
the variance term remains finite.

\begin{corollary}{End-to-end excess risk}{end_to_end}
Let $\hat{\Theta}^{(T)}$ be returned by Algorithm \ref{alg:ripple} 
after $T$ rounds and $\Theta^*=\arg\min_\Theta\calL_{\mathrm{true}}
(\Theta)$. Under Assumptions \ref{ass:data}--\ref{ass:vae},
\begin{equation}
\resizebox{\textwidth}{!}{$\displaystyle
    \E\!\bigl[\calL_{\mathrm{surrogate}}\!\bigl(\hat{\Theta}^{(T)}
    \bigr)\bigr] - \calL_{\mathrm{true}}(\Theta^*)
    \;\leq\;
    \underbrace{4\,M_F\,L_\phi\,\epsilon_{\mathrm{embed}}}_{\text{routing}}
    \;+\;
    \underbrace{\sum_{c=1}^{C} p_c\!\left(
        \frac{2\bigl(\calL_c(\theta_c^{(0)})-\calL_c^*\bigr)}{\eta T}
        + \frac{\eta L\,\sigma_c^2}{\effsize}
    \right)}_{\text{optimization}}
$}
    \label{eq:theory:end_to_end}
\end{equation}
where $p_c=\sum_k p_k\pi_{k,c}^*$ is the marginal cluster weight.
\end{corollary}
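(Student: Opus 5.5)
The plan is to split the left-hand side into a routing part and an optimization part by adding and subtracting two quantities. Write
\[
\E\bigl[\calL_{\mathrm{surrogate}}(\hat{\Theta}^{(T)})\bigr]-\calL_{\mathrm{true}}(\Theta^*)
= \underbrace{\E\bigl[\calL_{\mathrm{surrogate}}(\hat{\Theta}^{(T)})-\calL_{\mathrm{true}}(\hat{\Theta}^{(T)})\bigr]}_{(\mathrm{I})}
+ \underbrace{\E\bigl[\calL_{\mathrm{true}}(\hat{\Theta}^{(T)})\bigr]-\calL_{\mathrm{true}}(\Theta^*)}_{(\mathrm{II})}.
\]
Term (I) is bounded pointwise in $\Theta$ by Theorem~\ref{thm:risk_bound}, so it is at most $2M_FL_\phi\epsilon_{\mathrm{embed}}$. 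For (II), I would pass from $\calL_{\mathrm{true}}$ back to $\calL_{\mathrm{surrogate}}$ at both $\hat{\Theta}^{(T)}$ and $\Theta^*$. The intermediate target is
\[
(\mathrm{II})\le \E\bigl[\calL_{\mathrm{surrogate}}(\hat{\Theta}^{(T)})\bigr]-\calL_{\mathrm{surrogate}}(\Theta^*) + 2M_FL_\phi\epsilon_{\mathrm{embed}} .
\]
The point of this form is that the optimization error is naturally measured on the objective the algorithm actually runs on. Together with (I), this accounts for the constant $4M_FL_\phi\epsilon_{\mathrm{embed}}$.

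Next, I would decompose the oracle objective per cluster. Exchanging sums gives
\[
\calL_{\mathrm{true}}(\Theta)=\sum_c p_c\,\calL_c(\theta_c), \qquad \calL_c(\theta)\coloneqq \sum_k \frac{p_k\pi^*_{k,c}}{p_c}F_k(\theta),
\]
and the analogous decomposition holds for the surrogate with $w$ in place of $\pi^*$. Because the objective separates across clusters, the excess term becomes a $p_c$-weighted sum of per-cluster gaps $\E[\calL_c(\hat\theta_c)]-\calL_c^*$. Invoking Theorem~\ref{thm:convergence} for each cluster, with $\hat\theta_c$ the output chosen from the iterates in the usual randomized-iterate way, then gives the bracketed $O(1/(\eta T))+\eta L\sigma_c^2/\effsize$ term for each $c$.

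The main obstacle is the last step. Theorem~\ref{thm:convergence} controls the average squared gradient norm, not the suboptimality $\calL_c(\hat\theta_c)-\calL_c^*$, and in the non-convex setting the former does not imply the latter. My first attempt would add a Polyak--Łojasiewicz condition, $\|\nabla\calL_c\|^2\ge 2\mu(\calL_c-\calL_c^*)$, which converts the gradient-norm bound into a suboptimality bound up to a factor $1/(2\mu)$ that can be absorbed into the constants. Failing that, I would restate the optimization term as a stationarity measure. A proof under Assumptions~\ref{ass:data}--\ref{ass:vae} alone would have to make this identification explicit, and I expect that to be where the argument needs an extra hypothesis.

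A secondary subtlety is that $\effsize$ depends on the sampled set $\calS_t$, and hence on $t$. I would replace it by its worst case over rounds, or by its expectation under stratified sampling, so that it can be treated as a constant in the final bound.
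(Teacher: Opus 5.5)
\textbf{Verdict: the step you flag cannot be closed. It is a defect of the statement itself, and the paper's proof does not get past it either.}

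Your decomposition into (I) and (II) is exactly the paper's: (I) is controlled by Theorem~\ref{thm:risk_bound}, and (II) by the split $\calL_{\mathrm{true}}(\Theta)=\sum_c p_c\calL_c(\theta_c)$ together with Theorem~\ref{thm:convergence}.

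\textbf{How the paper handles the obstacle.} The appendix crosses that step with ``$\calL_c(\theta)-\calL_c^*\le\frac{1}{2L}\|\nabla\calL_c(\theta)\|^2$ via $L$-smoothness''. This is backwards: $L$-smoothness gives $\|\nabla\calL_c\|^2\le 2L(\calL_c-\calL_c^*)$, and the direction you need holds only under PL or strong convexity, with constant $1/(2\mu)$. The paper then concedes in Remark~\ref{rem:app:nonconvex} that an extra PL or local-convexity hypothesis is required.

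\textbf{The statement is false without such a hypothesis.} Under Assumptions~\ref{ass:data}--\ref{ass:vae} alone, consider the following instance. Take $C=1$, so that $\bw_k=\bpi_k^*$ and Assumption~\ref{ass:vae}(ii) holds with $L_\phi=0$. Take $\sigma_1=0$, and let $\theta_1^{(0)}$ be a non-global local minimizer of a bounded, $L$-smooth, non-convex $\calL_1$. The iterates never move, so the left side equals $\calL_1(\theta_1^{(0)})-\calL_1^*>0$ for every $T$. The right side is $2(\calL_1(\theta_1^{(0)})-\calL_1^*)/(\eta T)$, which is smaller once $\eta T>2$.

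\textbf{On your PL repair.} Adding PL is the right fix. However, the factor $1/(2\mu)$ cannot be ``absorbed into the constants'': the displayed bound has explicit constants. What you would prove is a modified corollary with $1/(2\mu)$ multiplying the optimization term.

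\textbf{Bookkeeping errors in your plan.}
\begin{itemize}
\item \emph{The constant $4$ is miscounted.} Passing (II) to the surrogate gap uses Theorem~\ref{thm:risk_bound} twice, at $\hat\Theta^{(T)}$ and at $\Theta^*$. That costs $4M_FL_\phi\epsilon_{\mathrm{embed}}$, not $2M_FL_\phi\epsilon_{\mathrm{embed}}$, so (I)+(II) done your way yields $6$, not $4$. The clean route is not to split at all. The left side equals $\bigl(\E[\calL_{\mathrm{surrogate}}(\hat\Theta^{(T)})]-\calL_{\mathrm{surrogate}}(\Theta^*)\bigr)+\bigl(\calL_{\mathrm{surrogate}}(\Theta^*)-\calL_{\mathrm{true}}(\Theta^*)\bigr)$. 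The first bracket is at most $\E[\calL_{\mathrm{surrogate}}(\hat\Theta^{(T)})]-\min_\Theta\calL_{\mathrm{surrogate}}$, and the second is at most $2M_FL_\phi\epsilon_{\mathrm{embed}}$. This gives the constant $2$, which is what the appendix version of the corollary states, and $4$ follows a fortiori.
\item \emph{The cluster weights switch silently.} Measuring optimization error on the surrogate is actually the more coherent choice. Algorithm~\ref{alg:ripple} aggregates with $w_{k,c}$, so the unbiasedness hypothesis of Theorem~\ref{thm:convergence} naturally refers to the $\bw$-weighted per-cluster objectives; the paper's proof instead applies it to the $\bpi^*$-weighted $\calL_c$. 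But on the surrogate, separability produces weights $\tilde p_c=\sum_kp_kw_{k,c}$ and objectives $\tilde{\calL}_c$, not the $p_c$ and $\calL_c$ of the statement. Reconciling them costs another routing-type term, via $\sum_c|\tilde p_c-p_c|\le\sum_kp_k\|\bw_k-\bpi_k^*\|_1\le 2L_\phi\epsilon_{\mathrm{embed}}$.
\item \emph{Replacing $\effsize$ by its expectation goes the wrong way.} By Jensen, $\E[1/\effsize]\ge 1/\E[\effsize]$. Use a deterministic lower bound on $\effsize$ over rounds (which stratified sampling is meant to provide), or $1/\E[1/\effsize]$.
\end{itemize}
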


The bound reveals a clean separation of concerns. The routing term is 
frozen at $t=0$ by data geometry and independent of $T$: it decreases 
with more local data, larger spectral gaps, finer WST resolution, and 
smaller domain shift. The optimization term decays at $O(1/\sqrt{T})$ 
with variance suppressed by $\effsize$. Together, 
Corollary \ref{cor:end_to_end} shows that \ripple{} converges to a 
neighborhood of $\calL_{\mathrm{true}}(\Theta^*)$ whose radius is set 
before a single gradient is exchanged --- a guarantee no prior CFL 
method provides.

\section{Experiments}
\label{sec:experiments}
\begin{table}[t]
\tiny
\centering
\caption{\small Test accuracy (\%) across five benchmarks spanning controlled 
label heterogeneity (MNIST, CIFAR-10/100), covariate shift 
(Office-Home), and natural long-tail distributions (GLDv2). All methods 
use $C\!\leq\!10$ clusters and $S\!=\!10$ clients per round for 1000 
rounds. \ifca{} and \fesem{} report best over 
$K\in\{2,\dots,10\}$. \textbf{Bold}: best; 
\underline{ul}: second best.}
\label{tab:main_results}
\vspace{2pt}
\setlength{\tabcolsep}{2pt}
\renewcommand{\arraystretch}{0.8}
\resizebox{\textwidth}{!}{%
\begin{tabular}{@{}l ccccc@{}}
\toprule
Method & MNIST & CIFAR-10 & CIFAR-100 & Off.-Home & GLDv2 \\
\midrule
\multicolumn{6}{l}{\textit{Global}} \\
\fedavg{}     & $97.94{\scriptstyle\pm0.96}$ & $58.39{\scriptstyle\pm1.90}$ & $38.22{\scriptstyle\pm0.82}$ & $72.91{\scriptstyle\pm10.58}$ & $37.98{\scriptstyle\pm0.90}$ \\
\feddyn{}     & $97.18{\scriptstyle\pm1.32}$ & $55.87{\scriptstyle\pm4.42}$ & $33.33{\scriptstyle\pm1.32}$ & $72.57{\scriptstyle\pm10.30}$ & $34.83{\scriptstyle\pm1.20}$ \\
\scaffold{}   & $97.15{\scriptstyle\pm1.10}$ & $55.41{\scriptstyle\pm5.04}$ & $31.36{\scriptstyle\pm0.92}$ & $69.30{\scriptstyle\pm10.60}$ & $24.96{\scriptstyle\pm1.24}$ \\
\fedadam{}    & $98.11{\scriptstyle\pm1.08}$ & $57.97{\scriptstyle\pm4.04}$ & $38.55{\scriptstyle\pm1.62}$ & $66.15{\scriptstyle\pm8.58}$  & $39.81{\scriptstyle\pm0.14}$ \\
\fedyogi{}    & $98.10{\scriptstyle\pm1.04}$ & $58.00{\scriptstyle\pm4.10}$ & $37.91{\scriptstyle\pm1.96}$ & $69.87{\scriptstyle\pm6.82}$  & $41.87{\scriptstyle\pm1.48}$ \\
\fedadagrad{} & $98.08{\scriptstyle\pm1.16}$ & $51.98{\scriptstyle\pm3.90}$ & $33.53{\scriptstyle\pm1.54}$ & $51.82{\scriptstyle\pm5.22}$  & $ 39.23{\scriptstyle\pm11.42}$ \\
\midrule
\multicolumn{6}{l}{\textit{Personalized}} \\
\perfedavg{}  & $94.44{\scriptstyle\pm0.94}$ & $43.31{\scriptstyle\pm5.50}$ & $27.67{\scriptstyle\pm1.10}$ & $67.61{\scriptstyle\pm10.06}$ & $19.90{\scriptstyle\pm1.08}$ \\
\pfedme{}     & $92.38{\scriptstyle\pm1.58}$ & $34.00{\scriptstyle\pm3.93}$ & $14.55{\scriptstyle\pm0.52}$ & $60.86{\scriptstyle\pm8.20}$  & $26.55{\scriptstyle\pm0.18}$ \\
\midrule
\multicolumn{6}{l}{\textit{Clustered FL}} \\
\ifca{}  & $98.01{\scriptstyle\pm1.26}$ & $60.65{\scriptstyle\pm3.82}$ & $38.21{\scriptstyle\pm0.76}$ & $73.22{\scriptstyle\pm9.64}$  & $44.42{\scriptstyle\pm2.56}$ \\
\fesem{} & $98.25{\scriptstyle\pm1.08}$ & $61.42{\scriptstyle\pm1.94}$ & $39.45{\scriptstyle\pm1.84}$ & $73.20{\scriptstyle\pm9.82}$  & $42.86{\scriptstyle\pm0.28}$ \\
\cfl{}   & $98.16{\scriptstyle\pm1.00}$ & $47.38{\scriptstyle\pm6.86}$ & $30.36{\scriptstyle\pm1.22}$ & $72.57{\scriptstyle\pm10.30}$ & $30.54{\scriptstyle\pm1.22}$ \\
\midrule
\multicolumn{6}{l}{\textit{Ours}} \\
\rippleStrat{} & $\underline{98.97}{\scriptstyle\pm0.20}$ & $63.89{\scriptstyle\pm2.24}$             & $\mathbf{40.29}{\scriptstyle\pm1.56}$  & $\mathbf{74.34}{\scriptstyle\pm8.12}$  & $\mathbf{51.82}{\scriptstyle\pm2.40}$ \\
\rippleUnif{}  & $98.96{\scriptstyle\pm0.16}$             & $\underline{64.72}{\scriptstyle\pm1.22}$ & $39.56{\scriptstyle\pm2.44}$           & $74.10{\scriptstyle\pm7.22}$           & $48.49{\scriptstyle\pm2.60}$ \\
\rippleHDB{}   & $\mathbf{99.06}{\scriptstyle\pm0.26}$   & $\mathbf{64.90}{\scriptstyle\pm0.60}$   & $\underline{39.82}{\scriptstyle\pm1.40}$ & $\underline{74.25}{\scriptstyle\pm7.28}$ & $\underline{50.44}{\scriptstyle\pm2.20}$ \\
\bottomrule
\end{tabular}%
}
\end{table}
\parheader{Implementation Details}
We evaluate on five benchmarks spanning controlled and realistic
heterogeneity.
MNIST \citep{lecun1998mnist}, CIFAR-10, and CIFAR-100 \citep{krizhevsky2009learning}
use a symmetric Dirichlet partition with concentration $\alpha$ over class
labels \citep{hsu2020federated}; lower $\alpha$ induces sharper class
imbalance and stronger client drift.
Office-Home \citep{venkateswara2017deep} ($K\!=\!50$, 65 classes, four
visual domains) and GLDv2-23k \citep{weyand2020google} ($K\!=\!233$, 203
landmark classes, natural photographer partition) reflect real-world
covariate shift and long-tail distributions.
MNIST and CIFAR-10 use a lightweight CNN \citep{lecun1998gradient};
CIFAR-100 a ResNet-9; Office-Home and GLDv2 an ImageNet-pretrained
ResNet-18 \citep{deng2009imagenet}.
Full dataset statistics and architectures are in \cref{app:setup}. All methods run for 1000 rounds with $E\!=\!1$ local epoch, 
$\eta_c\!=\!0.01$ (SGD, momentum 0.9), $\eta_s\!=\!1.0$, and 
$S\!=\!10$ clients per round. Results are means over seeds 
$\{42,7,123\}$, and we report $95\%$ confidence intervals.

\parheader{Baselines}
We compare against 11 methods:
\textit{global} --- \fedavg{} \citep{mcmahan2017communication},
\feddyn{} \citep{acar2021federated},
\scaffold{} \citep{karimireddy2020scaffold},
\fedadam{}/\fedyogi{}/\fedadagrad{} \citep{reddi2021adaptive};
\textit{personalized} --- \perfedavg{} \citep{fallah2020personalized},
\pfedme{} \citep{t2020personalized};
\textit{clustered FL} --- \ifca{} \citep{ghosh2020efficient},
\fesem{} \citep{long2023multi}, \cfl{} \citep{sattler2020clustered}.
For \ifca{} and \fesem{} we report the best $K\!\in\!\{2,4,6,8,10\}$
(full sweep in \cref{app:ablation_baselines}); all methods are capped at
$C\!=\!10$ clusters for a level comparison. \ifca{} and \fesem{} have tuning results are reported in Appendix \ref{app:ablation_baselines}

\subsection{\ripple{} in Heterogeneous Scenarios}
\label{sec:exp:main}

\cref{tab:main_results} reports final test accuracy across all benchmarks.
All three \ripple{} variants outperform every baseline on every dataset,
with margins growing on harder benchmarks.
On GLDv2, \rippleStrat{} reaches $51.82\%$, a $+7.4$\ gain over
\ifca{} ($44.42\%$), where the natural photographer partition induces
structured covariance that the spectral prototype captures faithfully,
consistent with Theorem \ref{thm:embed_stability}: large spectral gaps
$\Delta\lambda$ reduce $\epsilon_{\mathrm{embed}}$ and tighten the
surrogate-oracle gap (Theorem \ref{thm:risk_bound}).
\cfl{} collapses to $47.38\pm6.86\%$ on CIFAR-10, because cosine gradient similarity is unreliable under strong
non-IID \citep{sattler2020clustered}, fragmenting clients into
underpopulated high-variance clusters even at the $C\!=\!10$ cap.
\rippleHDB{} leads on MNIST and CIFAR-10, where HDBSCAN discovers
$M\!=\!4$--$6$ active meta-clusters from the 10-dimensional GMM posterior;
on CIFAR-100 and Office-Home the fixed-$C$ variants are competitive,
indicating that $C\!=\!10$ is already near-optimal for richer label spaces.

\subsection{\ripple{} with out-of-distribution clients}
\label{sec:exp:zeroshot}

\ripple{} assigns a new unseen client via a single forward pass of the
frozen encoder, i.e.,  no local training or model evaluation required.
We train all methods under $\alpha_{\mathrm{train}}\!\in\!\{0.01,\,0.1\}$
and evaluate on 10 held-out clients drawn under
$\alpha_{\mathrm{test}}\!\in\!\{1.0,\,100\}$.
Baselines use \emph{best-by-loss}: selecting the model minimizing local
loss across all $C$ models, representing an oracle advantage over the zero-shot
setting, making the comparison strictly in favor of the baselines.

\cref{tab:nc_combined} shows that \rippleUnif{} achieves $57.27\pm3.00\%$
on CIFAR-10 ($\alpha_{\mathrm{train}}\!=\!0.1$, $\alpha_t\!=\!1.0$),
surpassing all best-by-loss baselines while being fully zero-shot.
On CIFAR-100, all three \ripple{} variants match or exceed baselines
across all conditions: the richer 100-class structure yields more
separable WST signatures and well-calibrated GMM posteriors, consistent
with the smaller surrogate gap predicted by
Theorem \ref{thm:risk_bound} when $\Delta\lambda$ is large.
\rippleHDB{} collapses to chance (approximately $10\%$) on CIFAR-10: HDBSCAN
fragments the GMM posterior into 7--14 singleton clusters, forcing
near-hard assignment of new clients to over-specialized models; this
failure is specific to CIFAR-10 coarser geometry and confirms that
\rippleHDB{} should be selected only when main-body results
(\cref{tab:main_results}) indicate a benefit.

\begin{table}[t]
\centering
\caption{\small 
  Zero-shot new-client accuracy (\%) on CIFAR-10 and CIFAR-100.
  Models are trained under $\alpha_{\mathrm{tr}}\!\in\!\{0.01,\,0.1\}$
  (strong and moderate heterogeneity); 10 held-out clients are evaluated
  under $\alpha_t\!\in\!\{1.0,\,100\}$ (progressively IID), using a
  disjoint random seed to prevent data overlap.
  \ripple{} variants use weighted zero-shot assignment (single encoder
  forward pass, no gradient computation).
  Baselines use best-by-loss selection across all $C$ models on local
  labeled data.
  Mean $\pm\,2\sigma$ over three seeds.
  \textbf{Bold}: best per column; \underline{underline}: second best.%
}
\label{tab:nc_combined}
\vspace{2pt}
\setlength{\tabcolsep}{3pt}
\resizebox{\textwidth}{!}{%
\begin{tabular}{@{}l | cc | cc | cc | cc@{}}
\toprule
& \multicolumn{4}{c|}{\textbf{CIFAR-10}}
& \multicolumn{4}{c}{\textbf{CIFAR-100}} \\
\cmidrule(lr){2-5}\cmidrule(lr){6-9}
& \multicolumn{2}{c|}{$\alpha_{\mathrm{tr}}=0.01$}
& \multicolumn{2}{c|}{$\alpha_{\mathrm{tr}}=0.1$}
& \multicolumn{2}{c|}{$\alpha_{\mathrm{tr}}=0.01$}
& \multicolumn{2}{c}{$\alpha_{\mathrm{tr}}=0.1$} \\
\cmidrule(lr){2-3}\cmidrule(lr){4-5}\cmidrule(lr){6-7}\cmidrule(lr){8-9}
Method
  & $\alpha_t{=}1.0$ & $\alpha_t{=}100$
  & $\alpha_t{=}1.0$ & $\alpha_t{=}100$
  & $\alpha_t{=}1.0$ & $\alpha_t{=}100$
  & $\alpha_t{=}1.0$ & $\alpha_t{=}100$ \\
\midrule
\multicolumn{9}{l}{\textit{Best-by-loss oracle}} \\
\ifca{}  & $19.66{\pm}12.26$ & $17.25{\pm}4.82$  & $46.21{\pm}8.20$  & $40.77{\pm}19.26$
         & $24.46{\pm}4.06$  & $24.09{\pm}3.12$  & $38.11{\pm}0.16$  & $38.46{\pm}1.10$ \\
\fesem{} & $37.95{\pm}2.84$  & $30.83{\pm}7.02$  & $56.26{\pm}4.64$  & $54.60{\pm}5.52$
         & $15.30{\pm}3.14$  & $14.07{\pm}6.58$  & $32.94{\pm}0.74$  & $33.64{\pm}0.88$ \\
\cfl{}   & $34.62{\pm}7.04$  & $24.81{\pm}10.40$ & $54.59{\pm}7.58$  & $51.85{\pm}8.32$
         & $21.80{\pm}4.08$  & $20.43{\pm}1.00$  & $28.86{\pm}9.80$  & $30.11{\pm}3.84$ \\
\midrule
\multicolumn{9}{l}{\textit{Zero-shot (ours)}} \\
\rippleStrat{} & $\mathbf{41.86{\pm}11.86}$ & $\mathbf{36.54{\pm}14.92}$
               & $\underline{56.63{\pm}2.32}$ & $\underline{55.65{\pm}1.52}$
               & $24.43{\pm}4.02$ & $24.46{\pm}7.32$
               & $\mathbf{38.92{\pm}0.44}$ & $\mathbf{39.03{\pm}0.26}$ \\
\rippleUnif{}  & $\underline{41.00{\pm}5.88}$ & $\underline{36.40{\pm}6.46}$
               & $\mathbf{57.27{\pm}3.00}$ & $\mathbf{56.28{\pm}2.62}$
               & $\underline{24.86{\pm}2.02}$ & $\mathbf{25.12{\pm}1.34}$
               & $38.06{\pm}1.72$ & $\underline{38.79{\pm}0.86}$ \\
\rippleHDB{}   & $9.44{\pm}3.92$ & $9.75{\pm}2.28$
               & $9.50{\pm}3.78$ & $10.57{\pm}2.00$
               & $\mathbf{24.95{\pm}2.04}$ & $\underline{24.51{\pm}3.92}$
               & $\underline{38.28{\pm}0.46}$ & $38.66{\pm}1.88$ \\
\bottomrule
\end{tabular}%
}
\end{table}

\section*{Conclusion}
\label{sec:conclusion}
\vspace{-1em}
We presented \ripple{}, a clustered FL framework that 
computes client-to-cluster assignments once, offline, from a spectral 
summary of each client's local data, before any communication round. 
This separation of cluster discovery from training yields three concrete 
properties: per-round cost matching \fedavg{}; a surrogate-oracle gap 
bounded by local sample size and spectral gap, independent of the number 
of rounds; and zero-shot personalization for unseen clients via a single 
encoder forward pass. Across five benchmarks spanning controlled and 
realistic heterogeneity, \ripple{} consistently outperforms both global 
and clustered FL baselines, with margins growing on the most realistic 
partitions.
\vspace{-1em}
\paragraph{Limitations and Broader Impact.}
Our analysis is information-theoretic, future work could include extend to formal differential privacy; adding calibrated noise to the spectral embedding would yield a formal $(\varepsilon,\delta)$-DP guarantee at the cost of a larger routing error. \ripple{}'s
offline assignment design reduces the number of communication rounds needed to discover cluster structure, directly lowering the energy footprint of federated training. Enabling zero-shot personalization, \ripple{} reduces
the computational burden on client devices, which is particularly
relevant for federated deployments on energy-constrained hardware.

\section*{Acknowledgments}
A.L.\ worked under the auspices of the Italian National Group of Mathematical Physics (GNFM) of INdAM. A.L.\ was supported by the Project Piano Nazionale di Ripresa e Resilienza -- Next Generation EU (PNRR-NGEU) from the Italian Ministry of University and Research (MUR) under Grant DM~117/2023. A.L.\ expresses his gratitude to Prof.\ Lamberto Rondoni for the valuable discussions and support. Computational resources were provided by \texttt{hpc@polito}, a project of Academic Computing within the Department of Control and Computer Engineering at the Politecnico di Torino (\url{http://www.hpc.polito.it}).

\bibliographystyle{abbrvnat}
\bibliography{references}
\newpage 
\appendix
\section{Background: Probabilistic Foundations and Wavelet Scattering}
\label{app:background}

\subsection{Probabilistic Derivation of the Federated and Clustered Objectives}
\label{app:background:prob}

\paragraph{Global data distribution.}
Let each client $k\in[K]$ generate data i.i.d.\ from a local distribution
$\bP_k$ on $\R^D\times\calY$, and let $p_k = n_k/\sum_{j=1}^K n_j$ be
the normalized data weight of client $k$. Define the \emph{global data
distribution} as the mixture
\begin{equation}
    \bP \;=\; \sum_{k=1}^K p_k\,\bP_k,
    \label{eq:app:global_dist}
\end{equation}
so that $p_k$ is precisely the probability that a uniformly drawn sample
originates from client $k$. The \emph{population federated loss} is then
the expected loss under $\bP$:
\begin{equation}
    \calL(\theta)
    \;=\; \E_{(\bx,y)\sim\bP}\!\left[\ell(\theta;\bx,y)\right]
    \;=\; \int \ell(\theta;\bx,y)\,\mathrm{d}\bP(\bx,y).
    \label{eq:app:pop_loss}
\end{equation}
Substituting \eqref{eq:app:global_dist} into \eqref{eq:app:pop_loss} and
exchanging sum and integral (justified by non-negativity of $p_k$ and
integrability of $\ell$):
\begin{equation}
    \calL(\theta)
    \;=\; \sum_{k=1}^K p_k \int \ell(\theta;\bx,y)\,\mathrm{d}\bP_k(\bx,y)
    \;=\; \sum_{k=1}^K p_k\,F_k(\theta),
    \label{eq:app:fedavg_derived}
\end{equation}
where $F_k(\theta) := \E_{(\bx,y)\sim\bP_k}[\ell(\theta;\bx,y)]$ is the
population local loss of client $k$. Equation \eqref{eq:app:fedavg_derived}
is the \fedavg{} objective \eqref{eq:fedavg_obj} at the population level;
the empirical version replaces $F_k$ with its sample average.

\paragraph{Extension to clustered federated learning.}
In the mixture model of Definition \ref{def:client_mixture}, each client
distribution decomposes as $\bP_k^* = \sum_{c=1}^C \pi_{k,c}^*\,\bP_c$,
where $\{\bP_c\}_{c=1}^C$ are $C$ semantic distributions on
$\R^D\times\calY$. Substituting into \eqref{eq:app:fedavg_derived}:
\begin{align}
    \calL(\theta)
    &\;=\; \sum_{k=1}^K p_k \int \ell(\theta;\bx,y)\,
    \mathrm{d}\bP_k^*(\bx,y)
    \notag\\
    &\;=\; \sum_{k=1}^K p_k \sum_{c=1}^C \pi_{k,c}^*
           \int \ell(\theta;\bx,y)\,\mathrm{d}\bP_c(\bx,y).
    \label{eq:app:cfl_intermediate}
\end{align}
The single global parameter $\theta$ is suboptimal under this mixture
structure: assigning a dedicated model $\theta_c$ to each semantic
distribution $\bP_c$ yields the \emph{oracle clustered population loss}
\begin{equation}
    \calL_{\mathrm{true}}(\Theta)
    \;=\; \sum_{k=1}^K p_k \sum_{c=1}^C \pi_{k,c}^*\,F_k(\theta_c),
    \label{eq:app:oracle_derived}
\end{equation}
with $F_k(\theta_c) := \E_{(\bx,y)\sim\bP_c}[\ell(\theta_c;\bx,y)]$,
which recovers \eqref{eq:oracle_obj}. Three observations follow directly
from this derivation. \textbf{(i)} \fedavg{} is recovered when $C=1$ or
when all $\pi_{k,c}^*\in\{0,1\}$ are identical across clients.
\textbf{(ii)} The marginal cluster mass $p_c := \sum_k p_k \pi_{k,c}^*$
is the population weight of cluster $c$; it governs the convergence rate
of each cluster model (Theorem \ref{thm:convergence}).
\textbf{(iii)} The gap between $\calL_{\mathrm{surrogate}}$ and
$\calL_{\mathrm{true}}$ is controlled entirely by the routing error
$\|\bw_k - \bpi_k^*\|_1$, as made precise in
Theorem \ref{thm:risk_bound}.
\subsection{Wavelet Scattering Transform: Construction and Properties}
\label{app:background:wst}

\paragraph{Wavelet filterbank.}
Let $\psi\in L^1(\R^D)\cap L^2(\R^D)$ be a complex-valued mother wavelet
satisfying the admissibility condition $\hat{\psi}(0)=0$, and let
$\phi_J\in L^1(\R^D)$ be a real-valued lowpass filter at scale $2^J$ with
$\|\phi\|_1=1$. For each scale-orientation pair
$\lambda=(j,\vartheta)$, with $j\in\{0,\dots,J-1\}$ and $\vartheta$
indexing $L$ orientations per octave, define the dilated and rotated
wavelets
\begin{equation}
    \psi_\lambda(\bx)
    \;=\; 2^{-jD/2}\,\psi\!\bigl(2^{-j}r_\vartheta^{-1}\bx\bigr),
    \label{eq:app:wavelet_family}
\end{equation}
where $r_\vartheta\in SO(D)$ is the rotation by angle $\vartheta$.
The filterbank $\{\psi_\lambda\}_\lambda$ tiles the frequency domain
across scales and orientations; the lowpass filter $\phi_J$ captures the
residual low-frequency content.

\paragraph{Scattering propagators and coefficients.}
Scattering propagators apply the complex modulus after each wavelet
convolution, preventing energy dissipation across scales. Define the
zeroth-, first-, and second-order propagators:
\begin{equation}
    U[\emptyset]\bx \;=\; \bx,
    \quad
    U[\lambda_1]\bx \;=\; |\bx\star\psi_{\lambda_1}|,
    \quad
    U[\lambda_1,\lambda_2]\bx
    \;=\; \bigl|\,|\bx\star\psi_{\lambda_1}|\star\psi_{\lambda_2}\bigr|.
    \label{eq:app:propagators}
\end{equation}
Spatially averaging each propagator with $\phi_J$ yields the scattering
coefficients
\begin{equation}
    S_0\bx \;=\; \bx\star\phi_J,
    \quad
    S_1\bx(\lambda_1) \;=\; U[\lambda_1]\bx\star\phi_J,
    \quad
    S_2\bx(\lambda_1,\lambda_2) \;=\; U[\lambda_1,\lambda_2]\bx\star\phi_J.
    \label{eq:app:scatter_coeff}
\end{equation}
The \emph{second-order scattering representation} concatenates all
coefficients:
\begin{equation}
    \bPhi(\bx)
    \;=\;
    \Bigl(
        S_0\bx,\;
        \{S_1\bx(\lambda_1)\}_{\lambda_1},\;
        \{S_2\bx(\lambda_1,\lambda_2)\}_{\lambda_1,\lambda_2}
    \Bigr)
    \;\in\; \R^P.
    \label{eq:app:scatter_rep}
\end{equation}
Truncating at order 2 is well motivated: for band-limited signals,
scattering energy decays exponentially with
order \citep{waldspurger2017exponential}, and over 98\% of signal energy
is captured at orders 0--2 across standard image
benchmarks \citep{bruna2013invariant}.

\paragraph{Stability properties.}
The three properties below underpin the embedding stability bound of
Theorem \ref{thm:embed_stability}.

\medskip
\noindent\textbf{(P1) Non-expansiveness \citep{mallat2012group}.}
For all $\bx,\bx'\in L^2(\R^D)$,
\begin{equation}
    \|S_J\bx - S_J\bx'\|_2 \;\leq\; \|\bx-\bx'\|_2.
    \label{eq:app:nonexp}
\end{equation}
\begin{proof}
Each convolution $\bx\mapsto\bx\star\psi_\lambda$ is non-expansive in
$L^2$ by Young's inequality ($\|\psi_\lambda\|_1\leq 1$); the complex
modulus $\bz\mapsto|\bz|$ is non-expansive by the reverse triangle
inequality; and lowpass averaging $\bx\mapsto\bx\star\phi_J$ is
non-expansive since $\|\phi\|_1=1$. Composing non-expansive operators
yields a non-expansive operator. Energy conservation of the filterbank
\citep[Theorem 2.6]{mallat2012group} then ensures the joint map $S_J$
is non-expansive.
\end{proof}

\medskip
\noindent\textbf{(P2) Approximate translation
invariance \citep{mallat2012group}.}
Let $\bx_v(\bu):=\bx(\bu-v)$. There exists a constant $C_1>0$ depending
only on $\psi$ and $\phi$ such that
\begin{equation}
    \|S_J\bx_v - S_J\bx\|_2
    \;\leq\; C_1\,\frac{|v|}{2^J}\,\|\bx\|_2.
    \label{eq:app:transl}
\end{equation}
\begin{proof}
The modulus operator commutes with translations, so
$U[\lambda]\bx_v(\bu) = U[\lambda]\bx(\bu-v)$.
A first-order Taylor expansion of the lowpass averaging gives
$\|f_v\star\phi_J - f\star\phi_J\|_2 \leq |v|\cdot\|\nabla\phi_J\|_1\cdot\|f\|_2$,
and the scaling property yields $\|\nabla\phi_J\|_1 = 2^{-J}\|\nabla\phi\|_1$.
Applying this to each scattering order and using non-expansiveness of
$U[\lambda]$ to bound $\|U[\lambda]\bx\|_2\leq\|\bx\|_2$
yields \eqref{eq:app:transl} with $C_1 = C\|\nabla\phi\|_1$ for an
absolute constant $C$.
\end{proof}

\medskip
\noindent\textbf{(P3) Lipschitz stability to
diffeomorphisms \citep{mallat2012group}.}
Let $\tau:\R^D\to\R^D$ be a $C^2$ displacement field with
$\|\nabla\tau\|_\infty\leq\epsilon_{\mathrm{dom}}$, and let
$\bx_\tau(\bu):=\bx(\bu-\tau(\bu))$. Then
\begin{equation}
    \|S_J\bx_\tau - S_J\bx\|_2
    \;\leq\; C_2\,\|\nabla\tau\|_\infty\,\|\bx\|_2,
    \qquad C_2 = 2\|\nabla\psi\|_1.
    \label{eq:app:diffeo}
\end{equation}
\begin{proof}
For each scale $j$, a first-order Taylor expansion
\citep[Lemma 2.10]{mallat2012group} gives
$\|\bx_\tau\star\psi_j - \bx\star\psi_j\|_2
\leq \|\nabla\tau\|_\infty\cdot\|\bx\|_2\cdot\|\nabla\psi_j\|_1$.
By the scaling property $\|\nabla\psi_j\|_1 = 2^{-j}\|\nabla\psi\|_1$,
summing over $j=0,\dots,J-1$ gives the geometric series
$\sum_{j\geq 0}2^{-j}\leq 2$, yielding $C_2=2\|\nabla\psi\|_1$.
Higher-order coefficients contribute via non-expansiveness of
$U[\lambda]$.
\end{proof}

\medskip
\noindent
Properties (P1)--(P3) jointly ensure that clients whose distributions
differ only through sensor miscalibrations, object translations, or mild
domain shifts produce representations $\bPhi_k$ at a distance
proportional to the magnitude of those perturbations, with no dependence
on the complexity of the input signal. This certified geometry is what
makes $\bPhi$ a stable, label-free fingerprint for pre-federation cluster
assignment, and is the foundation of the error decomposition in
Theorem \ref{thm:embed_stability}.
\section{GMM-VAE: Generative Model and ELBO Derivation}
\label{app:background:gmmvae}

\paragraph{Generative model.}
The GMM-VAE defines a joint distribution over a scattering embedding 
$\bPhi\in\R^P$, a continuous latent code $\bz\in\R^{d_z}$, and a 
discrete cluster index $c\in[C]$:
\begin{equation}
    p(\bPhi,\bz,c)
    \;=\; p(c)\,p(\bz\mid c)\,p_\omega(\bPhi\mid\bz),
    \label{eq:app:gmmvae_gen}
\end{equation}
with the following components:
\begin{align}
    p(c) &= \mathrm{Cat}\!\left(\tfrac{1}{C}\mathbf{1}_C\right),
    \label{eq:app:prior_c}\\
    p(\bz\mid c) &= \mathcal{N}\!\left(\bmu_c,\,
    \mathrm{diag}(\bsigma_c^2)\right),
    \label{eq:app:prior_z}\\
    p_\omega(\bPhi\mid\bz) &= \mathcal{N}\!\left(
    \mu_\omega(\bz),\,\sigma^2 I\right),
    \label{eq:app:likelihood}
\end{align}
where $\mu_\omega$ is a neural network decoder with parameters $\omega$,
and $\{\bmu_c,\bsigma_c\}_{c=1}^C$ are learnable cluster-conditional 
prior parameters. The uniform cluster prior \eqref{eq:app:prior_c} 
reflects the absence of any server-side preference over clusters before 
pre-training.

\paragraph{Approximate posterior.}
Exact inference over $(c,\bz)$ is intractable. We introduce a 
mean-field approximate posterior that factorizes as:
\begin{equation}
    q_\phi(\bz,c\mid\bPhi)
    \;=\; q_\phi(c\mid\bPhi)\,q_\phi(\bz\mid\bPhi,c),
    \label{eq:app:posterior}
\end{equation}
with
\begin{align}
    q_\phi(c\mid\bPhi) &= \mathrm{Cat}\!\left(
    \mathrm{softmax}(f_\phi(\bPhi))\right),
    \label{eq:app:posterior_c}\\
    q_\phi(\bz\mid\bPhi,c) &= \mathcal{N}\!\left(
    \mu_\phi(\bPhi,c),\,\mathrm{diag}(\sigma_\phi^2(\bPhi,c))\right),
    \label{eq:app:posterior_z}
\end{align}
where $f_\phi:\R^P\to\R^C$ is a softmax classifier and 
$(\mu_\phi,\sigma_\phi^2)$ are cluster-conditional Gaussian encoders, 
all parameterized by $\phi$.

\paragraph{ELBO derivation.}
The evidence lower bound on $\log p_\omega(\bPhi)$ is obtained by 
introducing the approximate posterior \eqref{eq:app:posterior} via 
Jensen's inequality:
\begin{align}
    \log p_\omega(\bPhi)
    &\;=\; \log \sum_{c}\int p(\bPhi,\bz,c)\,\mathrm{d}\bz
    \notag\\
    &\;=\; \log \sum_{c}\int 
    q_\phi(\bz,c\mid\bPhi)\,
    \frac{p(\bPhi,\bz,c)}{q_\phi(\bz,c\mid\bPhi)}
    \,\mathrm{d}\bz
    \notag\\
    &\;\geq\; \sum_{c}\int q_\phi(\bz,c\mid\bPhi)\,
    \log\frac{p(\bPhi,\bz,c)}{q_\phi(\bz,c\mid\bPhi)}
    \,\mathrm{d}\bz
    \notag\\
    &\;=:\; \calL_{\mathrm{ELBO}}(\phi,\omega;\bPhi).
    \label{eq:app:elbo_def}
\end{align}
Expanding $\log p(\bPhi,\bz,c) = \log p_\omega(\bPhi\mid\bz) 
+ \log p(\bz\mid c) + \log p(c)$ and using the factorization 
of \eqref{eq:app:posterior}:
\begin{align}
    \calL_{\mathrm{ELBO}}
    &\;=\; \E_{q_\phi(\bz,c\mid\bPhi)}\!\left[
    \log p_\omega(\bPhi\mid\bz)\right]
    \notag\\
    &\quad -\sum_{c=1}^C q_\phi(c\mid\bPhi)\,
    \KL{q_\phi(\bz\mid\bPhi,c)}{p(\bz\mid c)}
    \notag\\
    &\quad - \KL{q_\phi(c\mid\bPhi)}{p(c)}.
    \label{eq:app:elbo_expanded}
\end{align}
The three terms are, respectively: the expected reconstruction likelihood,
the per-cluster KL regularization on the continuous latent code, and the 
KL regularization on the discrete cluster posterior toward the uniform 
prior.

\paragraph{Supervised alignment term.}
During pre-training, each synthetic client comes with a known target 
mixture $\mathbf{y}_{\mathrm{target}}\in\Delta^{C-1}$. We augment the 
ELBO with a supervised term that aligns the categorical posterior to this 
target:
\begin{equation}
    \calL_{\mathrm{total}}
    \;=\; \calL_{\mathrm{ELBO}}
    \;-\; \lambda_{\mathrm{sup}}\,
    \KL{q_\phi(c\mid\bPhi)}{\mathbf{y}_{\mathrm{target}}}.
    \label{eq:app:total_loss}
\end{equation}
This term serves two purposes. First, it shapes the latent space to be 
semantically consistent with the simplex topology: without it, the 
discrete posterior can permute cluster indices arbitrarily, breaking the 
correspondence between $c$ and the semantic distributions 
$\{\bP_c\}_{c=1}^C$. Second, it prevents posterior collapse: by forcing 
$q_\phi(c\mid\bPhi)$ to assign significant mass to specific clusters, 
the corresponding Gaussian encoders $q_\phi(\bz\mid\bPhi,c)$ must encode 
genuine reconstruction information rather than collapsing to the prior. 
The weight $\lambda_{\mathrm{sup}}>0$ trades off unsupervised 
reconstruction fidelity against supervised cluster alignment; we treat 
it as a hyperparameter and find results robust to its value in the range 
$[0.1, 1.0]$ (Appendix \ref{app:wst_sensitivity}).

\paragraph{Inference.}
At inference on a real client, no target mixture is available. The 
cluster-weight vector is read off directly from the frozen encoder:
\begin{equation}
    \bw_k 
    \;=\; q_{\phi_\star}(\,\cdot\,\mid\,\bPhi_k)
    \;=\; \mathrm{softmax}(f_{\phi_\star}(\bPhi_k))
    \;\in\; \Delta^{C-1},
    \label{eq:app:inference}
\end{equation}
requiring only a single forward pass through $f_{\phi_\star}$ with no 
federation round and no cluster-model evaluation.
%
%
\section{\ripple{} Pseudocode}\label{sec:app_pseudocode}
\begin{algorithm}[h]
\caption{\ripple{}: Pre-Training and Federated Loop}
\label{alg:ripple}
\begin{algorithmic}[1]
\REQUIRE Auxiliary dataset $\calD_{\mathrm{aux}}$; client datasets
         $\{\calD_k\}_{k=1}^{K}$; clusters $C$; WST parameters $J, L$;
         rank $M$; pre-training steps $T_{\mathrm{pre}}$; rounds $T$;
         clients per round $S$; local steps $E$;
         learning rates $\eta_c, \eta_s$; supervision weight
         $\lambda_{\mathrm{sup}}$; $[\alpha_{\min}, \alpha_{\max}]$.
\STATE \textit{// --- Phase 1: Server-side VAE pre-training ---}
\STATE Initialize GMM-VAE parameters $(\omega, \phi)$.
\FOR{$t = 1$ \TO $T_{\mathrm{pre}}$}
    \STATE Sample $\alpha^{(t)} \sim \mathrm{Uniform}(\alpha_{\min},
           \alpha_{\max})$,\;
           $\mathbf{y}^{(t)} \sim \mathrm{Dirichlet}(\alpha^{(t)}\mathbf{1}_C)$.
    \STATE Sub-sample $\calD_{\mathrm{syn}}^{(t)} \subset \calD_{\mathrm{aux}}$
           with class proportions $\mathbf{y}^{(t)}$.
    \STATE Compute $\hat{\bx}^{(t)}$ via \eqref{eq:method:prototype},
           then $\bPhi^{(t)} = S_J(\hat{\bx}^{(t)})$.
    \STATE Update $(\omega, \phi) \leftarrow \mathrm{Optimize}\,
           \nabla\calL_{\mathrm{total}}(\omega,\phi;\bPhi^{(t)},
           \mathbf{y}^{(t)})$.
\ENDFOR
\STATE Freeze $(\omega_\star, \phi_\star)$.
\STATE \textit{// --- Phase 2: One-shot cluster assignment ---}
\FOR{each client $k \in [K]$ \textbf{in parallel}}
    \STATE Compute $\hat{\bx}_k$ via \eqref{eq:method:prototype} and
           send $\bPhi_k = S_J(\hat{\bx}_k)$ to server.
\ENDFOR
\STATE Server computes $\bw_k = q_{\phi_\star}(\,\cdot\,|\,\bPhi_k)$
       for all $k \in [K]$.
\STATE Initialize cluster models
       $\Theta^{(0)} = \{\theta_1^{(0)},\dots,\theta_C^{(0)}\}$.
\STATE \textit{// --- Phase 3: Federated training loop ---}
\FOR{round $t = 0, \dots, T-1$}
    \STATE Sample $\calS_t \subseteq [K]$, $|\calS_t| = S$
           (stratified or uniform).
    \FOR{each client $k \in \calS_t$ \textbf{in parallel}}
        \STATE Receive $\theta_k^{\mathrm{init}} =
               \sum_{c=1}^{C} w_{k,c}\,\theta_c^{(t)}$ from server.
        \STATE Run $E$ local SGD steps; return
               $\Delta\theta_k = \theta_k^{\mathrm{new}} -
               \theta_k^{\mathrm{init}}$.
    \ENDFOR
    \FOR{$c = 1, \dots, C$}
        \STATE $\theta_c^{(t+1)} \leftarrow \theta_c^{(t)} + \eta_s\,
               \dfrac{\sum_{k \in \calS_t} w_{k,c}\,\Delta\theta_k}
                     {\sum_{k \in \calS_t} w_{k,c} + \varepsilon}$.
    \ENDFOR
\ENDFOR
\RETURN $\Theta^{(T)}$, assignment map $k \mapsto \bw_k$.
\end{algorithmic}
\end{algorithm}
\section{Theoretical Analysis: Detailed Proofs}
\label{app:theory}

This appendix provides complete proofs for the results in
Section \ref{sec:theory}.  Theorem and corollary statements are repeated
verbatim for self-containment; lemmas used internally are stated in the
order they are needed.  Constants written generically as $C, C'$ may change
from line to line; constants depending only on the wavelet $\psi$ or the
lowpass filter $\phi$ are denoted $C_\psi, C_\phi, C_1, C_2$.

\subsection{Notation Recap and Assumptions}
\label{app:theory:notation}

We restate the three assumptions for ease of reference.

\medskip
\noindent\textbf{Assumption \ref{ass:data} (Data generation).}\;\;
\emph{Each client $k$ observes $N_k$ i.i.d.\ samples
$\bx_i = \bu_i(\,\cdot - v - \tau(\cdot)\,)$, where
$\bu_i \stackrel{\text{i.i.d.}}{\sim} \bP_k^*$ is sub-Gaussian with variance proxy
$\sigma^2$, $v\in\R^D$ is a fixed translation with $|v|\leq v_{\max}$, and
$\tau:\R^D\to\R^D$ is a $C^2$ diffeomorphism with
$\|\nabla\tau\|_\infty\leq\epsilon_{\mathrm{dom}}$.  The covariance
$\bSigma_k^*$ has eigengap $\Delta\lambda > 0$ at rank $M$.}

\medskip
\noindent\textbf{Assumption \ref{ass:loss} (Loss regularity).}\;\;
\emph{$|F_k(\theta)|\leq M_F$ for every $k,\theta$, and each cluster
objective $\calL_c$ is $L$-smooth.}

\medskip
\noindent\textbf{Assumption \ref{ass:vae} (VAE).}\;\;
\emph{(i) $q_\phi(\,\cdot\,|\,\bPhi_k^*)=\bpi_k^*$ for every $k$;
(ii) $\|q_\phi(\,\cdot\,|\,\bPhi)-q_\phi(\,\cdot\,|\,\bPhi')\|_1
\leq 2 L_\phi\|\bPhi-\bPhi'\|_2$.}

\subsection{Auxiliary Results}
\label{app:theory:auxiliary}

We open with two propositions used in the main paper that do not
participate in the chain of error bounds.

\begin{proposition}[Full simplex coverage by Dirichlet hallucination]
\label{prop:coverage}
Let $\alpha^{(t)}\sim\mathrm{Uniform}(\alpha_{\min},\alpha_{\max})$ and
$\mathbf{y}^{(t)}\sim\mathrm{Dirichlet}(\alpha^{(t)}\!\cdot\!\mathbf{1}_C)$
be sampled as in Algorithm \ref{alg:ripple}.  Then for every
$\mathbf{y}^*\in\mathrm{int}(\Delta^{C-1})$ and every $\varepsilon>0$,
\begin{equation*}
    \bP\bigl(\|\mathbf{y}^{(t)}-\mathbf{y}^*\|_1<\varepsilon\bigr) \;>\; 0.
\end{equation*}
\end{proposition}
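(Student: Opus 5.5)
The plan is to condition on $\alpha^{(t)}$ and show that the Dirichlet density is strictly positive on the open simplex, so that every open neighbourhood of an interior point gets positive mass. I assume $0<\alpha_{\min}<\alpha_{\max}$ and $C\ge 2$; the case $C=1$ is trivial, since then $\Delta^{0}$ is a single point. By the law of total probability,
\begin{equation*}
\bP\bigl(\|\mathbf{y}^{(t)}-\mathbf{y}^*\|_1<\varepsilon\bigr)=\frac{1}{\alpha_{\max}-\alpha_{\min}}\int_{\alpha_{\min}}^{\alpha_{\max}} \bP_\alpha\bigl(B_\varepsilon(\mathbf{y}^*)\bigr)\,\mathrm{d}\alpha,
\end{equation*}
where $\bP_\alpha$ is the law of $\mathrm{Dirichlet}(\alpha\mathbf{1}_C)$ and $B_\varepsilon(\mathbf{y}^*)$ is the $\ell_1$-ball intersected with $\Delta^{C-1}$. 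It therefore suffices to show that $\bP_\alpha(B_\varepsilon(\mathbf{y}^*))>0$ for every $\alpha$ in the interval. The integral of a strictly positive measurable function over a set of positive Lebesgue measure is then positive. Measurability in $\alpha$ follows because the integrand is an integral of a jointly continuous density.

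Next, fix $\alpha>0$. I identify $\Delta^{C-1}$ with the open set
\begin{equation*}
U=\Bigl\{(y_1,\dots,y_{C-1}): y_c>0,\ \textstyle\sum_{c<C} y_c<1\Bigr\}\subset\R^{C-1},
\end{equation*}
setting $y_C=1-\sum_{c<C}y_c$. On $U$ the Dirichlet law has Lebesgue density
\begin{equation*}
f_\alpha(\mathbf{y})=\frac{\Gamma(C\alpha)}{\Gamma(\alpha)^C}\prod_{c=1}^C y_c^{\alpha-1},
\end{equation*}
and this density is strictly positive and continuous on $U$. Because $\mathbf{y}^*$ is interior, its projection lies in $U$. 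The map $\mathbf{y}\mapsto\|\mathbf{y}-\mathbf{y}^*\|_1$ is continuous in the first $C-1$ coordinates, since the last coordinate is affine in them. Hence there is a radius $r>0$ such that the Euclidean ball $B_r$ around the projection of $\mathbf{y}^*$ lies in $U\cap B_\varepsilon(\mathbf{y}^*)$. Concretely, $r=\min\{\varepsilon/(2(C-1)),\ \min_c y^*_c/(2C)\}$ works. It follows that $\bP_\alpha(B_\varepsilon)\ge\int_{B_r}f_\alpha>0$, because $f_\alpha$ is positive on a set of positive $(C-1)$-dimensional Lebesgue measure.

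Two points are mildly delicate, and neither is a real obstacle. The first is choosing $r$ so the ball stays inside the open simplex. The second is that the density blows up at the boundary when $\alpha<1$, which is irrelevant because we work only in the interior. For a more quantitative version, $\prod y_c^{\alpha-1}$ is bounded below on $B_r$ by a constant depending continuously on $\alpha$, which gives a uniform positive lower bound over $[\alpha_{\min},\alpha_{\max}]$. The only caveat is that the statement concerns the Dirichlet target $\mathbf{y}^{(t)}$ itself, not the realized class proportions of $\calD_{\mathrm{syn}}^{(t)}$, so no rounding argument is needed.
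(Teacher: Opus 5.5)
Your proof is correct and follows essentially the same route as the paper: the $\mathrm{Dirichlet}(\alpha\mathbf{1}_C)$ density is strictly positive on $\mathrm{int}(\Delta^{C-1})$ for every $\alpha>0$, and averaging over $\alpha\sim\mathrm{Uniform}(\alpha_{\min},\alpha_{\max})$ preserves this positivity. The differences are cosmetic: you average the ball probabilities over $\alpha$ instead of first forming the mixture density (a Fubini swap), and you explicitly exhibit a Euclidean ball lying inside both the interior and the $\ell_1$-neighbourhood, a step the paper leaves implicit.
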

\begin{proof}
Fix $\alpha>0$.  The Dirichlet$(\alpha\mathbf{1}_C)$ density is
$f_\alpha(\mathbf{y}) = \Gamma(C\alpha)\,\Gamma(\alpha)^{-C}
\prod_{c=1}^C y_c^{\alpha-1}$, which is strictly positive on
$\mathrm{int}(\Delta^{C-1})$ for every $\alpha>0$.  The marginal
density of $\mathbf{y}^{(t)}$ under the mixture
$\alpha^{(t)}\sim\mathrm{Uniform}(\alpha_{\min},\alpha_{\max})$ is
\begin{equation*}
    p(\mathbf{y})
    = \frac{1}{\alpha_{\max}-\alpha_{\min}}
      \int_{\alpha_{\min}}^{\alpha_{\max}}
      f_\alpha(\mathbf{y})\,\mathrm{d}\alpha,
\end{equation*}
which is strictly positive on $\mathrm{int}(\Delta^{C-1})$.  Hence for any
open ball $B(\mathbf{y}^*,\varepsilon) \cap \mathrm{int}(\Delta^{C-1})$,
the integral $\int_B p(\mathbf{y})\,\mathrm{d}\mathbf{y} > 0$.
\end{proof}

\begin{proposition}[$\sqrt{\JS{\cdot}{\cdot}}$ is a metric on $\Delta^{C-1}$]
\label{prop:jsd_metric}
The map
$d(\bw_i,\bw_j) = \sqrt{\JS{\bw_i}{\bw_j}}$ is a metric on $\Delta^{C-1}$.
\end{proposition}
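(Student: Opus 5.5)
The plan is to reduce the claim to a statement about Hilbert space embeddings. We show that $\sqrt{\JS{\cdot}{\cdot}}$ is the restriction to $\Delta^{C-1}$ of a metric of the form $\|\Psi(\bw_i)-\Psi(\bw_j)\|_{\mathcal H}$. Once that is in place, the triangle inequality is inherited from the Hilbert norm. The elementary axioms are dispatched first, directly from properties of the KL divergence.

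\emph{Symmetry} is immediate from the definition in \eqref{eq:jsd_dist}, since the two KL terms are swapped by $i\leftrightarrow j$. \emph{Nonnegativity} follows from Gibbs' inequality applied to each KL term. For \emph{identity of indiscernibles}, write $\bm m=(\bw_i+\bw_j)/2$. By Gibbs' inequality, $D_{\mathrm{KL}}(\bw_i\|\bm m)=0$ forces $\bw_i=\bm m$, and this gives $\bw_i=\bw_j$. The KL terms are always finite, because $\bm m$ dominates both $\bw_i$ and $\bw_j$, with $\bm m_c\ge \tfrac12 w_{i,c}$.

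The main obstacle is the \emph{triangle inequality}. The route I would take is negative definiteness. The JS divergence is a sum over coordinates $c$ of the scalar function
\[
g(a,b)=\tfrac12 a\log\tfrac{2a}{a+b}+\tfrac12 b\log\tfrac{2b}{a+b}.
\]
It therefore suffices to show that $g$ is a conditionally negative definite kernel on $[0,\infty)$. Schoenberg's theorem then supplies, for each $c$, a Hilbert space embedding $\psi$ with $g(a,b)=\|\psi(a)-\psi(b)\|^2$. Summing over $c$ in the orthogonal direct sum yields $\JS{\bw_i}{\bw_j}=\|\Psi(\bw_i)-\Psi(\bw_j)\|^2$, and the triangle inequality follows.

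To prove that $g$ is conditionally negative definite, I would use the integral representation
\[
g(a,b)=\int_0^\infty \bigl|a^{\frac12+it}-b^{\frac12+it}\bigr|^2\,\mu(dt)
\]
for an explicit positive measure $\mu$ (Fuglede--Topsøe; Endres--Schindelin). I would verify this identity either from the Fourier/Mellin transform of the function $s\mapsto s\log s$ terms or by differentiating in $a$ and $b$ and matching. The right-hand side is then manifestly a squared distance in $L^2(\mu;\mathbb C)$, via the embedding $a\mapsto (a^{1/2+it})_t$, so no separate appeal to Schoenberg is needed. If I cannot verify the integral identity cleanly, the fallback is to cite Endres and Schindelin (2003), who prove exactly that $\sqrt{D_{\mathrm{JS}}}$ is a metric on probability vectors. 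The paper's other auxiliary results rely on citations in the same way, so this would be consistent with its practice.
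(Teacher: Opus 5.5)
Your proposal is correct and follows essentially the paper's route. The elementary axioms come from Gibbs' inequality, and the triangle inequality is inherited from a Hilbert-space embedding $\Psi$ with $\|\Psi(\bw_i)-\Psi(\bw_j)\|^2=\JS{\bw_i}{\bw_j}$; the paper simply cites this embedding, whereas your coordinatewise reduction plus the Fuglede--Tops\o{}e representation actually constructs it (the representation holds with $\mu(dt)=dt/\bigl(\cosh(\pi t)(1+4t^2)\bigr)$ on $[0,\infty)$), so your version is more self-contained.
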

\begin{proof}
Non-negativity, identity of indiscernibles, and symmetry follow from the
defining identity
$\JS{P}{Q} = \tfrac{1}{2}\KL{P}{(P+Q)/2} + \tfrac{1}{2}\KL{Q}{(P+Q)/2}$
and the corresponding properties of $\KL{\cdot}{\cdot}$.  For the triangle
inequality, \citet{endres2003new} establish a Hilbert-space embedding
$\Psi:\Delta^{C-1}\to\ell^2$ such that
$\|\Psi(\bw_i)-\Psi(\bw_j)\|_2^2 = \JS{\bw_i}{\bw_j}$, so $\sqrt{\JS{\cdot}{\cdot}}$
inherits the triangle inequality from the $\ell^2$ norm.
\end{proof}

\subsection{Spectral Concentration of the Prototype}
\label{app:theory:spectral}

We bound $\|\hat{\bx}_k - \bx_k^*\|_2$ via three intermediate lemmas:
mean concentration, covariance concentration, and a Davis-Kahan transfer.

\begin{lemma}[Mean concentration in $\ell^2$]
\label{lem:mean_conc}
Under Assumption \ref{ass:data}, with probability at least $1-\delta/3$,
\begin{equation}
    \|\hat{\bmu}_k - \bmu_k^*\|_2
    \;\leq\;
    C_\mu\,\sigma\sqrt{\frac{D + \log(3/\delta)}{N_k}},
    \label{eq:app:mean_conc}
\end{equation}
for an absolute constant $C_\mu>0$.
\end{lemma}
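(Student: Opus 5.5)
The plan is to prove the bound in \eqref{eq:app:mean_conc} by the standard $\varepsilon$-net argument for sub-Gaussian random vectors. First I center the samples. Writing $\bx_i - \bmu_k^*$ for the centered samples, each is a mean-zero sub-Gaussian vector in $\R^D$ with variance proxy $\sigma^2$. This means that for every unit vector $\bu\in\mathbb{S}^{D-1}$ the scalar $\langle \bu, \bx_i-\bmu_k^*\rangle$ is $\sigma$-sub-Gaussian.

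The translation $v$ and the diffeomorphism $\tau$ of Assumption \ref{ass:data} are fixed, deterministic maps applied to every sample. They therefore shift the population mean but do not affect the centered fluctuations, provided the sub-Gaussian assumption is read as holding for the observed $\bx_i$ about their own mean $\bmu_k^*$. I will adopt this reading and state it explicitly. Then $\hat{\bmu}_k-\bmu_k^* = N_k^{-1}\sum_i(\bx_i-\bmu_k^*)$ is an average of independent mean-zero sub-Gaussian vectors. Hence for each fixed $\bu$, the scalar $\langle\bu,\hat{\bmu}_k-\bmu_k^*\rangle$ is sub-Gaussian with proxy $\sigma^2/N_k$, by additivity of sub-Gaussian proxies under independence. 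This gives the tail bound $\bP(|\langle\bu,\hat{\bmu}_k-\bmu_k^*\rangle|>t)\leq 2\exp(-N_k t^2/(2\sigma^2))$.

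Next I pass from fixed directions to the Euclidean norm. Take a $1/2$-net $\mathcal{N}$ of $\mathbb{S}^{D-1}$ with $|\mathcal{N}|\leq 5^D$. The standard approximation argument gives $\|\bz\|_2\leq 2\max_{\bu\in\mathcal{N}}\langle \bu,\bz\rangle$. A union bound over $\mathcal{N}$ then yields
\begin{equation*}
\bP\bigl(\|\hat{\bmu}_k-\bmu_k^*\|_2>2t\bigr)\leq 2\cdot 5^D\exp\!\bigl(-N_k t^2/(2\sigma^2)\bigr).
\end{equation*}
I set the right-hand side equal to $\delta/3$ and solve for $t$, which gives $t=\sigma\sqrt{2(D\log 5+\log(6/\delta))/N_k}$. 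Absorbing $\log 5$, $\log 2$ and the factor $2$ into an absolute constant produces $C_\mu\sigma\sqrt{(D+\log(3/\delta))/N_k}$, using $\log(6/\delta)\leq 1+\log(3/\delta)\leq D+\log(3/\delta)$ for $D\ge1$.

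The only real obstacle is interpretive rather than technical: the deformations in Assumption \ref{ass:data} act on the signals, so I must make sure they do not destroy the sub-Gaussian parameter. The cleanest fix is the convention above: the sub-Gaussianity of the observed $\bx_i$ is part of the assumption, and $\bmu_k^*$ is defined as their population mean. Alternatively, if sub-Gaussianity is imposed on the $\bu_i$, I would note that $\bu\mapsto\bu(\cdot-v-\tau(\cdot))$ is linear in $\bu$. In the discretized setting it is bounded with operator norm at most $(1-\epsilon_{\mathrm{dom}})^{-1/2}$ by a change of variables. Consequently the proxy is inflated by at most a constant, which is absorbed into $C_\mu$ provided $\epsilon_{\mathrm{dom}}\le 1/2$.
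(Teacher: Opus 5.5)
Your proposal is correct and is essentially the paper's proof. The paper simply asserts that the centred samples $\bx-\bmu_k^*$ are sub-Gaussian with proxy $\sigma^2$ and cites standard sub-Gaussian concentration in $\R^D$; your $1/2$-net plus union bound is that standard argument written out, and it needs only directional sub-Gaussianity, whereas the theorem cited from \citet{vershynin2018high} is stated for vectors with independent coordinates. The one inaccuracy is in your optional aside, and your main argument (same convention as the paper) does not depend on it: a warp is a linear map on signals, so it changes the fluctuations and not just the mean, and the change of variables gives operator norm $(1-\epsilon_{\mathrm{dom}})^{-d/2}$ on a $d$-dimensional spatial domain, i.e.\ $(1-\epsilon_{\mathrm{dom}})^{-1}$ for images and not an absolute constant if the paper's $\tau:\R^D\to\R^D$ is taken literally.
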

\begin{proof}
By Assumption \ref{ass:data}, the centred sample
$\bx-\bmu_k^*$ is sub-Gaussian with proxy $\sigma^2$.  Standard
sub-Gaussian concentration in $\R^D$
\citep[Theorem 3.1.1]{vershynin2018high} gives, with probability
$\geq 1-\delta/3$,
\begin{equation*}
    \|\hat{\bmu}_k - \bmu_k^*\|_2
    \;\leq\;
    C_\mu\sigma\sqrt{\frac{D + \log(3/\delta)}{N_k}}.
\end{equation*}
\end{proof}

\begin{lemma}[Covariance concentration]
\label{lem:cov_conc}
Under Assumption \ref{ass:data}, with probability at least $1-\delta/3$,
\begin{equation}
    \|\hat{\bSigma}_k - \bSigma_k^*\|_{\mathrm{op}}
    \;\leq\;
    C_\Sigma\,\sigma^2\!
    \left(\sqrt{\frac{D + \log(3/\delta)}{N_k}}
          \;+\; \frac{D + \log(3/\delta)}{N_k}\right),
    \label{eq:app:cov_conc}
\end{equation}
for an absolute constant $C_\Sigma>0$.
\end{lemma}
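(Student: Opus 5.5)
The plan is to reduce the covariance bound to the standard sub-Gaussian covariance estimation bound plus a correction for centring at $\hat{\bmu}_k$ instead of $\bmu_k^*$. Write the centred samples $\bz_i = \bx_i - \bmu_k^*$. Then decompose
\begin{equation*}
    \hat{\bSigma}_k - \bSigma_k^*
    \;=\;
    \Bigl(\tfrac{1}{N_k}\textstyle\sum_{i=1}^{N_k}\bz_i\bz_i^\top - \bSigma_k^*\Bigr)
    \;-\; (\hat{\bmu}_k-\bmu_k^*)(\hat{\bmu}_k-\bmu_k^*)^\top .
\end{equation*}
This is the exact algebraic identity for the $1/N_k$-normalised empirical covariance. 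The $1/(N_k-1)$ normalisation only adds a term of order $\|\bSigma_k^*\|_{\mathrm{op}}/N_k \lesssim \sigma^2/N_k$, which the second summand of \eqref{eq:app:cov_conc} absorbs.

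For the first term, I will invoke the standard covariance estimation theorem for sub-Gaussian vectors (e.g.\ \citet{vershynin2018high}, Theorem 4.7.1 / Exercise 4.7.3). The argument runs as follows:
\begin{itemize}
    \item Take a $1/4$-net $\mathcal{N}$ of the unit sphere $S^{D-1}$, with $|\mathcal{N}|\leq 9^D$.
    \item For each fixed $\bu\in\mathcal{N}$, the scalars $\langle \bz_i,\bu\rangle^2 - \bu^\top\bSigma_k^*\bu$ are centred sub-exponential with $\psi_1$-norm $\lesssim\sigma^2$.
    \item Bernstein's inequality then gives deviation at most $C\sigma^2(\sqrt{t/N_k}+t/N_k)$ with probability $1-2e^{-t}$.
    \item A union bound over $\mathcal{N}$ with $t = c(D+\log(6/\delta))$, followed by the usual net-to-sphere comparison for symmetric matrices (a factor of at most $2$), yields
\end{itemize}
\begin{equation*}
    \Bigl\|\tfrac{1}{N_k}\textstyle\sum_i\bz_i\bz_i^\top-\bSigma_k^*\Bigr\|_{\mathrm{op}}
    \leq C\sigma^2\Bigl(\sqrt{\tfrac{D+\log(6/\delta)}{N_k}}+\tfrac{D+\log(6/\delta)}{N_k}\Bigr)
\end{equation*}
with probability at least $1-\delta/6$.

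For the rank-one correction, its operator norm equals $\|\hat{\bmu}_k-\bmu_k^*\|_2^2$. Lemma~\ref{lem:mean_conc}, applied at confidence level $\delta/6$, bounds this by $C_\mu^2\sigma^2 (D+\log(6/\delta))/N_k$. That is of the second-order form and folds into the right-hand side of \eqref{eq:app:cov_conc}. A union bound over the two events gives total failure probability at most $\delta/3$. Finally, $\log(6/\delta)\leq 2\log(3/\delta)$ once $\delta\le 1$, so the bound converts to the stated $\log(3/\delta)$ form after enlarging $C_\Sigma$.

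The main obstacle is interpreting Assumption~\ref{ass:data} correctly. The observed $\bx_i$ are deformed versions $\bu_i(\cdot - v-\tau(\cdot))$ of $\bu_i\sim\bP_k^*$, so I must make sure that the $\bx_i$ themselves are sub-Gaussian with proxy $\sigma^2$ and that $\bSigma_k^*$ is their population covariance. The translation-plus-diffeomorphism warp is a fixed map, linear in the signal $\bu_i$, so $\bx_i = A\bu_i$ with $A$ a fixed linear operator. Sub-Gaussianity is therefore preserved, with proxy $\|A\|_{\mathrm{op}}^2\sigma^2$. I will adopt the reading, already implicit in the proof of Lemma~\ref{lem:mean_conc}, that $\sigma^2$ denotes the proxy of the observed samples and that $\bmu_k^*,\bSigma_k^*$ are their population moments. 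Under that reading the argument above goes through with absolute constants.
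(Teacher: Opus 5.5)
Your proposal is correct and follows the same route as the paper, whose proof simply invokes the standard sub-Gaussian covariance concentration bound \citep[Theorem 4.7.1]{vershynin2018high}; your $\varepsilon$-net plus Bernstein argument is the proof behind that citation. The extra pieces you supply fill in details the paper's one-line proof leaves implicit, without changing the approach: the rank-one correction $\|\hat{\bmu}_k-\bmu_k^*\|_2^2$ from centring at the empirical mean (controlled via Lemma~\ref{lem:mean_conc}), the $\delta/6$ bookkeeping, and the reading that $\sigma^2$ and $\bSigma_k^*$ refer to the observed (warped) samples.
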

\begin{proof}
This is the standard sub-Gaussian covariance concentration
\citep[Theorem 4.7.1]{vershynin2018high}.  In the regime
$N_k \gtrsim D$ that governs federated learning, the leading term
$\sqrt{(D+\log(1/\delta))/N_k}$ dominates and we absorb the second-order
correction into the constant when convenient.
\end{proof}

\begin{lemma}[Davis-Kahan transfer to the prototype]
\label{lem:davis_kahan}
Suppose Assumption \ref{ass:data} holds and that the event
$\|\hat{\bSigma}_k - \bSigma_k^*\|_{\mathrm{op}} \leq \Delta\lambda/2$
of Lemma \ref{lem:cov_conc} occurs.  Let
$\bV_M^* = [\bv_1^*,\dots,\bv_M^*]$ and
$\hat{\bV}_M = [\hat{\bv}_1,\dots,\hat{\bv}_M]$ be the population and
empirical top-$M$ eigenvector matrices.  Then there exists an orthogonal
matrix $\bO\in\R^{M\times M}$ such that
\begin{equation}
    \|\hat{\bV}_M\,\bO - \bV_M^*\|_F
    \;\leq\;
    \frac{2\sqrt{2M}\,\|\hat{\bSigma}_k - \bSigma_k^*\|_{\mathrm{op}}}
         {\Delta\lambda}.
    \label{eq:app:davis_kahan}
\end{equation}
\end{lemma}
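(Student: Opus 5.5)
The natural route is the Davis--Kahan $\sin\Theta$ theorem in its Yu--Wang--Samworth form. Combined with the elementary relation between principal angles and the Procrustes-aligned Frobenius distance, it gives exactly the constant $2\sqrt{2M}$. Set $E \coloneqq \hat{\bSigma}_k - \bSigma_k^*$ and write $\hat{\lambda}_j$, $\lambda_j$ for the ordered eigenvalues of $\hat{\bSigma}_k$ and $\bSigma_k^*$.

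\textbf{Step 1: the perturbed spectral gap survives.} By Weyl's inequality, $|\hat{\lambda}_j - \lambda_j| \le \|E\|_{\mathrm{op}} \le \Delta\lambda/2$ for every $j$. Hence the empirical top-$M$ eigenspace is well defined, and
\[
\hat{\lambda}_M - \lambda_{M+1} \;\ge\; \lambda_M - \lambda_{M+1} - \|E\|_{\mathrm{op}} \;\ge\; \Delta\lambda/2 .
\]
This is the only place the event $\|E\|_{\mathrm{op}}\le\Delta\lambda/2$ enters. It guarantees that the gap appearing in the denominator is comparable to $\Delta\lambda$.

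\textbf{Step 2: $\sin\Theta$ bound.} Let $\Theta$ be the diagonal matrix of principal angles between $\mathrm{span}(\hat{\bV}_M)$ and $\mathrm{span}(\bV_M^*)$. Davis--Kahan, in the variant of Yu, Wang and Samworth that only requires the population gap, gives
\[
\|\sin\Theta\|_F \;\le\; \frac{2\min\bigl(\sqrt{M}\,\|E\|_{\mathrm{op}},\,\|E\|_F\bigr)}{\Delta\lambda} \;\le\; \frac{2\sqrt{M}\,\|E\|_{\mathrm{op}}}{\Delta\lambda}.
\]
Alternatively one can apply the classical $\sin\Theta$ theorem with the gap from Step 1. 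For the residual $R = E\bV_M^*$, the bound $\|R\|_F\le\sqrt{M}\|E\|_{\mathrm{op}}$ holds because $\bV_M^*$ has $M$ orthonormal columns. Together with the gap $\Delta\lambda/2$, this yields the same $2\sqrt{M}\|E\|_{\mathrm{op}}/\Delta\lambda$.

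\textbf{Step 3: from angles to aligned bases.} Write the SVD $\hat{\bV}_M^\top \bV_M^* = A\cos\Theta\, B^\top$ and choose $\bO = AB^\top$, the Procrustes solution. Then
\[
\|\hat{\bV}_M\bO - \bV_M^*\|_F^2 \;=\; 2M - 2\,\mathrm{tr}(\cos\Theta) \;=\; 2\sum_{i}(1-\cos\theta_i).
\]
Since $1-\cos\theta \le \sin^2\theta$ for $\theta\in[0,\pi/2]$, this is at most $2\|\sin\Theta\|_F^2$. Taking square roots and inserting Step 2 gives
\[
\|\hat{\bV}_M\bO - \bV_M^*\|_F \;\le\; \frac{2\sqrt{2M}\,\|E\|_{\mathrm{op}}}{\Delta\lambda},
\]
which is the claim.

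\textbf{Main obstacle.} The main obstacle is bookkeeping the constants so that the final factor is exactly $2\sqrt{2M}$. The factor $2$ comes from the Davis--Kahan variant, or equivalently from the halved gap in Step 1. The factor $\sqrt{M}$ comes from the operator-to-Frobenius conversion. The factor $\sqrt{2}$ comes from Procrustes alignment. I would cite Yu--Wang--Samworth directly, since their Theorem 2 already packages the factor $2^{3/2}\sqrt{M}$ in this form.

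A secondary subtlety is that the eigenvectors are unique only up to rotation within the $M$-dimensional block. This is why $\bO$ is needed, and it is harmless for the statement as posed.
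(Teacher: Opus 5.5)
Your proposal is correct and follows the same route as the paper: a Davis--Kahan $\sin\Theta$ bound in the Yu--Wang--Samworth form followed by Procrustes alignment of the eigenbases. Your split of the constant ($2\sqrt{M}$ from the $\sin\Theta$ step, $\sqrt{2}$ from $1-\cos\theta\le\sin^2\theta$) matches that reference exactly, whereas the paper's sketch splits it as $\sqrt{2M}$ and $2$; one cosmetic slip in your classical alternative is that the residual $E\bV_M^*$ pairs with the gap $\lambda_M-\hat{\lambda}_{M+1}$ rather than $\hat{\lambda}_M-\lambda_{M+1}$, but Weyl bounds both below by $\Delta\lambda/2$, so the conclusion is unaffected.
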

\begin{proof}
The Davis-Kahan $\sin\Theta$ theorem
[Theorem 2 with the operator-Frobenius conversion] \citep{yu2015useful}
gives, under the eigengap condition,
$\|\sin\Theta(\hat{\bV}_M, \bV_M^*)\|_F \leq
\sqrt{2M}\,\|\hat{\bSigma}_k - \bSigma_k^*\|_{\mathrm{op}} / \Delta\lambda$.
Choosing $\bO$ to be the orthogonal Procrustes alignment converts
$\sin\Theta$-distance into a Frobenius-norm bound on the matrices,
costing an additional factor of $2$.
\end{proof}

We now combine the three lemmas into a single bound on the prototype error.

\begin{lemma}[Prototype concentration]
\label{lem:prototype_error}
Under Assumption \ref{ass:data}, with probability at least $1-\delta$,
\begin{equation}
    \|\hat{\bx}_k - \bx_k^*\|_2
    \;\leq\;
    \epsilon_{\mathrm{proto}}(\delta)
    \;\coloneqq\;
    C_\mu\sigma\sqrt{\frac{D + \log(3/\delta)}{N_k}}
    \;+\;
    \frac{C_\Sigma C_{\mathrm{DK}}\,\sigma^2}{\Delta\lambda}
    \sqrt{\frac{M\bigl(D + \log(3/\delta)\bigr)}{N_k}},
    \label{eq:app:proto_total}
\end{equation}
where $C_\mu, C_\Sigma, C_{\mathrm{DK}}>0$ are the absolute constants
from Lemmas \ref{lem:mean_conc}--\ref{lem:davis_kahan}.
\end{lemma}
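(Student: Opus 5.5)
The plan is to decompose the prototype error along the definition \eqref{eq:method:prototype}. Write the population prototype as $\bx_k^* = \bmu_k^* + \sum_{j\le M} \beta_j^* \bv_j^*$ with $\beta_j^* = \lambda_j^*/\sum_{l\le M}\lambda_l^*$, and the empirical one analogously with hats. The triangle inequality then gives
\begin{equation*}
\|\hat{\bx}_k - \bx_k^*\|_2 \;\le\; \|\hat{\bmu}_k-\bmu_k^*\|_2 \;+\; \Bigl\|\hat{\bV}_M\hat{\boldsymbol\beta} - \bV_M^*\boldsymbol\beta^*\Bigr\|_2 .
\end{equation*}
The first term is handled directly by Lemma \ref{lem:mean_conc}, which yields the first summand of $\epsilon_{\mathrm{proto}}(\delta)$ on an event of probability at least $1-\delta/3$.

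For the second term, I work on the intersection of the events of Lemmas \ref{lem:mean_conc} and \ref{lem:cov_conc}. A union bound gives probability at least $1-2\delta/3\ge 1-\delta$; the remaining $\delta/3$ can be kept as slack or spent on the eigengap event. On this event I additionally need $\|\hat{\bSigma}_k-\bSigma_k^*\|_{\mathrm{op}}\le\Delta\lambda/2$ in order to invoke Lemma \ref{lem:davis_kahan}. If this fails, I use the standard dichotomy. The failure forces $\sigma^2\sqrt{(D+\log(3/\delta))/N_k}\gtrsim \Delta\lambda$, so the claimed right-hand side is at least a constant. The left-hand side is trivially bounded by $\|\hat{\bmu}_k-\bmu_k^*\|_2+2$, since each $\hat{\boldsymbol\beta}$ lies in the simplex and the eigenvectors are unit. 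Enlarging $C_{\mathrm{DK}}$ therefore covers this case.

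In the main case I insert the Procrustes rotation $\bO$ from Lemma \ref{lem:davis_kahan} and split
\begin{equation*}
\hat{\bV}_M\hat{\boldsymbol\beta}-\bV_M^*\boldsymbol\beta^* = (\hat{\bV}_M\bO-\bV_M^*)\bO^\top\hat{\boldsymbol\beta} + \bV_M^*(\bO^\top\hat{\boldsymbol\beta}-\boldsymbol\beta^*).
\end{equation*}
The first piece is at most $\|\hat{\bV}_M\bO-\bV_M^*\|_F\|\hat{\boldsymbol\beta}\|_2\le \|\hat{\bV}_M\bO-\bV_M^*\|_F$, because $\|\hat{\boldsymbol\beta}\|_2\le\|\hat{\boldsymbol\beta}\|_1=1$. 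Combining Lemma \ref{lem:davis_kahan} with Lemma \ref{lem:cov_conc} bounds this by $2\sqrt{2M}\,C_\Sigma\sigma^2\sqrt{(D+\log(3/\delta))/N_k}/\Delta\lambda$, which has exactly the $\sqrt{M}$ form of the second summand. The second-order term $(D+\log)/N_k$ is absorbed into the constant in the regime $N_k\gtrsim D$, as remarked after Lemma \ref{lem:cov_conc}.

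The main obstacle is the second piece, the weight-and-sign alignment $\|\bO^\top\hat{\boldsymbol\beta}-\boldsymbol\beta^*\|_2$. The prototype uses individual eigenvectors with specific weights, so it is not invariant under rotations within the top-$M$ subspace. Davis–Kahan only controls the subspace, and eigenvector signs are arbitrary. I would handle this in two steps. First, adopt the convention that the sign of each $\hat{\bv}_j$ is chosen to align with $\bv_j^*$, which is implicit in the well-definedness of $\bx_k^*$. Second, bound the weight perturbation using Weyl's inequality, $|\hat\lambda_j-\lambda_j^*|\le\|\hat{\bSigma}_k-\bSigma_k^*\|_{\mathrm{op}}$. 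Since $\boldsymbol\beta$ is a normalized ratio, this gives $\|\hat{\boldsymbol\beta}-\boldsymbol\beta^*\|_1\le 2M\|\hat{\bSigma}_k-\bSigma_k^*\|_{\mathrm{op}}/\sum_{l\le M}\lambda_l^*$. I would then use $\sum_{l\le M}\lambda_l^*\ge M\lambda_M^*\ge M\Delta\lambda$ to obtain a bound of order $\|\hat{\bSigma}_k-\bSigma_k^*\|_{\mathrm{op}}/\Delta\lambda$, which fits within the $C_{\mathrm{DK}}$ term.

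The delicate point is the mismatch between $\bO$ and the identity. When the top-$M$ eigenvalues are distinct with gaps comparable to $\Delta\lambda$, per-eigenvector Davis–Kahan (Yu et al.'s single-vector version) lets me take $\bO$ diagonal with signs. I expect to fold this into $C_{\mathrm{DK}}$, interpreting the assumed eigengap as applying to the leading spectrum. Adding the contributions then gives $\epsilon_{\mathrm{proto}}(\delta)$.
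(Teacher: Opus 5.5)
Your skeleton is the paper's: peel off the mean with Lemma~\ref{lem:mean_conc}, insert the Procrustes rotation $\bO$ of Lemma~\ref{lem:davis_kahan}, control the eigenvector part by Davis--Kahan and the weight part by Weyl, and union-bound. The difference is at exactly the step you flag, and there you are right and the paper is not. The paper disposes of the alignment term via $\|\bI-\bO^\top\|_2\le\|\hat{\bV}_M\bO-\bV_M^*\|_F$. That inequality already fails when $\hat{\bSigma}_k=\bSigma_k^*$ and the eigensolver returns $\hat{\bv}_1=-\bv_1^*$: then $\bO=\mathrm{diag}(-1,1,\dots,1)$, the right side is $0$ and the left side is $2$. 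It fails equally for any rotation inside a repeated eigenspace.

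Two of your other steps are also correct. Your dichotomy handles the event $\|\hat{\bSigma}_k-\bSigma_k^*\|_{\mathrm{op}}\le\Delta\lambda/2$, which the paper invokes without establishing. Your Weyl estimate $\|\hat{\boldsymbol{\beta}}-\boldsymbol{\beta}^*\|_1\le 2M\|\hat{\bSigma}_k-\bSigma_k^*\|_{\mathrm{op}}/\sum_{l\le M}\lambda_l^*\le 2\|\hat{\bSigma}_k-\bSigma_k^*\|_{\mathrm{op}}/\Delta\lambda$ is right. In the dichotomy you do not even need $N_k\gtrsim D$. Since $\Delta\lambda\le\lambda_1^*\lesssim\sigma^2$, the right-hand side is at least a constant multiple of $C_\Sigma C_{\mathrm{DK}}$ whenever $(D+\log(3/\delta))/N_k\ge 1$.

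What you should not do is present the fix as an \emph{interpretation} of Assumption~\ref{ass:data}. It is a strictly stronger hypothesis, and it cannot be avoided, because with only the rank-$M$ gap the lemma is false. Take $M=2$ and $\lambda_1^*=\lambda_2^*>\lambda_3^*$, with data rotationally symmetric in the top eigenplane. Then $\bx_k^*$ depends on an arbitrary choice of orthonormal basis of that plane, while the in-plane orientation of $(\hat{\bv}_1,\hat{\bv}_2)$ is uniformly distributed. Hence $\|\hat{\bx}_k-\bx_k^*\|_2$ is of order one with constant probability even as $\epsilon_{\mathrm{proto}}(\delta)\to0$. A small split $\gamma=\lambda_1^*-\lambda_2^*>0$ gives the same failure whenever $\|\hat{\bSigma}_k-\bSigma_k^*\|_{\mathrm{op}}\gg\gamma$.

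So state explicitly two added hypotheses:
(i) the sign convention $\langle\hat{\bv}_j,\bv_j^*\rangle\ge0$;
(ii) $\min_{1\le j\le M}(\lambda_j^*-\lambda_{j+1}^*)\ge c\,\Delta\lambda$, or redefine $\Delta\lambda$ as this minimum.

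Then take $\bO=\bI$. The single-eigenvector Davis--Kahan bound \citep{yu2015useful} gives $\|\hat{\bv}_j-\bv_j^*\|_2\le 2^{3/2}\|\hat{\bSigma}_k-\bSigma_k^*\|_{\mathrm{op}}/(c\,\Delta\lambda)$, with no precondition on the perturbation size. Hence $\|\hat{\bV}_M-\bV_M^*\|_F\le 2^{3/2}\sqrt{M}\,\|\hat{\bSigma}_k-\bSigma_k^*\|_{\mathrm{op}}/(c\,\Delta\lambda)$. Combined with your Weyl step, this yields $\epsilon_{\mathrm{proto}}(\delta)$ with $C_{\mathrm{DK}}$ depending on $c$. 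With (i) and (ii) in place your proof is complete; without them, neither your argument nor the paper's can be.
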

\begin{proof}
Decompose
$\hat{\bx}_k - \bx_k^* = (\hat{\bmu}_k - \bmu_k^*)
+ (\hat{\bV}_M\hat{\balpha}_k - \bV_M^*\balpha_k^*)$.
The first term is bounded by Lemma \ref{lem:mean_conc}.

For the second term, write
$\hat{\bV}_M\hat{\balpha}_k - \bV_M^*\balpha_k^*
= \hat{\bV}_M(\hat{\balpha}_k - \bO^\top\balpha_k^*)
+ (\hat{\bV}_M\bO - \bV_M^*)\bO^\top\balpha_k^*$,
where $\bO$ is the orthogonal alignment of Lemma \ref{lem:davis_kahan}.
Both summands are bounded in $\ell^2$.

The first summand: $\|\hat{\balpha}_k - \bO^\top\balpha_k^*\|_2 \leq
\|\hat{\balpha}_k - \balpha_k^*\|_2 + \|(\bI - \bO^\top)\balpha_k^*\|_2$.
The first piece is controlled by Weyl's eigenvalue perturbation theorem
\citep[Corollary III.2.6]{bhatia1997matrix}, which gives
$|\hat\lambda_j - \lambda_j^*| \leq \|\hat{\bSigma}_k - \bSigma_k^*\|_{\mathrm{op}}$
for every $j$, so the $\ell^1$-normalised eigenvalue weights satisfy
$\|\hat{\balpha}_k - \balpha_k^*\|_2 \leq C\,\|\hat{\bSigma}_k - \bSigma_k^*\|_{\mathrm{op}}/\bigl(\sum_l\lambda_l^*\bigr)$.
The second piece uses the orthogonal-rotation bound
$\|\bI - \bO^\top\|_2 \leq \|\hat{\bV}_M\bO - \bV_M^*\|_F$ from
\eqref{eq:app:davis_kahan} and $\|\balpha_k^*\|_2 \leq 1$ (since
$\balpha_k^*\in\Delta^{M-1}$).

The second summand:
$\|(\hat{\bV}_M\bO - \bV_M^*)\bO^\top\balpha_k^*\|_2
\leq \|\hat{\bV}_M\bO - \bV_M^*\|_F \cdot \|\balpha_k^*\|_2
\leq \|\hat{\bV}_M\bO - \bV_M^*\|_F$.

Both pieces of the second-summand bound are controlled by
$\|\hat{\bV}_M\bO - \bV_M^*\|_F$, hence by Lemma \ref{lem:davis_kahan}.
Combining the mean and subspace bounds via a union bound (and absorbing
multiplicative constants) yields \eqref{eq:app:proto_total}.
\end{proof}

\subsection{WST Stability Properties}
\label{app:theory:wst}

We restate and prove the three properties stated in
Section \ref{sec:bg:wst}.

\begin{lemma}[Non-expansiveness {\citep{mallat2012group}}]
\label{lem:wst_contraction}
For every $\bx, \bx'\in L^2(\R^D)$,
$\|S_J\bx - S_J\bx'\|_2 \leq \|\bx - \bx'\|_2$.
\end{lemma}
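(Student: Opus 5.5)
The statement to prove is Lemma~\ref{lem:wst_contraction}: $S_J$ is non-expansive on $L^2(\R^D)$. The plan is to follow Mallat's argument. First show that one layer of the scattering cascade is a non-expansive map on $L^2$ with values in $\ell^2$ of $L^2$. Then telescope this layer bound over the orders $0,1,2$ of the truncated representation $\bPhi$ in \eqref{eq:app:scatter_rep}.

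\textbf{Step 1: define the one-layer operator.} Set
\[
W f \;=\; \bigl(f\star\phi_J,\;\{|f\star\psi_\lambda|\}_\lambda\bigr).
\]
The standing assumption I will invoke is the Littlewood--Paley (frame) condition on the filterbank,
\[
|\hat\phi_J(\omega)|^2+\sum_\lambda|\hat\psi_\lambda(\omega)|^2\le 1 \quad\text{for all }\omega.
\]
This is the energy-conservation property cited in (P1) via \citet[Theorem 2.6]{mallat2012group}.

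By Plancherel, this condition gives
\[
\|f\star\phi_J\|_2^2+\sum_\lambda\|f\star\psi_\lambda\|_2^2\le\|f\|_2^2 .
\]
The pointwise inequality $\bigl||a|-|b|\bigr|\le|a-b|$ then upgrades this to
\[
\|f\star\phi_J-g\star\phi_J\|_2^2+\sum_\lambda\bigl\||f\star\psi_\lambda|-|g\star\psi_\lambda|\bigr\|_2^2\le\|f-g\|_2^2 .
\]
Applying each filter separately through Young's inequality is not enough here. It only bounds each branch individually, and summing over all the branches loses a factor equal to their number. The joint frequency-domain inequality is what is needed.

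\textbf{Step 2: iterate over the layers.} Let $U_m$ denote the collection of order-$m$ propagated signals $U[\lambda_1,\dots,\lambda_m]\bx$, with $U_0=\bx$. Apply Step~1 to each member of $U_m(\bx)$ and of $U_m(\bx')$ and sum over paths. This gives
\[
\|S_m\bx-S_m\bx'\|^2+\|U_{m+1}\bx-U_{m+1}\bx'\|^2\le\|U_m\bx-U_m\bx'\|^2 .
\]
Summing over $m=0,1,2$ telescopes to
\[
\sum_{m=0}^{2}\|S_m\bx-S_m\bx'\|^2\le\|\bx-\bx'\|_2^2-\|U_3\bx-U_3\bx'\|^2\le\|\bx-\bx'\|_2^2 .
\]
The left-hand side equals $\|S_J\bx-S_J\bx'\|_2^2$ for the concatenated representation, which proves the lemma.

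\textbf{Main obstacle.} The hard step is Step~1's reliance on the frame inequality rather than on individual filter norms. I would state this Littlewood--Paley condition explicitly as the standing filterbank assumption, as in \citet{mallat2012group}. For the discrete Kymatio implementation, I would note that the filters are normalized so that it holds, up to a constant that can be absorbed by rescaling.
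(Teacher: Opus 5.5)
Your proof is correct, and it differs from the paper's at the step that carries the weight. The paper argues path by path. Young's inequality (with $\|\psi_\lambda\|_1\le 1$, $\|\phi\|_1=1$) and the reverse triangle inequality make each individual coefficient map $\bx\mapsto U[\lambda_m]\cdots U[\lambda_1]\bx\star\phi_J$ 1-Lipschitz. The paper then invokes Mallat's energy-conservation theorem to assert that the squared contributions of all paths and orders add up to at most $\|\bx-\bx'\|_2^2$. As you observe, the per-path bounds alone cannot give this, since summing them loses a factor equal to the number of paths. So in the paper the concatenation step rests entirely on the citation.

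You prove that step instead. The Littlewood--Paley bound $|\hat\phi_J|^2+\sum_\lambda|\hat\psi_\lambda|^2\le 1$ gives the joint one-layer inequality via Plancherel. The pointwise inequality $\bigl||a|-|b|\bigr|\le|a-b|$ transfers it to the modulus branch. Summing over paths and telescoping over $m=0,1,2$ then closes the argument, with slack $\|U_3\bx-U_3\bx'\|_2^2$ left over. Your route is self-contained and makes the needed hypothesis explicit. This matters: the per-filter $L^1$ normalizations the paper lists do not by themselves imply the lemma, because heavily overlapping filters, each of unit $L^1$ norm, can make $\sum_\lambda\|f\star\psi_\lambda\|_2^2$ far exceed $\|f\|_2^2$. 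The paper's route buys only brevity.

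One correction to your closing remark. If a discrete filterbank satisfies the Littlewood--Paley sum only with a bound $A>1$, the constant compounds across layers, reaching $A^{3/2}$ through order $2$. Rescaling the filters to fix this changes $S_J$ itself, because the modulus is positively homogeneous and so order-$m$ coefficients scale differently. Hence the lemma with constant exactly $1$ holds only for a filterbank normalized to satisfy the bound with $1$; the discrepancy is not simply absorbed.
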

\begin{proof}
Each scattering atom decomposes as
$S_m\bx(\lambda_1,\dots,\lambda_m)
= U[\lambda_m] \circ \cdots \circ U[\lambda_1]\bx \star \phi_J$.
The convolution $\bx\mapsto\bx\star\psi_\lambda$ is non-expansive in $L^2$
because $\|\psi_\lambda\|_1\leq 1$ (Young's convolution inequality), and
the pointwise modulus $\bz\mapsto|\bz|$ is non-expansive by the reverse
triangle inequality $||a|-|b||\leq|a-b|$.  The lowpass averaging
$\bx\mapsto\bx\star\phi_J$ is non-expansive because $\|\phi\|_1=1$.
Composing non-expansive operators yields a non-expansive operator, so each
$S_m$ is non-expansive.  Since the joint scattering map
$S_J = (S_0,S_1,S_2)$ is the concatenation of these maps, its squared
$\ell^2$ output is bounded by the sum of squared $L^2$ outputs of the
$S_m$, each $\leq\|\bx-\bx'\|_2^2$; the energy-conservation property of the
filterbank \citep[Theorem 2.6]{mallat2012group} ensures these
contributions sum to $\leq\|\bx-\bx'\|_2^2$.
\end{proof}

\begin{lemma}[Approximate translation invariance {\citep{mallat2012group}}]
\label{lem:wst_translation}
Let $\bx_v(\bu) = \bx(\bu - v)$.  There is a constant $C_1$ depending only
on $\psi$ and $\phi$ such that
$\|S_J\bx_v - S_J\bx\|_2 \leq C_1\,(|v|/2^J)\,\|\bx\|_2$.
\end{lemma}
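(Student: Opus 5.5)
The plan is to reduce the statement to a Lipschitz bound for the low-pass averaging operator, and then propagate that bound through the scattering cascade. The key structural fact is that every propagator commutes with translations. Write $T_v\bx := \bx_v$. Convolution commutes with $T_v$, since $(T_v\bx)\star\psi_\lambda = T_v(\bx\star\psi_\lambda)$, and the pointwise modulus commutes with $T_v$ trivially. Hence for every path $p=(\lambda_1,\dots,\lambda_m)$ with $m\le 2$ we have $U[p]\bx_v = T_v U[p]\bx$. This gives
\[
S_J\bx_v(p) - S_J\bx(p) \;=\; (T_v - I)\bigl(U[p]\bx\bigr)\star\phi_J \;=\; U[p]\bx \star (T_v\phi_J - \phi_J).
\]
The whole difference therefore sits in the translated low-pass filter, and the problem becomes one of controlling $\|T_v\phi_J-\phi_J\|_1$ and the size of the propagated signals.

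The first step is to bound the filter difference. By the fundamental theorem of calculus along the segment from $0$ to $v$, we have $\|T_v\phi_J - \phi_J\|_1 \le |v|\,\|\nabla\phi_J\|_1$. The dilation $\phi_J(\bu)=2^{-JD}\phi(2^{-J}\bu)$ gives $\|\nabla\phi_J\|_1 = 2^{-J}\|\nabla\phi\|_1$. Young's inequality then yields, for each path,
\[
\|S_J\bx_v(p)-S_J\bx(p)\|_2 \;\le\; \frac{|v|}{2^J}\,\|\nabla\phi\|_1\,\|U[p]\bx\|_2 .
\]

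The second step is to sum over paths, and this is where I expect the main obstacle. The per-path bounds must be aggregated over all paths of order $0,1,2$ without picking up a factor equal to the number of paths. The naive estimate $\|U[p]\bx\|_2\le\|\bx\|_2$ for each path loses a factor of roughly $(JL)^2$. To avoid this, I would use the frame and energy-conservation property of the filterbank already invoked in Lemma~\ref{lem:wst_contraction}, namely \citep[Theorem 2.6]{mallat2012group}. That property gives $\sum_{\lambda}\|\bx\star\psi_\lambda\|_2^2 \le \|\bx\|_2^2$, and iterating it gives
\[
\sum_{|p|\le 2}\|U[p]\bx\|_2^2 \;\le\; 3\|\bx\|_2^2 .
\]
Here each order contributes at most $\|\bx\|_2^2$, because the modulus preserves the $L^2$ norm. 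Squaring the per-path bounds and summing then produces
\[
\|S_J\bx_v - S_J\bx\|_2 \;\le\; \sqrt{3}\,\|\nabla\phi\|_1\,\frac{|v|}{2^J}\,\|\bx\|_2 .
\]

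This gives the claim with $C_1=\sqrt{3}\,\|\nabla\phi\|_1$, which depends only on $\phi$, and on $\psi$ through the frame normalization. If the frame bound in \citep{mallat2012group} is stated with an upper constant $A\neq1$, that constant is simply absorbed into $C_1$. In that case I would insert the Littlewood--Paley bound $\sum_\lambda|\hat\psi_\lambda(\omega)|^2\le A$ explicitly at each order. This is consistent with the convention stated at the start of the appendix that constants depending on $\psi,\phi$ are collected into $C_1$.
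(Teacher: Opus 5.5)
Your reduction matches the paper's. You commute $T_v$ through every propagator, move the shift onto $\phi_J$, and combine $\|T_v\phi_J-\phi_J\|_1\le|v|\,2^{-J}\|\nabla\phi\|_1$ with Young's inequality path by path. The two routes differ only in how the per-path bounds are summed, and there yours is the better one.

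The paper bounds each $\|U[p]\bx\|_2\le\|\bx\|_2$ separately by non-expansiveness. It then absorbs the resulting factor $\bigl(\#\{p:|p|\le 2\}\bigr)^{1/2}$, which is of order $JL$, into $C_1$ (``an absolute constant $C$ depending on the number of atoms''). So the constant it actually produces depends on $J$ and $L$. That conflicts with the claim that $C_1$ depends only on $\psi,\phi$, and it partly offsets the $2^{-J}$ gain.

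Your Littlewood--Paley argument, $\sum_{|p|=m+1}\|U[p]\bx\|_2^2\le\sum_{|p|=m}\|U[p]\bx\|_2^2$, gives $\sum_{|p|\le2}\|U[p]\bx\|_2^2\le3\|\bx\|_2^2$ and hence $C_1=\sqrt3\,\|\nabla\phi\|_1$. This proves the lemma as stated. The price is that you need the upper frame bound of the whole filter bank, rather than only a per-filter bound $\|\psi_\lambda\|_1\le1$.

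One caution on your closing remark about absorbing $A$. The frame constant $A$ is independent of $J$ only for $L^1$-normalised dilations $\psi_\lambda(\bx)=2^{-jD}\psi(2^{-j}r_\vartheta^{-1}\bx)$, as in Mallat's setting. The paper's definition of $\psi_\lambda$ uses the $2^{-jD/2}$ normalisation instead. Under that normalisation the Littlewood--Paley sum over $j<J$ grows like $2^{(J-1)D}$, and absorbing it into $C_1$ would wipe out the $2^{-J}$ decay. The paper's non-expansiveness lemma tacitly needs the $L^1$ convention as well. State that convention explicitly; your constant is then $\sqrt{1+A+A^2}\,\|\nabla\phi\|_1$, with $A$ a property of the filter bank alone.
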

\begin{proof}
For any first-order coefficient,
$S_1\bx_v(\lambda) - S_1\bx(\lambda)
= (U[\lambda]\bx_v - U[\lambda]\bx)\star\phi_J$.
Since the modulus operator commutes with translations,
$U[\lambda]\bx_v(\bu) = U[\lambda]\bx(\bu-v)$.  A first-order Taylor
expansion of the lowpass filter gives
$\|f_v\star\phi_J - f\star\phi_J\|_2 \leq |v|\cdot\|\nabla\phi_J\|_1\cdot\|f\|_2$
where $f = U[\lambda]\bx$, and by the scaling property
$\|\nabla\phi_J\|_1 = 2^{-J}\|\nabla\phi\|_1$.  The same argument applies to
$S_0$ and $S_2$ (with $|\Lambda|^{1/2}$ factors absorbed into the constant
when summing over scale-orientation pairs).  Combining and using
non-expansiveness of $U[\lambda]$ to bound $\|U[\lambda]\bx\|_2 \leq \|\bx\|_2$
yields $\|S_J\bx_v - S_J\bx\|_2 \leq C_1(|v|/2^J)\|\bx\|_2$ with
$C_1 = C\,\|\nabla\phi\|_1$ for an absolute constant $C$ depending on the
number of atoms.
\end{proof}

\begin{lemma}[Lipschitz stability to diffeomorphisms {\citep{mallat2012group}}]
\label{lem:wst_diffeomorphism}
Let $\tau:\R^D\to\R^D$ be a $C^2$ diffeomorphism with
$\|\nabla\tau\|_\infty\leq\epsilon_{\mathrm{dom}}$, and let
$\bx_\tau(\bu)=\bx(\bu-\tau(\bu))$.  Then
$\|S_J\bx_\tau - S_J\bx\|_2 \leq C_2\,\|\nabla\tau\|_\infty\,\|\bx\|_2$
with $C_2 = 2\|\nabla\psi\|_1$.
\end{lemma}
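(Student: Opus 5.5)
The plan is to follow the template of Lemma~\ref{lem:wst_translation}: bound the deformation error of the first layer (the wavelet convolutions), then propagate it through the modulus and the lowpass averaging using non-expansiveness, Lemma~\ref{lem:wst_contraction}. Write $L_\tau\bx \coloneqq \bx_\tau$. For each atom $S_m\bx(\lambda_1,\dots,\lambda_m)$ there are two possible orders of operations: deform then filter, or filter then deform. The difference $S_J L_\tau \bx - S_J\bx$ therefore splits into a commutator contribution $[W_j, L_\tau]\bx$ at each scale, where $W_j\bx = \bx\star\psi_j$, plus a residual. The residual is controlled because the lowpass $\phi_J$ is only weakly sensitive to deformations.

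\textbf{Step 1 (single-scale commutator).} Fix a scale $j$ and write
\[
(\bx_\tau\star\psi_j)(\bu)-(\bx\star\psi_j)(\bu)=\int \bx(\bu')\bigl[\psi_j(\bu-\bu'+\tau(\bu'))\,\lvert\det(\bI-\nabla\tau)\rvert^{-1}\ldots-\psi_j(\bu-\bu')\bigr]\,d\bu'
\]
after a change of variables. A first-order Taylor expansion of $\psi_j$, together with the Jacobian expansion $\lvert\det(\bI-\nabla\tau)\rvert = 1+O(\|\nabla\tau\|_\infty)$, gives a kernel difference bounded by $\|\nabla\tau\|_\infty$ times a multiple of $\lvert\nabla\psi_j\rvert$. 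Young's inequality then yields
\[
\|\bx_\tau\star\psi_j-\bx\star\psi_j\|_2\le\|\nabla\tau\|_\infty\,\|\nabla\psi_j\|_1\,\|\bx\|_2 .
\]
This is exactly the first-order bound of \citet[Lemma 2.10]{mallat2012group}, which I would cite rather than rederive.

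\textbf{Step 2 (sum over scales).} Apply the scaling identity $\|\nabla\psi_j\|_1 = 2^{-j}\|\nabla\psi\|_1$ and sum the geometric series $\sum_{j=0}^{J-1}2^{-j}\le 2$. This produces the constant $C_2 = 2\|\nabla\psi\|_1$. Rotations $r_\vartheta$ preserve $\|\nabla\psi_\lambda\|_1$, so orientations do not affect the constant, apart from being absorbed as in (P3).

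\textbf{Step 3 (propagate through higher orders).} For second-order paths, rewrite $U[\lambda_1,\lambda_2]\bx_\tau - U[\lambda_1,\lambda_2]\bx$ by telescoping. First-layer errors are pushed through the contractive operators $\lvert\cdot\rvert$, $\star\psi_{\lambda_2}$ and $\star\phi_J$, each of which is non-expansive by Lemma~\ref{lem:wst_contraction}. The deformation applied at the second layer acts on $U[\lambda_1]\bx$, whose norm is at most $\|\bx\|_2$, so Step 1 applies again. Finally, the modulus commutes with $L_\tau$ pointwise, $\lvert\bx_\tau\rvert = \lvert\bx\rvert_\tau$, so no additional error appears there.

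\textbf{Main obstacle.} The hard part is Step 3, namely keeping the constant from growing with the number of paths and the depth. A naive telescoping over $\sum_{\lambda_1,\lambda_2}$ introduces factors of $\lvert\Lambda\rvert$. My first attempt would be to use the frame/energy-conservation property \citep[Theorem 2.6]{mallat2012group}: sum the squared commutator errors over scales into a single operator-norm bound on $\|[W,L_\tau]\|$. Any remaining path-count factors would be absorbed into $C_2$, in the same way the paper does for $C_1$ in Lemma~\ref{lem:wst_translation}. The lowpass residual $\|L_\tau\phi_J - \phi_J\|$ would be handled the same way as in that lemma, using $\|\nabla\tau\|_\infty$ in place of $\lvert v\rvert/2^J$.
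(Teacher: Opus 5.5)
Your outline follows the paper's own proof step for step: the per-scale Taylor bound attributed to Mallat, the identity $\|\nabla\psi_j\|_1=2^{-j}\|\nabla\psi\|_1$ summed as a geometric series to get $C_2=2\|\nabla\psi\|_1$, then non-expansiveness for higher orders. It also inherits that proof's gap: the Step~1 inequality $\|\bx_\tau\star\psi_j-\bx\star\psi_j\|_2\le\|\nabla\tau\|_\infty\|\nabla\psi_j\|_1\|\bx\|_2$ is false. For a pure translation $\tau\equiv c$ the right side vanishes. The left side is $\|(\bx\star\psi_j)(\cdot-c)-\bx\star\psi_j\|_2$, which is nonzero whenever $c\neq0$ and $\bx\star\psi_j\neq0$.

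Your own change of variables shows what actually comes out. Expanding $\psi_j(\bu-\bu'+\tau(\bu'))$ around $\bu-\bu'$ produces $\tau(\bu')\cdot\nabla\psi_j(\bu-\bu')$. So the kernel difference is controlled by $\|\tau\|_\infty|\nabla\psi_j|$, plus a Jacobian term of size $\|\nabla\tau\|_\infty|\psi_j|$ that does not decay in $j$. The factor $2^{-j}$ in $\|\nabla\psi_j\|_1$ is an inverse length, so only a length such as $\|\tau\|_\infty$ can compensate it, never the dimensionless $\|\nabla\tau\|_\infty$. What $\|\nabla\tau\|_\infty$ does control is the commutator $[W_j,L_\tau]\bx=W_jL_\tau\bx-L_\tau W_j\bx$. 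You name it in your first paragraph but never estimate it; Step~1 bounds $W_jL_\tau\bx-W_j\bx$ instead.

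Doing the commutator argument properly, which is Mallat's route, does not rescue the statement. For $L^1$-normalized wavelets, the single-scale commutator bound is the same at every scale, of order $\|\nabla\tau\|_\infty\bigl(\int|s|\,|\nabla\psi(s)|\,ds+\|\psi\|_1\bigr)$, with no factor $2^{-j}$. So there is no geometric series, and the constant grows with the number of wavelets; with infinitely many fine scales, Mallat needs a logarithmic factor and a Hessian term $\|H\tau\|_\infty$. Even granting your Step~1, each of the $L$ orientations contributes the same amount, which multiplies the constant rather than leaving it unchanged.

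The more serious problem is the lowpass residual. The $L_\tau$ you commute to the outside lands on $\phi_J$ and leaves $\|L_\tau(f\star\phi_J)-f\star\phi_J\|_2$. This is of order $2^{-J}\|\tau\|_\infty\|f\|_2$ and is not controlled by $\|\nabla\tau\|_\infty$ at all. For $\tau\equiv v$ it is exactly the translation error of Lemma~\ref{lem:wst_translation}. Your last sentence, which substitutes $\|\nabla\tau\|_\infty$ for $|v|/2^J$, is therefore where the argument breaks for good.

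In fact the lemma is false as stated. Take the diffeomorphisms $\tau(\bu)=\epsilon\bu+c$ with $\epsilon\to0$. The right side tends to $0$, while the left side tends to $\|S_J\bx_c-S_J\bx\|_2$, which is positive whenever $c\neq0$ and $\bx\star\phi_J\neq0$. The provable statement is Mallat's, with an unspecified constant $C$ and the restriction $\|\nabla\tau\|_\infty\le1/2$. There $C_2\|\nabla\tau\|_\infty$ is replaced by $C\bigl(2^{-J}\|\tau\|_\infty+\|\nabla\tau\|_\infty\max\bigl(\log(\Delta\tau/\|\nabla\tau\|_\infty),1\bigr)+\|H\tau\|_\infty\bigr)$, where $\Delta\tau=\sup_{\bu,\bu'}|\tau(\bu)-\tau(\bu')|$.
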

\begin{proof}
For each scale $j$, the difference $\bx_\tau\star\psi_j - \bx\star\psi_j$
admits a first-order Taylor bound
\citep[Lemma 2.10]{mallat2012group}:
$\|\bx_\tau\star\psi_j - \bx\star\psi_j\|_2 \leq
\|\nabla\tau\|_\infty\cdot\|\bx\|_2\cdot\|\nabla\psi_j\|_1$.  By the scaling
property, $\|\nabla\psi_j\|_1 = 2^{-j}\|\nabla\psi\|_1$, so summing
over $j=0,\dots,J-1$ gives the geometric series
$\sum_{j=0}^{J-1}2^{-j}\leq 2$.  Higher-order coefficients contribute
similarly via the non-expansiveness of $U[\lambda]$.  The bound
$C_2 = 2\|\nabla\psi\|_1$ follows.
\end{proof}

\subsection{Embedding Stability (Theorem \ref{thm:embed_stability})}
\label{app:theory:embed}

\begin{theorem}{Embedding stability}{embed_stability_app}
Under Assumption \ref{ass:data}, with probability at least $1-\delta$,
\begin{equation*}
    \|\bPhi_k - \bPhi_k^*\|_2
    \;\leq\;
    \underbrace{\epsilon_{\mathrm{proto}}(\delta)}_{\text{sample noise}}
    \;+\;
    \underbrace{\Bigl(C_2\,\epsilon_{\mathrm{dom}} + C_1\,\frac{v_{\max}}{2^J}\Bigr)\|\bx_k^*\|_2}_{\text{domain shift}}
    \;\eqqcolon\; \epsilon_{\mathrm{embed}}.
\end{equation*}
\end{theorem}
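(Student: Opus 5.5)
The bound follows from a triangle-inequality decomposition that separates the two error sources and then invokes Lemma~\ref{lem:prototype_error} for one and Lemmas~\ref{lem:wst_contraction}--\ref{lem:wst_diffeomorphism} for the other. Write $\bx_k^{\mathrm{sh}}$ for the population prototype of the \emph{shifted} data, i.e.\ the prototype \eqref{eq:method:prototype} computed from the population mean and covariance of $\bx_i=\bu_i(\cdot-v-\tau(\cdot))$. Let $\bx_k^*$ be the unshifted population prototype of $\bP_k^*$. The plan is to model the composite deformation as a translation followed by a diffeomorphism acting on $\bx_k^*$, so that
\begin{equation*}
    \|\bPhi_k-\bPhi_k^*\|_2
    \;\leq\;
    \|S_J\hat{\bx}_k - S_J\bx_k^{\mathrm{sh}}\|_2
    \;+\;
    \|S_J\bx_k^{\mathrm{sh}} - S_J\bx_k^*\|_2 .
\end{equation*}

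For the first term, Lemma~\ref{lem:wst_contraction} (non-expansiveness) reduces it to $\|\hat{\bx}_k-\bx_k^{\mathrm{sh}}\|_2$. The shifted samples are still i.i.d.\ sub-Gaussian with the same proxy: translation and a near-identity warp act as bounded linear maps on the sampled signals, and we absorb their operator norm, which is $1+O(\epsilon_{\mathrm{dom}})$, into constants. The shifted covariance also keeps its eigengap, up to a constant. Lemma~\ref{lem:prototype_error} then applies directly on an event of probability at least $1-\delta$. This gives $\epsilon_{\mathrm{proto}}(\delta)$, which matches the $O(\sigma^2\Delta\lambda^{-1}\sqrt{(D+\log(1/\delta))/N_k})$ form of $\epsilon_{\mathrm{sample}}$ after folding $M$, the mean term and the $\log 3$ into $C$.

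For the second term, the argument is deterministic. Insert the intermediate signal $(\bx_k^*)_v$ and apply the triangle inequality once more. Lemma~\ref{lem:wst_translation} bounds $\|S_J(\bx_k^*)_v-S_J\bx_k^*\|_2\le C_1(v_{\max}/2^J)\|\bx_k^*\|_2$. Lemma~\ref{lem:wst_diffeomorphism}, applied to $(\bx_k^*)_v$ and using $\|(\bx_k^*)_v\|_2=\|\bx_k^*\|_2$, gives $C_2\epsilon_{\mathrm{dom}}\|\bx_k^*\|_2$. Adding the two pieces yields $\epsilon_{\mathrm{shift}}$.

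The main obstacle is justifying that $\bx_k^{\mathrm{sh}}$ equals the deformed population prototype $\bx_k^*(\cdot-v-\tau(\cdot))$. This is the step that lets the second term be handled by Lemmas~\ref{lem:wst_translation}--\ref{lem:wst_diffeomorphism}.
\begin{itemize}
    \item \emph{Mean.} The mean transforms exactly under the linear warp operator $W$.
    \item \emph{Translation.} A pure translation is unitary, so it conjugates the covariance and preserves eigenvalues and the weights. Eigenvectors map to translated eigenvectors, and the prototype commutes with the shift exactly.
    \item \emph{Diffeomorphism.} Here $W$ is only approximately unitary, with $\|W^\top W-\bI\|_{\mathrm{op}}=O(\epsilon_{\mathrm{dom}})$. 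The eigenvectors of $W\bSigma_k^*W^\top$ therefore differ from $W\bv_j^*$ by an error of size $O(\epsilon_{\mathrm{dom}}\lambda_1/\Delta\lambda)$, by a second Davis--Kahan application.
\end{itemize}
I would first try to absorb this residual into the $C_2\epsilon_{\mathrm{dom}}\|\bx_k^*\|_2$ term by enlarging constants. If that fails, I would state the result with $\bx_k^*$ interpreted as the population prototype of the observed, deformed distribution, in line with how Assumption~\ref{ass:vae}(i) is posed. The final step is a union bound, which is trivial since only the sampling step is random.
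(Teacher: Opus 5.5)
Your proof follows the paper's route. Both use a triangle inequality through a deformed intermediate, bound the sampling term by non-expansiveness plus Lemma~\ref{lem:prototype_error}, and bound the shift term by Lemmas~\ref{lem:wst_translation}--\ref{lem:wst_diffeomorphism}. Your factorization $\tilde{\bx}_k^*=((\bx_k^*)_v)_\tau$, with $\|(\bx_k^*)_v\|_2=\|\bx_k^*\|_2$, is cleaner than the paper's one-line appeal. The one real difference is the intermediate. The paper uses $\tilde{\bx}_k^*(\bu)=\bx_k^*(\bu-v-\tau(\bu))$ and declares $\hat{\bx}_k$ to be ``by construction'' an empirical estimate of it. It then applies Lemma~\ref{lem:prototype_error} directly to $\|\hat{\bx}_k-\tilde{\bx}_k^*\|_2$, so the commutation of the prototype map with the deformation is assumed as part of the model. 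You instead use the deformed population's own prototype $\bx_k^{\mathrm{sh}}$ and try to prove $\bx_k^{\mathrm{sh}}=\tilde{\bx}_k^*$.

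That extra step is where your proposal does not close. Your diagnosis is right: for $\tau\neq 0$ the warp is not unitary, and the prototype does not commute with it. Your first remedy fails, though. The residual $\|\bx_k^{\mathrm{sh}}-\tilde{\bx}_k^*\|_2$ is $\epsilon_{\mathrm{dom}}$ times spectral factors such as $\lambda_1/\Delta\lambda$ and $\sqrt{M}$. Moreover, a single gap at rank $M$ controls only the top-$M$ subspace, not the individually weighted $\bv_j$. The residual neither decays in $N_k$ nor scales with $\|\bx_k^*\|_2$. It therefore cannot be absorbed into $C_2\,\epsilon_{\mathrm{dom}}\|\bx_k^*\|_2$, where $C_2=2\|\nabla\psi\|_1$ is a wavelet-only constant, nor into $\epsilon_{\mathrm{proto}}(\delta)$.

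Likewise, your claim that the deformed covariance ``keeps its eigengap up to a constant'' needs a hypothesis absent from Assumption~\ref{ass:data}, for example $\epsilon_{\mathrm{dom}}\lambda_1(\bSigma_k^*)\lesssim\Delta\lambda$. Near-unitarity $\|W^\top W-\bI\|_{\mathrm{op}}=O(\epsilon_{\mathrm{dom}})$ only guarantees eigenvalues move by $O(\epsilon_{\mathrm{dom}}\lambda_1)$.

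Your fallback, redefining $\bx_k^*$ as the deformed prototype, makes the second term identically zero and the shift term vacuous. That contradicts the term's role as the undeformed target and pushes all domain shift into Assumption~\ref{ass:vae}(i). You have two options. Adopt the paper's convention that the sampling lemma controls $\hat{\bx}_k$ around $\tilde{\bx}_k^*$, after which your proof is the paper's. Or state the bound with the extra additive term $\|\bx_k^{\mathrm{sh}}-\tilde{\bx}_k^*\|_2$, which vanishes when $\tau=0$ given a consistent choice of eigenvectors.
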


\begin{proof}
Introduce an intermediate quantity $\tilde{\bx}_k^*$ defined as the
\emph{undeformed} prototype $\bx_k^*$ subjected to the same translation
and diffeomorphism as the observed data:
$\tilde{\bx}_k^*(\bu) = \bx_k^*(\bu - v - \tau(\bu))$.  Note that
$\hat{\bx}_k$ is, by construction, an empirical estimate of
$\tilde{\bx}_k^*$ (the data we observe \emph{is} the deformed
distribution); $\bx_k^*$ is the undeformed quantity that we wish to recover.

The triangle inequality gives
\begin{equation*}
    \|\bPhi_k - \bPhi_k^*\|_2
    \;\leq\;
    \underbrace{\|S_J\hat{\bx}_k - S_J\tilde{\bx}_k^*\|_2}_{(\text{a})}
    \;+\;
    \underbrace{\|S_J\tilde{\bx}_k^* - S_J\bx_k^*\|_2}_{(\text{b})}.
\end{equation*}

\textbf{Term (a):} non-expansiveness (Lemma \ref{lem:wst_contraction})
gives $\text{(a)}\leq\|\hat{\bx}_k - \tilde{\bx}_k^*\|_2$, and
Lemma \ref{lem:prototype_error} bounds the right-hand side by
$\epsilon_{\mathrm{proto}}(\delta)$ with probability $\geq 1-\delta$.

\textbf{Term (b):} write
$\tilde{\bx}_k^*(\bu) = \bx_k^*(\bu-v-\tau(\bu))
= [\bx_k^*]_{v+\tau}(\bu)$, the composition of the translation by $v$ and
the diffeomorphism $\tau$.  By the triangle inequality and
Lemmas \ref{lem:wst_translation}--\ref{lem:wst_diffeomorphism},
\begin{equation*}
    \text{(b)} \;\leq\;
    C_1\frac{|v|}{2^J}\|\bx_k^*\|_2
    + C_2\|\nabla\tau\|_\infty\|\bx_k^*\|_2
    \;\leq\;
    \Bigl(C_1\frac{v_{\max}}{2^J} + C_2\,\epsilon_{\mathrm{dom}}\Bigr)\|\bx_k^*\|_2.
\end{equation*}
Adding (a) and (b) gives the claim.
\end{proof}

\begin{remark}[On the $\|\bx_k^*\|_2$ factor]
\label{rem:app:proto_norm}
The factor $\|\bx_k^*\|_2$ in $\epsilon_{\mathrm{shift}}$ is a property of
the WST stability bounds and reflects the natural scale-invariance: shifts
or warps of a larger-energy signal produce proportionally larger output
changes.  In experiments, $\|\bx_k^*\|_2$ is bounded uniformly by the
maximum norm of the auxiliary dataset, which the server can measure.
\end{remark}

\subsection{Surrogate Fidelity (Theorem \ref{thm:risk_bound})}
\label{app:theory:fidelity}

\begin{theorem}{Surrogate-objective fidelity}{risk_bound_app}
Under Assumptions \ref{ass:data}--\ref{ass:vae}, uniformly over all
$\Theta$,
\begin{equation*}
    \bigl|\calL_{\mathrm{surrogate}}(\Theta) - \calL_{\mathrm{true}}(\Theta)\bigr|
    \;\leq\; 2\,M_F\,L_\phi\,\epsilon_{\mathrm{embed}}.
\end{equation*}
\end{theorem}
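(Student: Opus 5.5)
The plan is to reduce the objective gap to a per-client routing error and then chain Assumption~\ref{ass:vae} with Theorem~\ref{thm:embed_stability}. Both objectives share the same loss values $F_k(\theta_c)$ and differ only in the weights. For any fixed $\Theta$ we can therefore write
\begin{equation*}
    \calL_{\mathrm{surrogate}}(\Theta) - \calL_{\mathrm{true}}(\Theta)
    \;=\; \sum_{k=1}^K p_k \sum_{c=1}^C \bigl(w_{k,c} - \pi_{k,c}^*\bigr) F_k(\theta_c).
\end{equation*}
By the triangle inequality and $|F_k(\theta_c)|\le M_F$ (Assumption~\ref{ass:loss}), the inner sum is bounded by $M_F\|\bw_k-\bpi_k^*\|_1$. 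This bound is uniform in $\Theta$, which is what yields the ``for all $\Theta$'' form of the statement.

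Next we bound each routing error. By Assumption~\ref{ass:vae}(i), $\bpi_k^* = q_{\phi_\star}(\cdot\,|\,\bPhi_k^*)$. By definition \eqref{eq:method:wk}, $\bw_k = q_{\phi_\star}(\cdot\,|\,\bPhi_k)$. The Lipschitz condition (ii) then gives
\begin{equation*}
    \|\bw_k-\bpi_k^*\|_1 \;\le\; 2L_\phi\|\bPhi_k-\bPhi_k^*\|_2 .
\end{equation*}
Theorem~\ref{thm:embed_stability_app} bounds the right-hand side by $2L_\phi\,\epsilon_{\mathrm{embed}}$. Summing with weights $p_k$, which add to one, gives $2M_F L_\phi\,\epsilon_{\mathrm{embed}}$.

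The main obstacle is that Theorem~\ref{thm:embed_stability_app} holds for a single client with probability $1-\delta$, and $\epsilon_{\mathrm{embed}}$ depends on $k$ through $N_k$ and $\|\bx_k^*\|_2$. To state the result cleanly I would make two adjustments. First, take a union bound over the $K$ clients, applying the embedding theorem at level $\delta/K$, so that all $K$ bounds hold simultaneously with probability $1-\delta$. Second, interpret $\epsilon_{\mathrm{embed}}$ as $\max_k \epsilon_{\mathrm{embed},k}$, using $\min_k N_k$ in the sample term and $\max_k\|\bx_k^*\|_2$ in the shift term (cf. Remark~\ref{rem:app:proto_norm}).

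Alternatively, one can keep the weighted average $\sum_k p_k\epsilon_{\mathrm{embed},k}$, which is at most the maximum and gives a slightly sharper bound. In either reading the stated inequality then holds on the high-probability event. The resulting $\log(K/\delta)$ factor can be absorbed into the generic constant $C$ of $\epsilon_{\mathrm{sample}}$, or displayed explicitly.
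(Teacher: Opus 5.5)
Your proof is correct and follows the same chain as the paper's proof: expand the difference, bound each inner sum by $M_F\|\bw_k-\bpi_k^*\|_1$, invoke Assumption~\ref{ass:vae}(i)--(ii) (the paper's Lemma~\ref{lem:tv_bound}), apply Theorem~\ref{thm:embed_stability_app}, and use $\sum_k p_k=1$. Your extra step is a real improvement in rigor: the union bound at level $\delta/K$, together with reading $\epsilon_{\mathrm{embed}}$ as the $\max_k$ (or the $p_k$-average) of the per-client bounds, fixes two things the paper's proof silently drops, namely the $1-\delta$ qualifier and the $k$-dependence of $\epsilon_{\mathrm{embed}}$; you should, however, display the resulting $\log(K/\delta)$ explicitly rather than absorb it into an absolute constant.
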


We first record the bound on the routing error in total variation.

\begin{lemma}[Posterior total-variation bound]
\label{lem:tv_bound}
Under Assumption \ref{ass:vae},
$2\,\TV{\bw_k}{\bpi_k^*} \;\leq\; 2 L_\phi\,\|\bPhi_k-\bPhi_k^*\|_2.$
\end{lemma}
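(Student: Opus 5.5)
The lemma should follow almost directly from Assumption \ref{ass:vae}, once the total-variation distance on the finite simplex $\Delta^{C-1}$ is identified with half the $\ell_1$ distance. The plan has three short steps.

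\textbf{Step 1 (TV as half the $\ell_1$ distance).} For two categorical distributions $\bw,\bpi\in\Delta^{C-1}$, write $A=\{c: w_c>\pi_c\}$. Since both vectors sum to one, the positive and negative parts of $\bw-\bpi$ have equal mass. Hence
\[
\TV{\bw}{\bpi}=\max_{A'\subseteq[C]}\bigl|\bw(A')-\bpi(A')\bigr| = \sum_{c\in A}(w_c-\pi_c)=\tfrac12\|\bw-\bpi\|_1 .
\]
It follows that $2\,\TV{\bw_k}{\bpi_k^*}=\|\bw_k-\bpi_k^*\|_1$.

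\textbf{Step 2 (rewrite both vectors through the encoder).} By the definition \eqref{eq:method:wk}, $\bw_k=q_{\phi_\star}(\cdot\mid\bPhi_k)$. By realizability, Assumption \ref{ass:vae}(i), $\bpi_k^*=q_{\phi_\star}(\cdot\mid\bPhi_k^*)$. Therefore
\[
\|\bw_k-\bpi_k^*\|_1=\bigl\|q_{\phi_\star}(\cdot\mid\bPhi_k)-q_{\phi_\star}(\cdot\mid\bPhi_k^*)\bigr\|_1 .
\]

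\textbf{Step 3 (apply the Lipschitz condition).} Invoking Assumption \ref{ass:vae}(ii) with $\bPhi=\bPhi_k$ and $\bPhi'=\bPhi_k^*$ bounds the right-hand side by $2L_\phi\|\bPhi_k-\bPhi_k^*\|_2$. This is the claim. The statement is deterministic, so no probability is involved here; the high-probability bound only enters afterwards, when Theorem \ref{thm:embed_stability} is used to bound $\|\bPhi_k-\bPhi_k^*\|_2$.

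No step is a real obstacle. The only points needing care are, first, fixing the TV normalization convention so the factor of $2$ matches, and second, checking that Assumption \ref{ass:vae}(i) is stated at the population embedding $\bPhi_k^*=S_J(\bx_k^*)$, which is the same point at which Theorem \ref{thm:embed_stability} measures the error.
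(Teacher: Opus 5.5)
Your proof is correct and follows the paper's argument: the paper likewise rewrites $\|\bw_k-\bpi_k^*\|_1$ using realizability (i), applies the Lipschitz condition (ii) to the pair $(\bPhi_k,\bPhi_k^*)$, and converts via $\TV{P}{Q}=\tfrac12\|P-Q\|_1$. Your Step~1 only makes explicit the identification of total variation with half the $\ell_1$ distance, which the paper simply recalls as a known fact.
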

\begin{proof}
By Assumption \ref{ass:vae}(i), $q_\phi(\,\cdot\,|\,\bPhi_k^*)=\bpi_k^*$.
By Assumption \ref{ass:vae}(ii) applied to the pair
$(\bPhi_k, \bPhi_k^*)$,
\begin{equation*}
    \|\bw_k - \bpi_k^*\|_1
    = \|q_\phi(\,\cdot\,|\,\bPhi_k) - q_\phi(\,\cdot\,|\,\bPhi_k^*)\|_1
    \leq 2 L_\phi\,\|\bPhi_k - \bPhi_k^*\|_2.
\end{equation*}
Recall $\TV{P}{Q}=\tfrac{1}{2}\|P-Q\|_1$.  Multiplying both sides by
$\tfrac{1}{2}$ gives the claim.
\end{proof}

\begin{proof}[Proof of Theorem \ref{thm:risk_bound_app}]
By the definitions of $\calL_{\mathrm{surrogate}}$ and $\calL_{\mathrm{true}}$
(equations \eqref{eq:oracle_obj} and \eqref{eq:surrogate_obj}),
\begin{align*}
    |\calL_{\mathrm{surrogate}}(\Theta) - \calL_{\mathrm{true}}(\Theta)|
    &\;\leq\;
    \sum_{k=1}^{N} p_k
    \biggl|\sum_{c=1}^{C} (w_{k,c} - \pi_{k,c}^*)\,F_k(\theta_c)\biggr| \\
    &\;\leq\;
    M_F \sum_{k=1}^{N} p_k\,\|\bw_k - \bpi_k^*\|_1
    & (|F_k|\leq M_F\text{ from Assumption \ref{ass:loss}}) \\
    &\;\leq\;
    M_F \sum_{k=1}^{N} p_k \cdot 2 L_\phi\,\|\bPhi_k - \bPhi_k^*\|_2
    & (\text{Lemma \ref{lem:tv_bound}}) \\
    &\;\leq\;
    2\,M_F\,L_\phi\,\epsilon_{\mathrm{embed}}\sum_{k=1}^N p_k
    & (\text{Theorem \ref{thm:embed_stability_app}}) \\
    &\;=\;
    2\,M_F\,L_\phi\,\epsilon_{\mathrm{embed}},
\end{align*}
since $\sum_k p_k = 1$.
\end{proof}

\subsection{Convergence (Theorem \ref{thm:convergence})}
\label{app:theory:convergence}

\begin{theorem}{Per-cluster convergence }{convergence_app}
Under Assumption \ref{ass:loss}, with $\eta\leq 1/L$ and unbiased gradient
$\bg_c^{(t)}$ satisfying
$\E\|\bg_c^{(t)}-\nabla\calL_c(\theta_c^{(t)})\|^2\leq\sigma_c^2/\effsize$,
\begin{equation*}
    \frac{1}{T}\sum_{t=0}^{T-1}\E\|\nabla\calL_c(\theta_c^{(t)})\|^2
    \;\leq\;
    \frac{2\bigl(\calL_c(\theta_c^{(0)}) - \calL_c^*\bigr)}{\eta T}
    + \frac{\eta L\,\sigma_c^2}{\effsize}.
\end{equation*}
\end{theorem}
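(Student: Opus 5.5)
This is the standard descent-lemma argument for non-convex SGD, applied to the single cluster objective $\calL_c$ with the iterate $\theta_c^{(t+1)} = \theta_c^{(t)} - \eta\,\bg_c^{(t)}$. Here $\bg_c^{(t)}$ is the normalized weighted aggregate of \eqref{eq:cluster_update}, read as a stochastic gradient with $\eta$ absorbing $\eta_s$ and the local step size. I treat the fixed weights $\{\bw_k\}$ and the participation mass $\effsize$ as deterministic given $\calS_t$. The variance hypothesis is used exactly as stated, conditionally on the past.

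\textbf{Step 1 (one-step descent).} $L$-smoothness of $\calL_c$ (Assumption \ref{ass:loss}) gives
\[
\calL_c(\theta_c^{(t+1)}) \le \calL_c(\theta_c^{(t)}) - \eta\langle\nabla\calL_c(\theta_c^{(t)}),\bg_c^{(t)}\rangle + \tfrac{L\eta^2}{2}\|\bg_c^{(t)}\|^2 .
\]
I take the conditional expectation given $\mathcal{F}_t$, the history up to round $t$. Unbiasedness turns the inner product into $\|\nabla\calL_c\|^2$. The bias--variance split gives $\E\|\bg_c^{(t)}\|^2 \le \|\nabla\calL_c(\theta_c^{(t)})\|^2 + \sigma_c^2/\effsize$.

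\textbf{Step 2 (use of $\eta\le 1/L$).} Since $\eta\le 1/L$, we have $1 - L\eta/2 \ge 1/2$. Hence
\[
\E[\calL_c(\theta_c^{(t+1)})\mid\mathcal{F}_t] \le \calL_c(\theta_c^{(t)}) - \tfrac{\eta}{2}\|\nabla\calL_c(\theta_c^{(t)})\|^2 + \tfrac{L\eta^2\sigma_c^2}{2\effsize}.
\]

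\textbf{Step 3 (telescope).} Take total expectations and sum over $t=0,\dots,T-1$. Bound $-\E\calL_c(\theta_c^{(T)})\le -\calL_c^*$, which requires $\calL_c$ to be bounded below; this follows from $|F_k|\le M_F$. Dividing by $\eta T/2$ gives
\[
\frac{1}{T}\sum_t \E\|\nabla\calL_c\|^2 \le \frac{2(\calL_c(\theta_c^{(0)})-\calL_c^*)}{\eta T} + \frac{\eta L\sigma_c^2}{\effsize}.
\]
If $\effsize$ varies with $t$, I replace it by its minimum over rounds, or by a harmonic-mean average. Stratified sampling keeps it positive.

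\textbf{Step 4 (rate).} Choose $\eta = \min\{1/L,\, c_0/\sqrt{T}\}$ to balance the two terms, which gives the $O(1/\sqrt{T})$ rate. Minimax optimality is then cited from the standard lower bound for non-convex smooth stochastic optimization, namely Arjevani et al.; it is not proved here.

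\textbf{Main obstacle.} The delicate step is not the algebra but justifying that the aggregate of multi-step local SGD deltas, started from the mixed initialization $\sum_c w_{k,c}\theta_c^{(t)}$, is an unbiased gradient of $\calL_c$ at $\theta_c^{(t)}$. Two things break this in general: client drift for $E>1$, and the fact that the mixed initialization is not $\theta_c^{(t)}$. I would handle it the way the statement does, by taking unbiasedness and the $\sigma_c^2/\effsize$ variance as hypotheses. A remark would note that for $E=1$ and hard weights this holds exactly. Otherwise a bias term of order $\eta^2 L^2 E^2$ would appear.
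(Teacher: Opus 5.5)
Your proof is correct and follows the paper's argument essentially line for line. Both use the $L$-smoothness descent inequality, unbiasedness together with the bias--variance split $\E\|\bg_c^{(t)}\|^2 \leq \|\nabla\calL_c(\theta_c^{(t)})\|^2 + \sigma_c^2/\effsize$, the bound $1-\eta L/2\geq 1/2$, and telescoping divided by $\eta T/2$. Your side remarks make explicit points the paper leaves implicit: that $\calL_c^*$ is finite because $|F_k|\leq M_F$, and that a round-dependent $\effsize$ should be replaced by its minimum or harmonic mean. Like the paper, you rightly treat unbiasedness and the $\sigma_c^2/\effsize$ variance as hypotheses rather than deriving them from the aggregation rule.
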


\begin{proof}
By $L$-smoothness (Assumption \ref{ass:loss}) and the descent inequality
applied to $\theta_c^{(t+1)} = \theta_c^{(t)} - \eta\,\bg_c^{(t)}$:
\begin{equation*}
    \calL_c(\theta_c^{(t+1)})
    \leq
    \calL_c(\theta_c^{(t)})
    - \eta\,\langle\nabla\calL_c(\theta_c^{(t)}),\,\bg_c^{(t)}\rangle
    + \frac{\eta^2 L}{2}\,\|\bg_c^{(t)}\|^2.
\end{equation*}
Take expectation conditional on $\theta_c^{(t)}$.  Unbiasedness gives
$\E[\langle\nabla\calL_c,\bg_c^{(t)}\rangle\,|\,\theta_c^{(t)}]
=\|\nabla\calL_c(\theta_c^{(t)})\|^2$.  The second-moment decomposition
$\E\|\bg_c^{(t)}\|^2 = \|\nabla\calL_c\|^2 + \E\|\bg_c^{(t)}-\nabla\calL_c\|^2
\leq \|\nabla\calL_c\|^2 + \sigma_c^2/\effsize$
yields, after rearranging and using $\eta\leq 1/L$ (so $1-\eta L/2\geq 1/2$):
\begin{equation*}
    \frac{\eta}{2}\,\|\nabla\calL_c(\theta_c^{(t)})\|^2
    \leq
    \calL_c(\theta_c^{(t)}) - \E[\calL_c(\theta_c^{(t+1)})\,|\,\theta_c^{(t)}]
    + \frac{\eta^2 L\,\sigma_c^2}{2\,\effsize}.
\end{equation*}
Take total expectation, telescope from $t=0$ to $T-1$, divide by
$\eta T/2$, and lower-bound the telescoped sum by
$\calL_c(\theta_c^{(0)})-\calL_c^*$.
\end{proof}

\subsection{End-to-End Excess Risk (Corollary \ref{cor:end_to_end})}
\label{app:theory:end_to_end}

\begin{corollary}{End-to-end excess risk}{end_to_end_app}
Let $\hat{\Theta}^{(T)}$ be the iterates returned by Algorithm \ref{alg:ripple}
after $T$ rounds, with each $\hat{\theta}_c^{(T)}$ chosen uniformly at
random from $\{\theta_c^{(t)}\}_{t=0}^{T-1}$, and let
$\Theta^*=\arg\min_\Theta \calL_{\mathrm{true}}(\Theta)$.  Under
Assumptions \ref{ass:data}--\ref{ass:vae},
\begin{equation*}
    \E\!\bigl[\calL_{\mathrm{surrogate}}(\hat{\Theta}^{(T)})\bigr]
    - \calL_{\mathrm{true}}(\Theta^*)
    \;\leq\;
    2\,M_F\,L_\phi\,\epsilon_{\mathrm{embed}}
    \;+\;
    \sum_{c=1}^{C} p_c\!
    \left(
        \frac{2\bigl(\calL_c(\theta_c^{(0)}) - \calL_c^*\bigr)}{\eta T}
        + \frac{\eta L\,\sigma_c^2}{\effsize}
    \right),
\end{equation*}
with $p_c = \sum_k p_k\,\pi_{k,c}^*$.
\end{corollary}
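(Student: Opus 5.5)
The plan is to split the excess risk into a routing part and an optimization part, and to use the surrogate-fidelity bound of Theorem~\ref{thm:risk_bound_app} only once. Write
\begin{equation*}
  \E\bigl[\calL_{\mathrm{surrogate}}(\hat{\Theta}^{(T)})\bigr]-\calL_{\mathrm{true}}(\Theta^*)
  = \underbrace{\E\bigl[\calL_{\mathrm{surrogate}}(\hat{\Theta}^{(T)})-\calL_{\mathrm{true}}(\hat{\Theta}^{(T)})\bigr]}_{(\mathrm{I})}
  + \underbrace{\E\bigl[\calL_{\mathrm{true}}(\hat{\Theta}^{(T)})\bigr]-\calL_{\mathrm{true}}(\Theta^*)}_{(\mathrm{II})}.
\end{equation*}
Theorem~\ref{thm:risk_bound_app} holds uniformly over $\Theta$. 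It therefore holds pointwise for every realization of the random output $\hat{\Theta}^{(T)}$, and taking expectations gives $(\mathrm{I})\le 2M_FL_\phi\epsilon_{\mathrm{embed}}$. This is exactly the routing term of the appendix statement. The main-text version carries the constant $4$, which would arise from using the fidelity bound twice, i.e.\ also comparing at $\Theta^*$; the factor-$2$ version avoids that second use.

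For (II), first regroup the oracle objective by cluster. Exchanging sums gives $\calL_{\mathrm{true}}(\Theta)=\sum_k p_k\sum_c\pi^*_{k,c}F_k(\theta_c)=\sum_c p_c\,\calL_c(\theta_c)$, where $\calL_c(\theta):=p_c^{-1}\sum_k p_k\pi^*_{k,c}F_k(\theta)$ and $p_c=\sum_kp_k\pi^*_{k,c}$. The objective is then separable across clusters, so $\calL_{\mathrm{true}}(\Theta^*)=\sum_c p_c\calL_c^*$ and
\begin{equation*}
  (\mathrm{II})=\sum_c p_c\bigl(\E[\calL_c(\hat\theta_c^{(T)})]-\calL_c^*\bigr).
\end{equation*}
Since $\hat\theta_c^{(T)}$ is drawn uniformly from the iterates, $\E\|\nabla\calL_c(\hat\theta_c^{(T)})\|^2$ equals the averaged quantity bounded by Theorem~\ref{thm:convergence_app}. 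Weighting by $p_c$ and summing produces exactly the stated optimization term.

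The main obstacle is that Theorem~\ref{thm:convergence_app} controls the stationarity measure $\E\|\nabla\calL_c\|^2$, not the suboptimality $\E[\calL_c(\hat\theta_c)]-\calL_c^*$ that appears in (II). For non-convex $L$-smooth objectives these are not comparable in general. I would bridge the gap with a Polyak--\L{}ojasiewicz-type inequality $\calL_c(\theta)-\calL_c^*\le\frac{1}{2\mu}\|\nabla\calL_c(\theta)\|^2$, which gives the bound up to a factor $1/(2\mu)$ that could be absorbed into the constants. Failing that, I would read the corollary as stated in the stationarity sense, with the optimization term understood as the $p_c$-weighted gradient-norm bound.

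A second, milder point needs handling: $\effsize$ depends on the round $t$. I would treat it through a uniform lower bound over rounds, guaranteed positive by the stratified sampling of \rippleStrat{}, so that the variance term $\eta L\sigma_c^2/\effsize$ is a single deterministic quantity. With these two points settled, adding the bounds on (I) and (II) gives the claim.
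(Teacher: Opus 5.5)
\textbf{Where the argument breaks.} Your decomposition into (I) and (II), the uniform use of Theorem~\ref{thm:risk_bound_app} for (I), the regrouping $\calL_{\mathrm{true}}(\Theta)=\sum_c p_c\,\calL_c(\theta_c)$, and the appeal to Theorem~\ref{thm:convergence_app} through the uniformly drawn iterate are exactly the steps of the paper's proof. The obstacle you flag is exactly where that proof fails, and neither your argument nor the paper's closes it. The paper asserts $\calL_c(\theta)-\calL_c^*\le\frac{1}{2L}\|\nabla\calL_c(\theta)\|^2$ to bridge the gap. But $L$-smoothness gives the \emph{reverse} inequality, $\calL_c(\theta)-\calL_c^*\ge\frac{1}{2L}\|\nabla\calL_c(\theta)\|^2$. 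An upper bound needs a PL or strong-convexity constant $\mu$ and reads $\frac{1}{2\mu}\|\nabla\calL_c(\theta)\|^2$, which is why Remark~\ref{rem:app:nonconvex} has to concede a missing assumption.

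\textbf{Your PL repair.} It is the right fix, but the factor $1/(2\mu)$ cannot be ``absorbed into the constants'', because the claimed inequality has explicit constants. What you would actually prove is the bound with $\frac{1}{2\mu}$ multiplying the optimization sum, which yields the stated form only if $\mu\ge 1/2$. Your fallback replaces the left-hand side by a stationarity measure, so it proves a different statement. Under Assumptions~\ref{ass:data}--\ref{ass:vae} alone, the corollary as written is not established.

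\textbf{A second unchecked step.} You define $\calL_c$ through the oracle weights $\pi_{k,c}^*$. Theorem~\ref{thm:convergence_app} then applies only if the cluster-$c$ update of Algorithm~\ref{alg:ripple} is an unbiased stochastic gradient of \emph{that} objective at $\theta_c^{(t)}$. It is not, for two reasons.
\begin{itemize}
\item The server aggregates with the estimated weights $w_{k,c}$, so the update tracks the $\bw$-weighted per-cluster objective. Its gradient differs from $\nabla\calL_c$ by a term controlled by the routing error $\sum_k p_k\|\bw_k-\bpi_k^*\|_1$.
\item Each $\Delta\theta_k$ comes from local SGD started at the mixed model $\sum_{c'}w_{k,c'}\theta_{c'}^{(t)}$, not at $\theta_c^{(t)}$.
\end{itemize}
A biased-SGD analysis would add a bias-dependent term to (II). Taking unbiasedness for granted therefore silently ignores the routing error inside the optimization term.

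\textbf{Smaller points.} (I) holds only on the probability-$(1-\delta)$ event, union-bounded over clients, on which $\epsilon_{\mathrm{embed}}$ controls every $\|\bPhi_k-\bPhi_k^*\|_2$. Since the $\bw_k$ are frozen before training you may condition on that event, but the corollary should carry the qualifier. Your handling of the round dependence of $\effsize$ through a uniform positive lower bound is fine, and is a point the paper skips.
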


\begin{proof}
Decompose
\begin{equation*}
    \E[\calL_{\mathrm{surrogate}}(\hat{\Theta}^{(T)})] - \calL_{\mathrm{true}}(\Theta^*)
    =
    \underbrace{\E[\calL_{\mathrm{surrogate}}(\hat{\Theta}^{(T)}) - \calL_{\mathrm{true}}(\hat{\Theta}^{(T)})]}_{(\mathrm{I})}
    + \underbrace{\E[\calL_{\mathrm{true}}(\hat{\Theta}^{(T)}) - \calL_{\mathrm{true}}(\Theta^*)]}_{(\mathrm{II})}.
\end{equation*}

\textbf{(I)} is bounded uniformly in $\Theta$ by
Theorem \ref{thm:risk_bound_app}: $|\mathrm{(I)}|\leq 2 M_F L_\phi\epsilon_{\mathrm{embed}}$.

\textbf{(II)} is the optimisation error.  Since
$\calL_{\mathrm{true}}(\Theta) = \sum_c p_c\,\calL_c(\theta_c)$
with $p_c=\sum_k p_k\pi_{k,c}^*$ (weighted by the marginal cluster mass),
and since random selection of $\hat\theta_c^{(T)}$ from
$\{\theta_c^{(t)}\}$ converts a gradient-norm bound into a function-value
gap (via $L$-smoothness, $\calL_c(\theta)-\calL_c^* \leq \tfrac{1}{2L}\|\nabla\calL_c\|^2$
in the strongly-convex case, and equivalently via Theorem \ref{thm:convergence_app}
for non-convex $L$-smooth $\calL_c$), we obtain
\begin{equation*}
    \mathrm{(II)} \leq \sum_c p_c\!\left(
        \frac{2(\calL_c(\theta_c^{(0)}) - \calL_c^*)}{\eta T}
        + \frac{\eta L\sigma_c^2}{\effsize}
    \right).
\end{equation*}
Adding the two contributions yields the claim.
\end{proof}

\begin{remark}[On the optimisation error in the non-convex regime]
\label{rem:app:nonconvex}
For non-convex $L$-smooth $\calL_c$, Theorem \ref{thm:convergence_app}
bounds the average squared gradient norm; this is the standard surrogate
for stationarity in the non-convex SGD literature
\citep[][Section 4]{ghadimi2013stochastic}.  An additional Polyak-Łojasiewicz
or local convexity assumption near $\Theta^*$ is required to convert the
gradient-norm bound into a true function-value gap.  The decomposition
itself remains valid under either reading.
\end{remark}

\subsection{Bibliographic Note on Constants}
\label{app:theory:constants}

The constants $C_\mu, C_\Sigma, C_{\mathrm{DK}}, C_1, C_2$ appearing in
the bounds above are absolute or wavelet-dependent.  Explicit values can
be tracked through the proofs of \citet{vershynin2018high} for the
sub-Gaussian concentration steps, \citet{yu2015useful} for the Davis-Kahan
transfer, and \citet{mallat2012group,bruna2013invariant} for the WST
stability constants.  In our experiments we treat
$\epsilon_{\mathrm{embed}}$ as a single hyperparameter-free quantity that
the server can monitor empirically by computing
$\|\hat{\bx}_k\|_2$ and the spectral statistics of the auxiliary dataset.

\section{Computational and Communication Complexity}
\label{app:complexity}

We analyse the cost of \ripple{}, distinguishing three phases that are
strictly separated from one another:
\textbf{(i)} the \emph{server-side MSDL pre-training}, executed once before
federation begins;
\textbf{(ii)} the \emph{per-client one-shot embedding}, executed once by
each client before round 1; and
\textbf{(iii)} the \emph{federated training loop}, which matches
\fedavg{} \citep{mcmahan2017communication} exactly in per-round cost.

\subsection*{Server-Side Offline Training }

The MSDL pipeline trains a GMM-VAE (Algorithm \ref{alg:ripple})
over $T_{\mathrm{pre}}$ steps on batches of $B$ synthetic clients.
Each step involves three operations.

\paragraph{1. Synthetic prototype generation.}
At each step, a Dirichlet target
$\mathbf{y}^{(t)}\!\sim\!\mathrm{Dir}(\alpha^{(t)}\mathbf{1}_C)$
is sampled and a sub-batch of $B$ images is drawn from $\calD_{\mathrm{aux}}$
with label proportions matching $\mathbf{y}^{(t)}$.
For each synthetic client, a rank-$M$ spectral prototype is computed via
randomised truncated SVD \citep{halko2011finding} on the $B\!\times\!D$
data matrix at cost
\begin{equation}
    \mathcal{T}_{\mathrm{SVD}} = \mathcal{O}(B\,D\,M),
    \label{eq:app:svd_cost}
\end{equation}
substantially cheaper than the $\mathcal{O}(D^3)$ full eigendecomposition.

\paragraph{2. WST feature extraction.}
The prototype $\hat{\bx}^{(t)} \in \R^D$ is embedded via the
second-order scattering operator $S_J$ (Section \ref{sec:bg:wst}).
Each of the $J\!\cdot\!L$ first-order and
$\binom{J}{2}\!\cdot\!L^2$ second-order wavelet convolutions is implemented
via the FFT at $\mathcal{O}(D\log D)$ per filter, giving
\begin{equation}
    \mathcal{T}_{\mathrm{WST}}
    = \mathcal{O}\!\left(\Bigl(J L + \tfrac{J(J-1)}{2}L^2\Bigr)D\log D\right).
    \label{eq:app:wst_cost}
\end{equation}
For $J\!=\!3$, $L\!=\!8$ this amounts to $24 + 96 = 120$ FFT-convolution
pairs per prototype — modest even for images of resolution $64\!\times\!64$.

\paragraph{3. GMM-VAE forward-backward.}
The encoder is a 4-layer MLP of widths $[P, 1024, 512, 256, 128]$
(expanded for CIFAR-100; see \cref{app:setup}).
One forward-backward pass over a batch of $B$ embeddings
$\bPhi\in\R^{B\times P}$ costs
\begin{equation}
    \mathcal{T}_{\mathrm{VAE}}
    = \mathcal{O}(B\,P\,H_1),
    \label{eq:app:vae_cost}
\end{equation}
where $H_1\!=\!1024$ is the width of the first (dominant) layer.
The cluster head ($128\!\to\!64\!\to\!K$) and the symmetric decoder
add lower-order terms.

\paragraph{Total server-side pre-training cost.}
\begin{equation}
    \mathcal{C}_{\mathrm{server}}
    = T_{\mathrm{pre}}\cdot B\cdot
    \Bigl[
        \underbrace{\mathcal{O}(D M)}_{\text{SVD}}
        +
        \underbrace{\mathcal{O}(J L \cdot D\log D)}_{\text{WST}}
        +
        \underbrace{\mathcal{O}(P H_1)}_{\text{GMM-VAE}}
    \Bigr].
    \label{eq:app:server_cost}
\end{equation}
All three terms grow at most as $D\log D$ in the input dimension, with no
$D^2$ or $D^3$ factor.  With $T_{\mathrm{pre}}\!=\!10{,}000$ steps
($10^3$ epochs $\times$ 10 steps/epoch) and $B\!=\!64$, the total runtime
on a single A40 GPU is ${\approx}25$\,min for CIFAR-10 — a one-time
amortised cost that is negligible relative to 1000 federation rounds.

\subsection*{Client Embedding}

Each real client $k$ executes the embedding pipeline \emph{once}, before
round 1, on its local hardware.

\paragraph{Truncated SVD.}
The centred data matrix $\tilde{\mathbf{X}}_k \in \R^{N_k\times D}$ is
decomposed via randomised SVD \citep{halko2011finding} at cost
$\mathcal{O}(N_k D M)$, avoiding the $\mathcal{O}(D^3)$ full PCA.

\paragraph{WST embedding.}
The spectral prototype $\hat{\bx}_k \in \R^D$ is mapped to
$\bPhi_k = S_J(\hat{\bx}_k) \in \R^P$ at cost
$\mathcal{T}_{\mathrm{WST}}$ (identical to the server-side extraction).

\paragraph{Total client one-shot cost.}
\begin{equation}
    \mathcal{C}_{\mathrm{client}}
    = \underbrace{\mathcal{O}(N_k D M)}_{\text{SVD}}
    + \underbrace{\mathcal{O}(JL\cdot D\log D)}_{\text{WST}}.
    \label{eq:app:client_cost}
\end{equation}
Both terms are computed once and never repeated; thereafter the client
participates in standard local SGD with no additional overhead beyond
the soft model mixing, which costs
$\mathcal{O}(C|\theta|)$ — a factor of $C\!=\!10$ over a plain parameter
copy, applied once per round before local training begins.

\subsection*{Communication Overhead}

\ripple{} introduces a single additional transmission beyond the standard
\fedavg{} protocol: each client sends its embedding
$\bPhi_k\in\R^P$ to the server \emph{once}, before round 1.
The server then runs the frozen VAE encoder
$q_{\phi_\star}(\,\cdot\,|\,\bPhi_k)\in\Delta^{C-1}$
in a single forward pass (no further client-server exchange).
During the subsequent $T$ federation rounds, each participating client
downloads one personalized initialisation and uploads one gradient update
of size $|\theta|$, \emph{identical to \fedavg{}}.

Table \ref{tab:app:comm} quantifies the pre-federation overhead relative
to the total federated training traffic, using the datasets and models
of \cref{sec:experiments}.  In every configuration, the one-shot $\bPhi_k$
transmission represents less than $0.03\%$ of the total per-client
communication budget.

\begin{table}[ht]
\centering
\caption{%
  Per-client communication: one-shot pre-federation overhead vs.\
  total federated training traffic.
  $P$: scattering embedding dimension (second-order WST, $J\!=\!3$, $L\!=\!8$).
  Federated rounds $T\!=\!1000$; participation rate 10\%;
  each client participates ${\approx}100$ rounds on average. All sizes at single precision (4\,bytes/float).
 Total traffic = client rounds $\times |\theta| \times 4$\,bytes
}
\label{tab:app:comm}
\begin{tabular}{@{}lccccc@{}}
\toprule
Dataset & Model & $|\theta|$ & $P$ & $\|\bPhi_k\|_{\mathrm{bytes}}$ & Overhead \\
\midrule
MNIST / CIFAR-10
  & CNN
  & 0.41\,M
  & 10,416
  & 41.7\,KB
  & $<0.026\%$ \\
CIFAR-100
  & ResNet-9
  & 6.6\,M
  & 10,416
  & 41.7\,KB
  & $<0.0016\%$ \\
Office-Home / GLDv2
  & ResNet-18
  & 11.2\,M
  & 41,664
  & 166.7\,KB
  & $<0.009\%$ \\
\bottomrule
\end{tabular}
\end{table}

\paragraph{Comparison with prior CFL.}
\ifca{} \citep{ghosh2020efficient} broadcasts all $C$ cluster models to every
selected client each round, increasing client-side download by a factor of
$C$.  \fesem{} \citep{long2023multi} requires gradient exchange for EM
updates, adding $\mathcal{O}(N|\theta|)$ traffic per clustering step.
\cfl{} \citep{sattler2020clustered} transmits full model updates for
clustering similarity computation throughout the warm-up phase.
\ripple{}'s per-round communication equals \fedavg{} after the initial
one-shot transmission, which is asymptotically negligible.

Let $\mathcal{C}_{\mathrm{FL}}^{(k)} = R_k\,|\theta|$ be the total
federated traffic for client $k$ over $R_k$ participated rounds.  The
fractional overhead introduced by \ripple{} is
\begin{equation}
    \frac{P}{R_k\,|\theta|}
    \;\xrightarrow{\;T\to\infty\;}\; 0
    \quad\text{for any fixed }P\text{ and growing }T.
    \label{eq:app:comm_overhead}
\end{equation}
That is, the routing overhead is $o(1)$ in the number of rounds, and
the amortised per-round cost of \ripple{} is identical to \fedavg{}.

\section{Privacy of \ripple{}}
\label{app:privacy}

\textbf{Raw client data never leaves the device at any stage.}
Each client $k$ computes the spectral prototype $\hat{\bx}_k$ and its
scattering embedding $\bPhi_k = S_J(\hat{\bx}_k) \in \R^P$ entirely on
local hardware, and transmits only $\bPhi_k$ to the server in the
one-shot pre-federation round.  During federated training, each client
transmits gradient updates $\Delta\theta_k$ — standard \fedavg{}
behaviour, compatible with secure aggregation \citep{bonawitz2017practical}.

The embedding $\bPhi_k$ is a privacy-safe aggregate for two complementary
reasons: (1) the spectral prototype is a distributional summary, not a sample, and the WST is non-invertible

The prototype $\hat{\bx}_k = \hat{\bmu}_k + \sum_{j=1}^{M}\hat{\alpha}_{k,j}\hat{\bv}_j$ is the variance-weighted
superposition of the top-$M$ principal directions of the local dataset.
It encodes the dominant modes of variation across all $N_k$ samples,
analogously to transmitting the sample mean or the leading eigenvectors of
the empirical covariance matrix.  Communicating aggregate spectral
statistics is an established practice in privacy-preserving
FL \citep{goswamicovariances}, and carries no greater disclosure risk than
covariance-sharing schemes already accepted in the literature.

The scattering operator $S_J$ applies the pointwise complex modulus at
every propagator step,
irreversibly discarding phase information \citep{mallat2012group}.
Recovering $\hat{\bx}_k$ from $\bPhi_k$ is therefore a
computationally intractable inverse problem: even if an adversary obtained
$\bPhi_k$, they would recover only the (already privacy-safe) prototype, not
any individual sample from $\calD_k$.

Together, these two properties form a two-layer privacy barrier between the
raw local dataset and the transmitted quantity.

The server processes $\bPhi_k$ in a single forward pass of the frozen
inference network $q_{\phi_\star}$, producing the cluster-weight vector
$\bw_k = q_{\phi_\star}(\,\cdot\,|\,\bPhi_k) \in \Delta^{C-1}$.
Once $\bw_k$ is computed, $\bPhi_k$ can be discarded; the server need only
retain the $C$-dimensional probability vector $\bw_k$.
This is a further compression: $\bw_k$ retains no information about the
geometry of $\calD_k$ beyond its soft cluster membership, as confirmed
by the non-invertibility of the composition
$\calD_k \mapsto \hat{\bx}_k \mapsto \bPhi_k \mapsto \bw_k$.

\subsection*{GMM-VAE Training: Auxiliary Data Only}

The GMM-VAE is trained \emph{exclusively} on synthetic embeddings derived
from the server-side auxiliary dataset $\calD_{\mathrm{aux}}$ via the
Dirichlet hallucination procedure of Algorithm \ref{alg:ripple}.
No real client data participates in pre-training, and no client embedding
is used to update model parameters.
This follows the auxiliary-data paradigm of prior FL
methods \citep{sattler2021fedaux,lin2020ensemble,goswamicovariances}.
After the pre-federation phase completes, no client-specific data or
embeddings are stored by the server; only the frozen parameters
$(\theta_\star, \phi_\star)$ and the vectors $\{\bw_k\}_{k=1}^{N}$ remain.

\subsection*{Federation Phase}

During the $T$ communication rounds, \ripple{} follows the standard
\fedavg{} protocol: each selected client downloads a personalised model
initialisation (a convex combination of cluster models) and uploads a
gradient update.
All existing privacy mechanisms compatible with \fedavg{} — including
differentially private local SGD \citep{kairouz2021advances},
secure aggregation \citep{bonawitz2017practical}, and gradient
compression — apply to \ripple{}'s federation loop without modification,
since the per-round client-server interface is identical.

\subsection*{Comparison with Gradient-Based CFL}

Prior CFL methods expose a strictly larger attack surface.
\ifca{} \citep{ghosh2020efficient} and \fesem{} \citep{long2023multi}
require clients to transmit gradient updates across all $C$ cluster models
during the clustering phase, each of which can be partially inverted to
expose training samples \citep{zhang2022fldetector}.
\cfl{} \citep{sattler2020clustered} exposes full model updates during the
warm-up phase for cosine-similarity computation.
By contrast, \ripple{} performs all cluster discovery from a single
non-invertible embedding $\bPhi_k \in \R^P$ transmitted before any
gradient update occurs.

\newpage\section{Implementation Details and Further Experiments}
\label{app:experiments}

This appendix collects the setup that did not fit in the main text:
datasets and architectures (\cref{app:setup}), implementation choices
specific to \ripple{} (\cref{app:ripple_impl}), sensitivity of \ripple{}
to its hyperparameters (\cref{app:wst_sensitivity}), and the
cluster-count ablation for the clustered FL baselines
(\cref{app:ablation_baselines}). All experiments were run on NVIDIA
A100 GPUs. Main paper results are reported as mean $\pm 2\sigma$ over
seeds $\{7,42,123\}$. Sensitivity and ablation tables in this appendix
report mean accuracy; for computationally intensive datasets
(Office-Home, GLDv2), these correspond to the same three-seed runs
used in the main paper, with means reported for brevity.

\subsection{Datasets, architectures, and protocol}
\label{app:setup}

\paragraph{Datasets.}
\Cref{tab:app:datasets} summarizes the five benchmarks used throughout
the paper. Heterogeneity is induced by a Dirichlet partition with
concentration $\alpha$, except for Office-Home, where we sample over
the four visual domains rather than over labels, and GLDv2-23k, which
already provides a natural per-photographer split.

\begin{table}[h]
\centering
\small
\caption{Dataset summary. $K$ is the number of clients, $S$ the number
of samples per client (``natural'' = unmodified GLDv2 photographer
counts), $\alpha$ the Dirichlet concentration, and $r$ the participation
rate per round. $^\dag$\,ImageNet-pretrained backbone.
$^*$\,Dirichlet applied over the four visual domains.}
\label{tab:app:datasets}
\begin{tabular}{@{}lrrcrrcrl@{}}
\toprule
Dataset      & $C$ & Ch. & Res.       & $K$  & $S$       & $\alpha$    & $r$    & Model \\
\midrule
MNIST        & 10  & 1   & $28^2$     & 100  & 200       & $0.10$      & 10\%   & CNN \\
CIFAR-10     & 10  & 3   & $32^2$     & 100  & 200       & $0.10$      & 10\%   & CNN \\
CIFAR-100    & 100 & 3   & $32^2$     & 100  & 200       & $0.01$      & 10\%   & ResNet-9 \\
Office-Home  & 65  & 3   & $224^2$    & 50   & 130       & $0.10^{*}$  & 20\%   & ResNet-18$^\dag$ \\
GLDv2-23k    & 203 & 3   & $64^2$     & 233  & natural   & user        & 4.3\%  & ResNet-18$^\dag$ \\
\bottomrule
\end{tabular}
\end{table}

\paragraph{Architectures.}
The \emph{CNN} (MNIST, CIFAR-10) is a two-layer convolutional network:
$\text{Conv}(16,3)\!\to\!\text{ReLU}\!\to\!\text{Conv}(32,3)\!\to\!\text{ReLU}\!\to\!\text{MaxPool}(2)\!\to\!\text{Dropout}(0.25)\!\to\!\text{Linear}(64)\!\to\!\text{ReLU}\!\to\!\text{Linear}(C)$,
with a \texttt{LazyLinear} first dense layer ($\sim$\,410K parameters
on CIFAR-10). The \emph{ResNet-9} (CIFAR-100) uses six convolutional
blocks with GroupNorm(32) and GELU, two residual connections at 128
and 512 channels, and adaptive max-pooling ($\sim$\,6.6M parameters).
The \emph{ResNet-18} (Office-Home,
GLDv2) \citep{deng2009imagenet} is the standard \texttt{torchvision}
model with ImageNet pretraining; we replace the final head with
$\text{AdaptiveAvgPool}\!\to\!\text{Dropout}(0)\!\to\!\text{Linear}(512,C)$
($\sim$\,11.2M parameters).

\paragraph{Federation protocol.}
Unless stated otherwise, every method is trained for $1000$ rounds
with $M\!=\!10$ clients per round, one local epoch ($E\!=\!1$), client
learning rate $\eta_c\!=\!0.01$, server learning rate $\eta_s\!=\!1.0$,
momentum $0.9$, and cross-entropy loss. Models are evaluated every
$10$ rounds; the maximum cluster count is $C\!=\!10$ for every
clustered method.

\paragraph{MSDL hyperparameters.}
\Cref{tab:app:msdl_hp} lists the values swept during selection
(defaults in bold).

\begin{table}[h]
\centering
\small
\caption{WST, GMM-VAE, and \hdbscan{} hyperparameters. Defaults used
in all main-paper experiments are in \textbf{bold}.}
\label{tab:app:msdl_hp}
\begin{adjustbox}{width = \linewidth}
\begin{tabular}{@{}llp{0.55\linewidth}@{}}
\toprule
Component & Parameter & Values \\
\midrule
\multirow{4}{*}{WST \citep{andreux2020kymatio}}
  & Scales $J$        & $1,\,2,\,\mathbf{3},\,4$ \\
  & Orientations $L$  & $4,\,6,\,\mathbf{8},\,12$ \\
  & SVD rank $M$      & $5,\,10,\,\mathbf{20},\,30,\,50$ \\
  & Order             & $\mathbf{2}$ \\
\midrule
\multirow{6}{*}{GMM-VAE}
  & Latent dim $d_z$                            & $16,\,\mathbf{32},\,64$ \\
  & Components $K$                              & $5,\,\mathbf{10},\,15,\,20,\,30$ \\
  & Encoder widths                              & $\mathbf{[1024,512,256,128]}$;\, $[2048,1024,512,256]$ \\
  & Supervision weight $\lambda_{\mathrm{sup}}$ & $1.0,\,\mathbf{5.0},\,10.0$ \\
  & KL weight $\lambda_{\mathrm{kl}}$           & $\mathbf{0.05}$ \\
  & Dirichlet range                             & $\mathbf{[0.1,\,5.0]}$ \\
\midrule
\multirow{2}{*}{\hdbscan{} \citep{campello2013density}}
  & \texttt{min\_cluster\_size} & $\mathbf{3},\,5$ \\
  & \texttt{min\_samples}       & null ($=$ \texttt{min\_cluster\_size}) \\
\bottomrule
\end{tabular}
\end{adjustbox}
\end{table}

The VAE is pre-trained for 1000 epochs of 10 steps with AdamW
(lr $5{\times}10^{-4}$, weight decay $10^{-4}$).

\paragraph{Baselines.}
We follow the original publications for every baseline.
\ifca{} \citep{ghosh2020efficient} and
\fesem{} \citep{long2023multi} both pick the best
$K\!\in\!\{2,4,6,8,10\}$ (\cref{app:ablation_baselines}).
\cfl{} \citep{sattler2020clustered} uses cosine-similarity splitting
with a 200-round warm-up, similarity threshold $0.5$, and maximum
$C\!=\!10$. \feddyn{} \citep{acar2021federated} uses
$\alpha_{\mathrm{reg}}\!=\!0.01$;
\scaffold{} \citep{karimireddy2020scaffold} uses $\eta_s\!=\!1.0$;
the FedOpt variants \citep{reddi2021adaptive} use
$\beta_1\!=\!0.9,\,\beta_2\!=\!0.99,\,\tau\!=\!10^{-3}$.
\perfedavg{} \citep{fallah2020personalized} runs first-order MAML with
inner step $\alpha_{\mathrm{inner}}\!=\!0.01$, and
\pfedme{} \citep{t2020personalized} uses Moreau envelope parameters
$\lambda\!=\!15.0,\,K_{\mathrm{prox}}\!=\!5$.

\paragraph{Compute.}
All experiments were run on NVIDIA A100 GPUs. The WST\,+\,VAE
pipeline is a one-shot, server-side cost. On CIFAR-10 ($32^2$,
$J\!=\!3$, $L\!=\!8$, $M\!=\!20$), feature extraction takes
$\approx 50$\,ms per client and the VAE pre-trains in
$\approx 25$\,min. The full pre-training cost for $100$ clients is
under $30$\,min, fully amortized over $1000$ federation rounds.

\subsection{\ripple{}: implementation notes}
\label{app:ripple_impl}

\paragraph{Class folding when $C>K$.}
When the number of dataset classes $C$ exceeds the number of VAE
components $K$ (\eg, CIFAR-100 with $C\!=\!100$, $K\!=\!10$), the
per-class target $\mathbf{y}\!\in\!\Delta^{C-1}$ is folded round-robin
onto a per-cluster target by $\mathrm{cluster}(c)\!=\!c\bmod K$ and
$y_{\mathrm{cluster},k}=\sum_{c:\,c\bmod K=k} y_c$. Each cluster
receives $\lceil C/K\rceil$ or $\lfloor C/K\rfloor$ classes and the
supervision signal $\mathbf{y}_{\mathrm{target}}\!\in\!\Delta^{K-1}$
remains well defined when $C\!\gg\!K$.

\paragraph{\texttt{LazyLinear} initialisation.}
The CNN uses \texttt{LazyLinear} for its first dense layer. In a
federated setting, every model instantiation followed by
\texttt{load\_state\_dict} must be preceded by a dummy forward pass
to trigger lazy initialisation; otherwise the load fails with a
shape-mismatch error.

\paragraph{Soft mixing with non-float parameters.}
\label{app:impl_mixing}
The downstream initializer
$\theta_k^{(\mathrm{init})}=\sum_c w_{k,c}\,\theta_c$ applies to all
floating-point parameters; clusters with $w_{k,c}\!<\!10^{-4}$ are
skipped. Integer parameters such as \texttt{num\_batches\_tracked}
(BatchNorm) are copied from the argmax cluster
$c^\star=\arg\max_c w_{k,c}$ instead of being averaged.

\paragraph{\hdbscan{} cluster collapse.}
\label{app:impl_hdbscan}
Given the $N\!\times\!K$ posterior matrix $\{w_{k,c}\}$, we compute
pairwise Jensen--Shannon distances, apply \hdbscan{} with
\texttt{min\_cluster\_size}\,$\in\{3,5\}$, and marginalize the
posterior over the discovered meta-clusters. Noise points (label $-1$) are
re-assigned to the nearest meta-cluster.

\paragraph{New-client evaluation.}
\label{app:impl_newclient}
The ten fresh test clients used in \cref{sec:exp:zeroshot} are sampled
from held-out data via $\mathrm{Dir}(\alpha_{\mathrm{test}})$ with a
disjoint seed to guarantee no overlap with training clients. For each
new client we (i) compute its spectral prototype and scattering
embedding, (ii) run VAE inference to obtain $\bw_{\mathrm{new}}$, and
(iii) report weighted accuracy. Best-by-loss accuracy is additionally
reported for non-\ripple{} baselines.

\subsection{Sensitivity to hyperparameters}
\label{app:wst_sensitivity}

We report the effect of the WST scale-orientation pair $(J,L)$, the
SVD rank $M$, and the VAE component count $K$ on Office-Home and
GLDv2, where the sweep is most informative. MNIST is omitted as every
configuration tested gives accuracy above $98.8\%$ with
inter-configuration variance below $0.2$\,pp. Means are reported
throughout; defaults are in \textbf{bold}.

\begin{table}[h]
\centering
\small
\caption{WST scale $J$ and orientation $L$ sweep (mean test accuracy,
\%). SVD rank fixed at $M\!=\!20$; VAE at the dataset default $K$.}
\label{tab:app:wst_JL_all}
\begin{tabular}{@{}cc ccc@{}}
\toprule
$J$ & $L$ & \rippleStrat{} & \rippleUnif{} & \rippleHDB{} \\
\midrule
\multicolumn{5}{@{}l}{\textit{Office-Home}\quad($K\!=\!10$)} \\
2 & 6 & $73.71$ & $73.28$ & $73.63$ \\
2 & 8 & $73.68$ & $73.72$ & $73.90$ \\
3 & 6 & $73.40$ & $73.20$ & $73.70$ \\
\textbf{3} & \textbf{8} & $\mathbf{74.15}$ & $\mathbf{73.55}$ & $\mathbf{73.94}$ \\
\midrule
\multicolumn{5}{@{}l}{\textit{GLDv2}\quad($K\!=\!20$)} \\
2 & 6 & $46.77$ & $46.46$ & $47.25$ \\
2 & 8 & $47.04$ & $44.70$ & $50.44$ \\
3 & 6 & $46.97$ & $43.11$ & $47.90$ \\
\textbf{3} & \textbf{8} & $\mathbf{51.82}$ & $\mathbf{44.03}$ & $\mathbf{47.75}$ \\
\bottomrule
\end{tabular}
\end{table}

\begin{table}[h]
\centering
\small
\caption{SVD rank $M$ sweep (mean test accuracy, \%). $J\!=\!3$,
$L\!=\!8$ fixed; VAE at the dataset default $K$.}
\label{tab:app:wst_M_all}
\begin{tabular}{@{}c ccc@{}}
\toprule
$M$ & \rippleStrat{} & \rippleUnif{} & \rippleHDB{} \\
\midrule
\multicolumn{4}{@{}l}{\textit{Office-Home}} \\
 5  & $74.75$ & $74.75$ & $74.93$ \\
\textbf{20} & $\mathbf{74.15}$ & $\mathbf{73.55}$ & $\mathbf{73.94}$ \\
\midrule
\multicolumn{4}{@{}l}{\textit{GLDv2}} \\
10  & $50.00$ & $46.90$ & $47.40$ \\
\textbf{20} & $\mathbf{51.82}$ & $\mathbf{44.03}$ & $\mathbf{47.75}$ \\
50  & $48.25$ & $48.49$ & $48.45$ \\
\bottomrule
\end{tabular}
\end{table}

\begin{table}[h]
\centering
\small
\caption{VAE component count $K$ sweep (mean test accuracy, \%). WST
at defaults $(J\!=\!3,\,L\!=\!8,\,M\!=\!20)$, $d_z\!=\!32$,
$\lambda_{\mathrm{sup}}\!=\!5$. Bold rows mark the dataset default.}
\label{tab:app:vae_K_all}
\begin{tabular}{@{}c ccc@{}}
\toprule
$K$ & \rippleStrat{} & \rippleUnif{} & \rippleHDB{} \\
\midrule
\multicolumn{4}{@{}l}{\textit{Office-Home}} \\
\textbf{10} & $\mathbf{74.15}$ & $\mathbf{73.55}$ & $\mathbf{73.94}$ \\
20          & $73.49$          & $73.80$          & $73.68$ \\
\midrule
\multicolumn{4}{@{}l}{\textit{GLDv2}} \\
10          & $46.56$ & $46.28$ & $46.61$ \\
\textbf{20} & $\mathbf{51.82}$ & $\mathbf{44.03}$ & $\mathbf{47.75}$ \\
30          & $47.88$ & $45.70$ & $44.58$ \\
\bottomrule
\end{tabular}
\end{table}

\paragraph{Summary.}
On Office-Home, all three \ripple{} variants stay within $\sim 1$\,pp
of the default for every WST configuration, indicating robustness to
$(J,L,M)$. GLDv2 is more sensitive, but the default
$(J\!=\!3,\,L\!=\!8,\,M\!=\!20,\,K\!=\!20)$ is either the best or
within $1$\,pp of the best across all three variants.

\subsection{Ablation for \ifca{} and \fesem{}}
\label{app:ablation_baselines}

To make the comparison in \cref{tab:main_results} as fair as possible,
we sweep the cluster count $K$ for \ifca{} and \fesem{} from $2$ to
$10$ and report only the best-performing $K$ per dataset.
\Cref{tab:app:baseline_K} gives the full sweep. Means are reported
throughout.

\begin{table}[h]
\centering
\small
\caption{Cluster-count ablation for \ifca{} and \fesem{} (mean test
accuracy, \%). Best $K$ per (method, dataset) shown in \textbf{bold}.}
\label{tab:app:baseline_K}
\begin{tabular}{@{}ll ccccc@{}}
\toprule
Method & Dataset & $K{=}2$ & $K{=}4$ & $K{=}6$ & $K{=}8$ & $K{=}10$ \\
\midrule
\multirow{4}{*}{\ifca{}}
 & CIFAR-10    & $60.49$ & $60.64$ & $\mathbf{60.65}$ & $60.33$ & $59.72$ \\
 & CIFAR-100   & $38.13$ & $37.99$ & $37.97$ & $\mathbf{38.21}$ & $38.20$ \\
 & Office-Home & $74.25$ & $74.27$ & $\mathbf{74.42}$ & $74.27$ & $74.22$ \\
 & GLDv2       & $40.62$ & $43.36$ & $43.64$ & $\mathbf{45.62}$ & $44.24$ \\
\midrule
\multirow{4}{*}{\fesem{}}
 & CIFAR-10    & $60.13$ & $60.25$ & $61.11$ & $\mathbf{61.42}$ & $60.44$ \\
 & CIFAR-100   & $38.33$ & $38.60$ & $39.78$ & $39.26$ & $\mathbf{39.45}$ \\
 & Office-Home & $70.18$ & $71.62$ & $72.47$ & $\mathbf{73.20}$ & $72.83$ \\
 & GLDv2       & $38.93$ & $40.82$ & $41.95$ & $\mathbf{42.86}$ & $42.31$ \\
\bottomrule
\end{tabular}
\end{table}

\ifca{} prefers a moderate $K$ on the small-class benchmarks
($K\!=\!6$ on CIFAR-10 and Office-Home) but benefits from more
clusters on GLDv2 ($K\!=\!8$), consistent with its larger client
population and longer class tail. \fesem{} converges to a similar
pattern on CIFAR-10 ($K\!=\!8$) but prefers $K\!=\!10$ on CIFAR-100.
\newpage

\end{document}